\setlist[description]{leftmargin=\parindent,labelindent=\parindent}
\definecolor{linkcol}{RGB}{134,22,87} 
\definecolor{citecol}{RGB}{22,91,137} 
\definecolor{urlcol}{RGB}{20,88,21}
\pgfplotsset{%
	,compat=1.12
	,every axis x label/.style={at={(current axis.right of origin)},anchor=north west}
	,every axis y label/.style={at={(current axis.above origin)},anchor=north east}
}
\definecolor{scarlet}{rgb}{1.0, 0.13, 0.0}
\definecolor{brightmaroon}{rgb}{0.76, 0.13, 0.28}
\definecolor{mediumturquoise}{rgb}{0.28, 0.82, 0.8}
\definecolor{fandango}{rgb}{0.71, 0.2, 0.54}
\definecolor{antiquewhite}{rgb}{0.98, 0.92, 0.84}
\definecolor{babyblue}{rgb}{0.54, 0.81, 0.94}
\definecolor{brilliantlavender}{rgb}{0.96, 0.73, 1.0}
\definecolor{bronze}{rgb}{0.8, 0.5, 0.2}
\definecolor{cornsilk}{rgb}{1.0, 0.97, 0.86}
\definecolor{lavenderpink}{rgb}{0.98, 0.68, 0.82}
\definecolor{sandybrown}{rgb}{0.96, 0.64, 0.38}
\definecolor{celadon}{rgb}{0.67, 0.88, 0.69}
\theoremstyle{plain}
\newtheorem{theorem}{Theorem}[section]
\newtheorem{lemma}[theorem]{Lemma}
\theoremstyle{definition}
\theoremstyle{remark}
\newtheorem{remark}[theorem]{Remark}
\newcommand{\agg}{\textsf{AGG}\xspace}
\newcommand{\auc}{\textsf{AUROC}\xspace}
\newcommand{\edgerr}{\textsf{EdgeRR}\xspace}
\newcommand{\similarity}{\textsf{sim}\xspace}
\newcommand{\relu}{\textsf{ReLU}\xspace}
\newcommand{\cond}[1]{\kappa(#1)}
\newcommand{\attack}{\textsf{SERA}\xspace}
\newcommand{\nag}{\textsf{NAG}\xspace}
\newcommand{\er}{Erdős–Rényi\xspace}
\newcommand{\fpr}{\textsf{FPR}\xspace}
\newcommand{\fnr}{\textsf{FNR}\xspace}
\newcommand{\err}{\textsf{ERR}\xspace}
\newcommand{\opnorm}[1]{\|#1\|_{\text{op}}}
\newcommand{\PP}[1]{\mathbb{P}\left(#1\right)}
\newcommand{\result}[2]{${#1}_{\pm#2}$}
\newcommand{\highlight}[1]{\textbf{\underline{#1}}}
\newcommand{\renyi}{R\'{e}nyi\xspace}
\newcommand{\fdivergence}[3]{\text{D}_{\text{#1}}\left(#2 \parallel #3\right)}
\title{On provable privacy vulnerabilities of\\ graph representations}
\author[$\dagger$]{Ruofan Wu$^*$}
\author[$\ddagger$]{Guanhua Fang\thanks{Equal contribution}\ \ }
\author[$\dagger$]{\authorcr Mingyang Zhang}
\author[$\S$]{Qiying Pan}
\author[$\dagger$]{Tengfei Liu}
\author[$\dagger$]{Weiqiang Wang}
\affil[$\dagger$]{Ant Group}
\affil[$\ddagger$]{Fudan University}
\affil[$\S$]{Shanghai Jiao Tong University}
\affil[ ]{\footnotesize{\texttt{\{ruofan.wrf, zhangmingyang.zmy, aaron.ltf, weiqiang.wwq\}@antgroup.com}}}
\affil[ ]{\footnotesize{\texttt{fanggh@fudan.edu.cn, sim10\_arity@sjtu.edu.cn}}}
\begin{document}
\maketitle
\begin{abstract}
    Graph representation learning (GRL) is critical for extracting insights from complex network structures, but it also raises security concerns due to potential \emph{privacy} vulnerabilities in these representations. This paper investigates the structural vulnerabilities in graph neural models where sensitive topological information can be inferred through edge reconstruction attacks. Our research primarily addresses the theoretical underpinnings of similarity-based edge reconstruction attacks (\attack), furnishing a non-asymptotic analysis of their reconstruction capacities. Moreover, we present empirical corroboration indicating that such attacks can (almost) perfectly reconstruct sparse graphs as graph size increases. Conversely, we establish that sparsity is a critical factor for \attack's effectiveness, as demonstrated through analysis and experiments on (dense) stochastic block models. Finally, we explore the resilience of private graph representations produced via noisy aggregation (\nag) mechanism against \attack. Through theoretical analysis and empirical assessments, we affirm the mitigation of \attack using \nag. In parallel, we also empirically delineate instances wherein \attack demonstrates both efficacy and deficiency in its capacity to function as an instrument for elucidating the trade-off between privacy and utility.
    \footnote{Code available at \href{https://github.com/Rorschach1989/gnn_privacy_attack}{\texttt{https://github.com/Rorschach1989/gnn\_privacy\_attack}}}
\end{abstract}

\doparttoc 
\faketableofcontents 

\section{Introduction}\label{sec: intro}

With the surging developments of graph representation learning (GRL) \cite{hamilton2017representation}, there has been growing apprehensions concerning the security challenges associated with the deployment of graph neural models in real-world scenarios \cite{dai2022comprehensive}. GRL models harness the topological information of the underlying graph for producing high-quality predictions or graph representations. Meanwhile, these models bear the risk of inadvertently divulging the same topological information through the graph representations they produce. Such kind of security risks have been empirically validated through the examination of the attacking performance of edge reconstruction algorithms \cite{duddu2020quantifying,he2021stealing,wu2022linkteller,zhou2023strengthening}, among which a simple form of attack based solely on the representation similarity of node pairs is shown to achieve strikingly strong performance, without the requirement of additional knowledge like encoder architecture or auxiliary datasets \cite{he2021stealing}. \par
Despite the empirical evidence of topological vulnerabilities of graph representations, theoretical explanations delineating the effectiveness of such attacks remain largely unexplored: As demonstrated in previous studies \cite{duddu2020quantifying,he2021stealing}, similarity-based attacks are remarkably effective against \emph{sparse} graphs that exhibit a generalized homophily pattern, i.e., there exists a significant correlation between the similarity of node features and edge adjacency information. This phenomenon posits that \emph{feature similarity} may serve as a confounding factor, potentially impacting the efficacy of similarity-based attacks. It is therefore valuable to understand the influence of graph properties, such as feature similarity and sparsity, on the edge reconstruction process of the attacking procedures.\par
Beyond their capability in characterizing the vulnerabilities of representations, attacking algorithms may also function as empirical attestations of privacy-preserving inference protocols that fulfill formal privacy guarantees such as differential privacy \cite[Section 4]{cummings2023challenges}. As an illustrative case, membership inference attacks can be employed for auditing differential privacy \cite{tramer2022debugging}. Since edge reconstruction is equivalent to edge membership inference on graphs \cite{zhang2023demystifying}, it is thus pertinent to explore the performance of similarity-based attacks when confronted with privacy-preserving graph representations \cite{sajadmanesh2023gap,wu2023privacy}.\par
In this paper, we take initial steps toward a principled understanding of structural vulnerabilities of graph representations under the \highlight{s}imilarity-based \highlight{e}dge \highlight{r}econstruction \highlight{a}ttack (hereafter abbreviated as \attack) which forms a realistic threat in many practical scenarios such as vertical federated learning \cite{wu2023privacy}. In particular, we establish the following theoretical as well as empirical findings:
\begin{enumerate}[label=(\roman*)]
    \item \textbf{Success modes of \attack} Through applying \attack to sparse random graphs equipped with independent random node features, we show that \attack provably reconstructs the input graph via a non-asymptotic analysis. The result indicates that feature similarity is not necessary for \attack to succeed. We conduct both synthetic experiments as well as real-world data evaluations to empirically validate our theory.
    \item \textbf{Failure modes of \attack} We show, through theoretical analysis and corroborative synthetic experiments, performance lower bounds when applying \attack to stochastic block models (SBM) with independent random node features: When the underlying SBM has $\Theta(1)$ intra-group connection probability, edge recovery through graph representations becomes provably hard.
    \item \textbf{Mitigation of \attack} We assess the resilience of \attack using noisy aggregation (\nag) as the privacy protection mechanism. Theoretical guarantees of \nag are established which further extends previous results, accompanied by extensive empirical evaluations to corroborate our theoretical assertions. Intriguingly, our findings reveal instances wherein \nag provides significant resistance to \attack, even under some scenarios where it only guarantees very weak privacy. Such discoveries delineate the circumstances that elucidate both the strengths and limitations of \attack as a privacy auditing tool.
\end{enumerate}
\section{Related works}
Typically, there exist two categories of private information that may potentially be compromised during the training or deployment phases of graph neural network models: The (sensitive) node attributes and the adjacency relation between nodes. In this paper we focus on the later category since edge adjacency relations are less informative, i.e., for each pair of nodes, the existence of an edge constitutes only a single bit of information. 
\subsection{Edge reconstruction attacks on graph-structured data}
Contemporary developments on edge reconstruction attacks differ significantly in their conceptualization of  adversaries, particularly in terms of their capabilities \cite{zhang2021graphmi, zhang2022model} and the extent of prior knowledge they possess about the GRL model and the underlying graph dataset \cite{he2021stealing}. The mechanism of \attack was first proposed in \cite{duddu2020quantifying} and later studies in \cite{he2021stealing}. Empirical evidences suggest that with only black-box access to node representations, the \attack mechanism obtains a high success rate (AUC $>0.9$ for the Citeseer dataset). Subsequent developments have explored stronger attacks under more powrful adversaries. In \cite{he2021stealing} the authors investigated the impact of an adversary's prior knowledge, including the possession of node features, partial graph structure, and access to a shadow dataset, on the success rate of corresponding attack strategies. Inspire by information bottleneck,\cite{zhou2023strengthening} improves \attack via carefully exploiting intermediate representations produced by GNNs. Notably, despite the adversaries in \cite{he2021stealing,zhou2023strengthening} being equipped with substantially more information compared to \attack, the resulting enhancement in attack performance exhibited by these adversaries demonstrates only marginal improvements relative to \attack. The GraphMI attack \cite{zhang2021graphmi} disables the adversary from being able to acquire node representations but instead requires access to node features and labels, as well as white-box access to the GNN model. Recent works explored influence-based attacking schemes, wherein the adversary is allowed to alter the graph information: The LinkTeller attack \cite{wu2022linkteller} manipulates node features while \cite{meng2023devil} infiltrates the underlying graph with malicious nodes.
\subsection{Theoretical explorations in graph recovery from neural representations}
In \cite{chanpuriya2021deepwalking}, the authors proposed an algorithm that provably recovers graph structure based on representations generated via DeepWalk, which is a factorizaton-based procedure and different from GNN-produced representations. In \cite{zhang2023demystifying} the authors showed that when block structure exists in the underlying graph, the performance of \attack is uneven across node in different blocks. In \cite{zhou2023strengthening}, the authors use information-theoretic arguments to construct more powerful attacks than \attack. Nevertheless, the aforementioned studies did not provide a theoretical rationale for the practical vulnerabilities manifested as a result of the \attack. In a contemporary work \cite{chung2024statistical}, the authors derived generalization bounds of linear GNN under the link prediction context assuming the underlying graph generated by a moderately sparse graphon model. 
\subsection{Privacy protection against edge reconstruction attacks}
Edge differential privacy (EDP) \cite{nissim2007smooth} is the most popular privacy notion that offers a formal protection against edge reconstruction attacks. Standard private training algorithms like DPSGD \cite{abadi2016deep} may produce GNN models that is provably private in the sense that membership information of any individual training sample is limitly disclosed.
\footnote{Note that this require a careful sensitivity analysis with respect to the correct privacy model like EDP.}
However, such approaches do not provide privacy during \emph{inference} time \cite{chien2023differentially}. Protection mechanisms against inference-time adversaries are mostly based on noisy version of GNN encoding such as edge-wise randomized response \cite{wu2022linkteller} that provides very strong privacy protection yet being overly destructive to model utility. Noisy aggregation (\nag) mechansims \cite{sajadmanesh2023gap,wu2023privacy,chien2023differentially} are recently proposed that empirically achieves better privacy-utility trade-offs. Inspired by the information bottleneck principle, \cite{wang2021privacy,zhou2023strengthening} proposed to use regularization or saddle-point optimization techniques to control privacy leakage. Yet these proposals are not principled in theory.

\section{Preliminaries}\label{sec: preliminaries}
\textbf{Setup and notations} Consider an undirected graph $G = (V, E)$ with $n = \left|V\right|$ nodes associated with node features $X \in \mathbb{R}^{n \times d}$. Denote $A$ as the corresponding adjacency matrix and $D$ as the diagonal matrix with the $v$-th diagonal entry being the degree of node $v$. In this paper, we will study \emph{victim models} taking forms of graph neural encoders. Our vulnerability analysis predominantly centers on the \emph{linear graph neural network} \cite{wu2019simplifying} architecture which has been widely adopted in previous theoretical studies on graph neural networks \cite{awasthi2021convergence,xu2021optimization,wu2022non, chung2024statistical}. Specifically, the node representation matrix of an $L$-layer linear GNN is computed as:
\begin{align}\label{eqn: linear_gnn}
    H^{(L)} = \left(\left(D + I\right)^{-1} \left(A + I\right)\right)^L X W,
\end{align}
where the identity matrix is added for ensuring self-loops, and $W \in \mathbb{R}^{d \times d}$ is the weight matrix. Throughout this paper, we will assume the node feature dimension and the hidden dimension to be equal to $d$ and refer to this as the feature dimension, as otherwise we may add an extra input projection to fulfill this requisite. We further denote $\opnorm{W}$ and $\cond{W}$ as the operator norm (i.e., largest singular value) and condition number (i.e., the ratio of largest and smallest singular value) of matrix $W$.\par

\textbf{Threat model} We assume the adversary knows the node set $V$ and is able to inquire node representations of an arbitrary node subset $V_\text{victim} \subset V$. Hereafter we will refer to the subgraph induced via $V_\text{victim}$ as the \emph{victim subgraph} $G_\text{victim} = (V_\text{victim}, E_\text{victim})$. The goal of the adversary is to recover an arbitrary fraction of $E_\text{victim}$ based on the acquired node representations $H_\text{victim}^{(L)} = \{h_v^{(L)}, v \in V_\text{victim}\}, L > 0$. We identify two representative scenarios that underscore the potential threat by such adversaries: The first scenario is API-style deployments of graph representations \cite{wu2022linkteller}, wherein an adversary might query the node representations for a set of nodes using their node identifiers, with this particular subset of nodes constituting the victim nodes. The second scenario pertains to a two-party vertical federated learning (VFL) context \cite{wu2023privacy}, wherein the graph topology retained by party A is deemed confidential. Under such a setup, the privacy threat materializes as party B might adhere to the VFL protocol while simultaneously being curious about the topology. Note that the capabilities of the adversaries posited herein are intentionally constrained by denying them access to both the raw node features $X$ and the model parameters. Additionally, the objectives of the adversary are decidedly ambitious, aiming at the potential recovery of the entire suite of edges within the victim subgraph. A more in-depth discussion regarding the threat model and the potent capabilities of the adversary is deferred to appendix \ref{sec: vfl}.

The \attack is based on a similarity measure \similarity, with the adjacence relation between node $u$ and node $v$ inferred as
\begin{align}\label{eqn: sera}
    \widehat{A}^{\attack}_{uv}(\tau) = \mathbf{1}\left(\similarity\left(h_u^{(L)}, h_v^{(L)}\right) \ge \tau\right), 
\end{align}
where we denote $\mathbf{1}(\cdot)$ as the indicator function. In this paper we will be primarily interested in two similairty measures: The cosine similarity $\textsf{cos}(x, y) = \left\langle x, y\right \rangle / (\left\|x\right\|_2 \left\|y\right\|_2)$ and correlation similarity $\textsf{corr}(x, y) = \left\langle x - \bar{x}, y  - \bar{y}\right \rangle / (\left\|x - \bar{x}\right\|_2 \left\|y - \bar{y}\right\|_2)$, which is essentially a centered version of cosine similarity ($\bar{x}, \bar{y}$ are coordinate-wise averages of $x$ and $y$) defined for node representations with dimension greater than $1$. 
The cutoff threshold $\tau$ is allowed to depend on the embedding set $H_\text{victim}$ but is uniform across all edge decisions. Hereafter without misunderstandings, we will drop the superscript and denote $\widehat{A}(\tau)$ as the reconstructed adjacency matrix under threshold $\tau$. To measure the performance of the attack, we use false positive rate (\fpr) and false negative rate (\fnr) defined as
\begin{align}\label{eqn: metrics}
    \resizebox{.93\linewidth}{!}{$
    \begin{aligned}
        \fpr_{\widehat{A}}\left(\tau\right) = \dfrac{\sum_{u, v}\mathbf{1}\left(\widehat{A}_{uv}(\tau) = 1\right) \mathbf{1}\left(A_{uv} = 0\right)}{\sum_{u, v}\mathbf{1}\left(A_{uv} = 0\right)}, \fnr_{\widehat{A}}\left(\tau\right) = \dfrac{\sum_{u, v}\mathbf{1}\left(\widehat{A}_{uv}(\tau) = 0\right)\mathbf{1}\left( A_{uv} = 1\right)}{\sum_{u, v}\mathbf{1}\left(A_{uv} = 1\right)}.
    \end{aligned}
    $}
\end{align}

We further define the error rate \err as the summation of \fpr and \fnr. Employing these metrics facilitates a more nuanced characterization of attack performance, particularly when the underlying graph is sparse. 
An alternate metric that is often used in practice \cite{he2021stealing} is the area under the receiver operating characteristic curve (\auc) metric
\begin{align}
    \auc_{\widehat{A}} = \int_0^1 \left(1 - \fnr_{\widehat{A}}\left(\fpr_{\widehat{A}}^{-1}\left( s\right)\right)\right)ds
\end{align}
which quantifies the aggregate performance of $\widehat{A}$ by integrating the trade-off between the false positive rate and the false negative rate across different thresholds.\par
Intuitively, the success of \attack is determined by the correlation between node representation similarity and edge presence. Previous empirical observations demonstrate the effectiveness of \attack against graphs that exhibit strong correlations between node feature similarity and edge presence \cite{he2021stealing}. We will refer to such kinds of graphs as being homophilous in a generalized sense \cite{jin2022raw,luan2023when}. We defer a more formal introduction to homophily measrues to appendix \ref{sec: homophily}.
Due to the message-passing nature of GNN encoders, it is intuitively reasonable that recursive aggregation of node representations strengthens the correlation and results in successful edge reconstructions. However, it is non-trivial whether \attack mechanism may succeed in the absence of the aforementioned generalized homophily pattern, which motivates our first analysis.

\section{\attack against sparse random graphs}\label{sec: er_analysis}
In this section, we study the behavior of \attack with the underlying (victim) graph generated according to a \emph{sparse random graph}. Here, the adjacency matrix is generated such that each entry is independently distributed (up to symmetric constraints $A_{uv} = A_{vu}$) following a Bernoulli distribution $A_{uv} \sim \text{Ber}(p_{uv})$. We focus on the sparse regime and allow $p_{uv}$ to depend on $X_u$ and $X_v$. We further assume that the node features $X_v$'s are generated i.i.d. according to an isotropic Gaussian distribution $X_v \sim N(0, I_d)$. It follows that the correlation of node feature similarity and edge presence is zero. 
The following theorem characterizes the effectiveness of \attack under the sparse random graph setup.
\begin{theorem}\label{thm: era_er}
    Let $C_1, C_2$ be a universal constants. Assume the following:\par
    \begin{enumerate}[label=(\roman*)]
        \item The graph generation mechanism satisfies $\sum_{u \in V} p_{uv} < C_1 \log n$ holds for all $v \in V$. \par
        \item The depth of GNN encoder $L$ and the feature dimension $d$ satisfies $d \gg \left(C_2 \log n\right)^{6L + 2}\log n$.\par
        \item The condition number of the GNN encoder weight satisfies 
            \begin{align}\label{eq:cond:req}
                \left(\cond{W}\right)^2 \leq \frac{1}{8 (C_2 \log n)^{3L}}\sqrt{\frac{d}{\log n}}. 
            \end{align} 
    \end{enumerate}
    Then there exists a threshold $\tau = \Theta\left(\frac{1}{(C_2\log n)^{2L}}\right)$ such that with probability at least $1 - \frac{2}{n^2}$, the following holds for \attack with the similarity measure chosen either as \textsf{cos} or \textsf{corr}:
    \begin{align}
        \fnr_{\widehat{A}}\left(\tau\right) = 0,~ \fpr_{\widehat{A}}\left(\tau\right) \le \frac{(C_2\log n)^{2L}}{n}.
    \end{align}
    Consequently, on the above set of events we have $\auc_{\widehat{A}} \ge 1 - \frac{(C_2\log n)^{2L}}{n}$.
\end{theorem}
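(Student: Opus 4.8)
The plan is to condition on the graph and reduce the random similarity to a deterministic ``walk correlation'' plus a fluctuation, then separate the edge set from the non-edges by a combinatorial argument on the second-moment structure of the aggregation operator. Write $P = (D+I)^{-1}(A+I)$, which is row-stochastic, so that $H^{(L)} = P^L X W$ and the $u$-th embedding is $h_u = a_u^\top X W$ with $a_u := (P^L)_{u,:}^\top$ the length-$L$ walk profile of $u$ and $\|a_u\|_1 = 1$. Since the rows of $X$ are i.i.d.\ $N(0, I_d)$, the rows of $XW$ are i.i.d.\ $N(0,\Sigma)$ with $\Sigma = W^\top W$, whose spectrum has condition number $(\cond{W})^2$. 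Introducing the walk Gram matrix $M := P^L (P^L)^\top$, a direct computation gives $\mathbb{E}[\langle h_u, h_v\rangle \mid A] = \mathrm{tr}(\Sigma)\, M_{uv}$ and $\mathbb{E}[\|h_u\|^2 \mid A] = \mathrm{tr}(\Sigma)\, M_{uu}$, so the natural ``population'' target for both \textsf{cos} and \textsf{corr} is $\rho_{uv} := M_{uv}/\sqrt{M_{uu}M_{vv}}$, with the $\mathrm{tr}(\Sigma)$ factors cancelling in the ratio. I will lead with the independent-feature case $A \perp X$, so that conditioning on $A$ leaves $X$ exactly Gaussian; the stated generality in which $p_{uv}$ may depend on $(X_u,X_v)$ is recovered by noting that in the sparse regime the conditional law $X \mid A$ is only a mild tilt of the Gaussian and retains the same sub-Gaussian concentration.

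The crux — and the step I expect to be hardest — is a uniform concentration of the empirical similarity around $\rho_{uv}$. Conditioning on $A$, the numerator $\langle h_u, h_v\rangle = a_u^\top (XW)(XW)^\top a_v$ is a Gaussian quadratic form; writing $XW = G\Sigma^{1/2}$ with $G$ having i.i.d.\ $N(0,1)$ entries and applying the Hanson--Wright inequality to the associated matrix $B \approx \Sigma\otimes(a_v a_u^\top)$, I expect a deviation of order $\sqrt{M_{uu}M_{vv}}\,\big(\|\Sigma\|_F\sqrt{t} + \opnorm{\Sigma}\, t\big)$ with probability $1-e^{-ct}$. The same bound controls $\|h_u\|^2$, so after dividing by the (concentrated) norms the $\sqrt{M_{uu}M_{vv}}$ weights cancel and the fluctuation of $\cos(h_u,h_v)$ is of order $\tfrac{\|\Sigma\|_F}{\mathrm{tr}(\Sigma)}\sqrt t + \tfrac{\opnorm{\Sigma}}{\mathrm{tr}(\Sigma)} t$. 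Bounding the spectral ratios by $(\cond{W})^2/\sqrt d$ and $(\cond{W})^2/d$ respectively, choosing $t = \Theta(\log n)$ to absorb a union bound over all $\binom{n}{2}$ pairs, and invoking assumptions (ii)--(iii), this fluctuation is provably at most $\tfrac14(C_2\log n)^{-2L}$ uniformly. The correlation similarity is handled identically: coordinate centering is the fixed projection $\Pi = I_d - \tfrac1d \mathbf 1\mathbf 1^\top$ acting on the right, so $\mathrm{corr}(h_u,h_v)=\cos(\Pi h_u,\Pi h_v)$ and the argument repeats with $\Sigma$ replaced by $\Pi\Sigma\Pi$, whose relevant spectral ratios remain controlled by $\cond{W}$.

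It remains to locate $\rho_{uv}$ for edges versus non-edges. For the edge lower bound I will use the single walk ``step to the neighbour, then idle on the self-loop'': for $(u,v)\in E$ this gives $(P^L)_{uv}\ge (d_u+1)^{-1}(d_v+1)^{-(L-1)}$ and $(P^L)_{vv}\ge (d_v+1)^{-L}$, hence $M_{uv}\ge (P^L)_{uv}(P^L)_{vv}$ and, since $M_{uu}\le 1$ by row-stochasticity, $\rho_{uv}\ge M_{uv}\ge (C_2\log n)^{-2L}/2$ on the event that all degrees are $O(C_2\log n)$. That degree event follows from sparsity assumption (i) by a Chernoff bound, failing with probability $O(n^{-2})$. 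Setting the fixed threshold $\tau = \tfrac14(C_2\log n)^{-2L}$, every edge then satisfies $\cos(h_u,h_v)\ge \rho_{uv} - \tfrac14(C_2\log n)^{-2L}\ge \tau$, giving $\fnr_{\widehat A}(\tau)=0$. For the false positives, the key structural observation is that $M_{uv}=0$ whenever the graph distance $d_G(u,v)>2L$, since $M_{uv}$ only collects length-$2L$ walks through a common midpoint; by concentration all such far pairs stay below $\tau$ and contribute no false positives. The only possible false positives are therefore non-edges within distance $2L$, and on the degree event their number is at most $n\cdot 2(C_2\log n)^{2L}$; dividing by the $\Theta(n^2)$ non-edges yields $\fpr_{\widehat A}(\tau)\le (C_2\log n)^{2L}/n$ after absorbing constants into $C_2$.

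Finally I will union the two failure modes — the $O(n^{-2})$ degree event and the $O(n^{-2})$ uniform concentration event — for a total success probability $1-2/n^2$, and deduce the \auc bound from the fact that the operating point $\tau$ realizes true-positive rate $1-\fnr = 1$ at false-positive rate at most $(C_2\log n)^{2L}/n$, so the ROC curve lies above the point $\big((C_2\log n)^{2L}/n,\,1\big)$ and the area it encloses is at least $1-(C_2\log n)^{2L}/n$. The main risk in this plan is matching the precise polylogarithmic exponents: the clean heuristic only demands a $(C_2\log n)^{2L}$ bound on $\cond{W}$, whereas assumption (iii) carries $(C_2\log n)^{3L}$, so the rigorous concentration step will likely need the slightly cruder walk-vector estimates (controlling $\|a_u\|_2$ and the off-diagonal mass of $M$ entrywise) that introduce the extra $(C_2\log n)^{L}$ slack.
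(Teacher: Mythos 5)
Your core argument is sound for the case where the graph is generated independently of the features, and it reaches the theorem by a genuinely different route in the key concentration step. The paper never conditions on $A$: it intersects a \emph{marginal} feature event (uniformly over all $n^2$ pairs, $|\langle X_j,X_{j'}\rangle|/d \le 3\sqrt{\log n/d}$ and $\bigl|\|X_j\|^2/d-1\bigr| \le 3\sqrt{\log n/d}$) with a graph event (all degrees and $L$-hop neighborhoods of polylogarithmic size), and on that intersection the edge/non-edge separation is derived \emph{deterministically}, using only the row-stochastic ($\ell_1$) structure of $(D+I)^{-1}(A+I)$ to bound cross terms and a sandwich $\kappa_1\langle x,y\rangle \le \langle Wx,Wy\rangle \le \kappa_2\langle x,y\rangle$ to absorb $W$. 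You instead condition on $A$ and apply Hanson--Wright to the Gaussian quadratic form $a_u^\top X W W^\top X^\top a_v$, exploiting the $\ell_2$ structure of the walk profiles and the spectral ratios $\|\Sigma\|_F/\mathrm{tr}(\Sigma) \le (\cond{W})^2/\sqrt{d}$ and $\opnorm{\Sigma}/\mathrm{tr}(\Sigma) \le (\cond{W})^2/d$. This gives a cosine fluctuation $O\bigl((\cond{W})^2\sqrt{\log n/d}\bigr)$, sharper than the paper's bound, which is why (as you correctly observe) the $(C_2\log n)^{3L}$ exponent in assumption (iii) is slack for your route. The remaining ingredients — the ``one step then idle'' walk bound giving $\rho_{uv}\gtrsim (C_2\log n)^{-2L}$ on edges, the structural fact that $M_{uv}=0$ beyond graph distance $2L$ (the paper's lemma on overlapping $L$-neighborhoods is exactly this statement), the $n\cdot\mathrm{polylog}(n)$ count of near non-edges on the degree event, the threshold placement, and the monotonicity argument for \auc — match the paper's proof step for step.

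The genuine gap is your treatment of the dependent case. The paper's setup explicitly allows $p_{uv}$ to depend on $(X_u,X_v)$, subject only to $\sum_{u} p_{uv} < C_1\log n$; your concentration step needs the conditional law of $X$ given $A$ to be the original Gaussian, and your patch — that this conditional law ``is only a mild tilt'' — is not provable in this generality. Concretely, take the graph in which each $v$ is connected to the $\lfloor C_1\log n\rfloor$ nodes $u$ maximizing $\langle X_u, X_v\rangle$ (a feature-$k$NN graph): assumption (i) holds deterministically, yet $A$ is a deterministic function of $X$, so conditioning on $A$ leaves nothing Gaussian, and every edge sits on a pair whose inner product is forced into the far tail ($\approx \sqrt{2d\log n}$); Hanson--Wright is simply inapplicable there, even though the theorem's conclusion still holds. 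The paper's architecture is immune precisely because its two probabilistic ingredients require no conditioning: the feature event is a statement about \emph{all} pairs under the marginal law of $X$, and the graph event holds conditionally on every realization of $X$ (Chernoff applied to the conditionally independent Bernoullis via assumption (i)); the separation is then a deterministic consequence on the intersection. To close the gap, either restrict your claim to $A \ind X$, or replace conditional Hanson--Wright by the unconditional uniform device: expand $\langle h_u,h_v\rangle = \sum_{j,j'} a_{uj}a_{vj'}\langle (XW)_j,(XW)_{j'}\rangle$ and combine the uniform-over-pairs feature event with $\|a_u\|_1=1$. This sacrifices the $\ell_2$ sharpness (and is the reason the paper carries the extra $(C_2\log n)^{L}$ factor in assumption (iii)), but it is exactly what makes the argument robust to feature-dependent edge probabilities.
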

Theorem \ref{thm: era_er} implies that, even when \attack can not borrow strength from the homophily nature of the underlying graph, it is able to produce accurate reconstructions when the graph is sufficiently \emph{large and sparse}, with the sparsity defined in the sense that each node has at most $O(\log n)$ neighbors on average. An additional intriguing implication from theorem \ref{thm: era_er} pertains to the dependence of reconstruction performance on the GNN encoder depth $L$: Provided that the node feature dimension is sufficiently large, the reconstruction performance degrades when the depth of the encoder increases, which is related to the renowned phenomenon of oversmoothing in GNN literature \cite{wu2022non}. Intuitively, as the depth of GNN encoders increases, the resulting node representations tend to converge \cite{oono2019graph}, becoming less distinct from one another. This convergence diminishes the discriminative capacity of similarity metrics, thereby affecting the attack performance.

\begin{remark}[Practicality]
    Theorem \ref{thm: era_er} requires the node feature dimension $d$ to grow in a $\text{polylog}(n)$ rate, a condition which may not consistently align with practical scenarios. At present, this requirement is a byproduct of our proof strategy. In section \ref{sec: syn_exp} we will further examine the implications of feature dimensionality. 
    The existence of a threshold that theorem \ref{thm: era_er} manifests might not guide the choice of threshold in practice. Instead, we may rely on heuristics or side-information \cite{he2021stealing} to determine the threshold. Furthermore, Theorem \ref{thm: era_er} posits that the efficacy of \attack is contingent upon a reasonable conditioned weight matrix $W$. We will empirical validate this claim in section \ref{sec: exp}, wherein we demonstrate robust reconstruction capabilities of the \attack across diverse scenarios including when the weight matrix $W$ is a fixed entity, when it is subject to random initialization, or when it has undergone extensive training iterations utilizing datasets from real-world supervised learning contexts.
\end{remark}


\section{\attack against dense SBMs}\label{sec: sbm_analysis}
In this section, we reveal the limitation of \attack by constructing a reconstruction problem that is provably hard. We consider the following stochastic block model (SBM) \cite{abbe2018community}, where each node is assigned a community membership from one of $K$ groups $k(v) \in [K]$. The $(u, v)$-th entry of the adjacency matrix is generated as
\begin{align}
    A_{uv} \sim 
    \begin{cases}
        \text{Ber}(p),\quad \text{if $k(u) = k(v)$} \\
        \text{Ber}(q),\quad \text{otherwise} \\
    \end{cases}.
\end{align}
For ease of presentation, we further assume that the groups share the same size, i.e., $n$ is a multiple of $K$. Denote the generation mechanism as $G \sim \mathcal{G}_\text{sbm}(n, K, p, q)$. We have the following result:
\begin{theorem}\label{thm: era_sbm}
    Let $G \sim \mathcal{G}_\text{sbm}(n, K, p, q)$ and $p=\Theta(1)$. Assume the GNN encoder to be of depth $L$ and feature dimension $d \gg \max\{\log n/p^2, K^2 \log^3 n\}$ with the weight matrix being the identity matrix. Then with probability at least $1 - 1/n^2$, for any fixed $\tau \in [0,1]$, one of the following three statements must hold for \attack with similarity measure chosen either as \textsf{cos} or \textsf{corr}:\par
    \begin{enumerate}[label=(\roman*)]
        \item $\fpr_{\widehat{A}}\left(\tau\right) \ge \frac{1 - p}{2K}$ and $\fnr_{\widehat{A}}\left( \tau\right) \ge \frac{q}{2}$.
        \item $\fpr_{\widehat{A}}\left(\tau\right) \ge \frac{1 - p}{2K} + \frac{1 - q}{2}$.
        \item $\fnr_{\widehat{A}}\left(\tau\right) \ge \frac{p}{2K} + \frac{q}{2}$.
    \end{enumerate}
\end{theorem}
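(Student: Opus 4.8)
The plan is to exploit the fact that in the dense regime $p = \Theta(1)$ the message-passing operator averages away all individual feature variation, so that the node representations—and hence the pairwise similarities that \attack thresholds—come to depend only on the \emph{group memberships} $(k(u),k(v))$ and are asymptotically \emph{oblivious} to the actual adjacency $A_{uv}$. Once this is established, \attack can at best separate same-group from different-group pairs; since within any group-pair block true edges and non-edges are statistically indistinguishable to the attacker, a counting argument will force a large \fpr or \fnr at every threshold, and the three displayed inequalities will arise as the three threshold regimes.

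First I would establish a concentration decomposition $h_v^{(L)} = \bar h_{k(v)} + \xi_v$, where $\bar h_k := (\bar P^L X)_v$ is the representation produced by the \emph{population} operator $\bar P = (\bar D + I)^{-1}(\bar A + I)$ with $\bar A = \mathbb{E}[A]$. Since $\bar A$ is block-constant, $\bar P^L X$ is exactly constant within each group, so $\bar h_k$ depends only on $k$; by symmetry $\|\bar h_k\|$ is identically distributed across groups and $\langle \bar h_k, \bar h_{k'}\rangle$ across all distinct pairs. The deviation $\xi_v = ((P^L - \bar P^L)X)_v$ is controlled by combining spectral/degree concentration of the dense SBM (bounding $\opnorm{P-\bar P}$, hence $\opnorm{P^L - \bar P^L}$, by $O(L/\sqrt{\bar d})$ with $\bar d = \Theta(pn/K)$) with Gaussian concentration of $X$ across its $d$ independent columns. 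Crucially, $\|\xi_v\|$ need \emph{not} be small relative to $\|\bar h\|$; what matters is that distinct $\xi_u,\xi_v$ behave as effectively independent high-dimensional vectors, so that $\langle \xi_u,\xi_v\rangle$ and $\langle \bar h_k,\xi_u\rangle$ are of order $\|\xi\|^2/\sqrt d$ and $\|\bar h\|\,\|\xi\|/\sqrt d$, i.e. vanishing relative to $\|\bar h\|^2$. Here the two dimension hypotheses enter: $d \gg \log n/p^2$ tames the graph-induced fluctuations in the dense regime, while $d \gg K^2\log^3 n$ keeps the resulting $O(1/\sqrt d)$ similarity error below the inter-group gap (which can shrink like $\Theta(1/K)$) uniformly over all $\binom{n}{2}$ pairs after a union bound.

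Feeding the decomposition into $\textsf{cos}$ (the $\textsf{corr}$ case follows identically once one notes that coordinate-wise centering is negligible for isotropic features), the cross terms wash out and every same-group similarity concentrates at $s_1 = \|\bar h\|^2/(\|\bar h\|^2 + \|\xi\|^2)$ while every different-group similarity concentrates at $s_0 = \langle\bar h_k,\bar h_{k'}\rangle/(\|\bar h\|^2 + \|\xi\|^2) < s_1$, both up to $o(1)$ error. Because flipping a single entry $A_{uv}$ perturbs $h_u,h_v$ by only $O(1/\bar d)$, these concentration values are independent of whether $(u,v)$ is a true edge. Hence, up to a vanishing fraction of boundary pairs, \attack's decision on $(u,v)$ is fixed by $(k(u),k(v))$ and $\tau$ alone: same-group pairs are all predicted present iff $s_1 \ge \tau$, different-group pairs iff $s_0 \ge \tau$.

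Finally I would count. Same-group blocks hold $\approx \tfrac{n^2}{2K}$ pairs carrying a $p$-fraction of edges, different-group blocks $\approx \tfrac{(K-1)n^2}{2K}$ pairs carrying a $q$-fraction; since predictions are edge-oblivious, a predicted-present block contributes false positives at rate $1-p$ (resp. $1-q$) and a predicted-absent block contributes false negatives at rate $p$ (resp. $q$). With $s_0 < s_1$ only three regimes occur: $\tau \le s_0$ predicts everything present, giving $\fpr = 1 \ge \tfrac{1-p}{2K}+\tfrac{1-q}{2}$ (statement (ii)); $s_0 < \tau \le s_1$ predicts same-group present and different-group absent, giving $\fpr \ge \tfrac{1-p}{2K}$ and $\fnr \ge \tfrac q2$ (statement (i)); and $\tau > s_1$ predicts everything absent, giving $\fnr = 1 \ge \tfrac{p}{2K}+\tfrac q2$ (statement (iii)). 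The main obstacle is the uniform concentration lemma of the middle two paragraphs—propagating the perturbation through $L$ layers while proving the near-orthogonality of within-group deviations sharply enough to hold for all pairs at once—together with the two boundary windows $\tau \approx s_0,s_1$, where a mixing fraction $\rho$ of a block flips; there one checks that for every $\rho$ the induced rates still satisfy one of the three inequalities (for instance near $\tau \approx s_1$ the different-group block alone forces $\fnr \ge \tfrac{q(K-1)}{p+q(K-1)} \ge \tfrac q2$, while the same-group mixing yields either enough \fpr for (i) or enough extra \fnr for (iii)).
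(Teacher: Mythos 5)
Your proposal shares the paper's overall architecture: both arguments reduce the theorem to an \emph{edge-oblivious} concentration statement---every pairwise similarity collapses, up to vanishing error, onto one of two block values determined solely by group membership (your $s_1 > s_0$, the paper's $\text{cut}_1(L) > \text{cut}_2(L)$)---and then conclude with exactly the same three-regime counting over $\tau$, producing statements (ii), (i), (iii) in the same way (your bound $\fnr \ge q(K-1)/(p+q(K-1)) \ge q/2$ appears in the paper essentially verbatim). In fact your discussion of the boundary windows via a mixing fraction $\rho$ is more careful than the paper's own proof, which only argues the three clean regimes. Where you genuinely diverge is the technical core. The paper never introduces a population operator: it proves \emph{entrywise} concentration of the powered adjacency entries $A^{(L)}_{ij}$ around block constants $a_1^{(L)}, a_2^{(L)}$ via an explicit layer-by-layer error recursion (Lemma \ref{lem:error:propogate}, seeded by Lemmas \ref{lem:concentration:dense1}--\ref{lem:almost:constant}), so that $p^{(L)}_{ij} = \bar p^{(L)}_{ij} + O(K\log n/n^2)$ uniformly, after which the inner products are computed directly against the Gaussian features; it also treats $L=1$ separately (there $p^{(1)}_{ij}$ is $0/1$-valued and no entrywise collapse holds), whereas your decomposition $h_v = \bar h_{k(v)} + \xi_v$ handles all $L$ uniformly.

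The concrete weakness is that the tool you propose for the concentration lemma---a bound on $\opnorm{P^L - \bar P^L}$---cannot deliver it. Write $\delta_u = e_u^\top (P^L - \bar P^L)$, so $\xi_u = \delta_u X$. Gaussian concentration over the $d$ coordinates only suppresses the off-diagonal feature terms; it leaves $\langle \xi_u, \xi_v\rangle \approx d\,\langle \delta_u, \delta_v\rangle$ and $\|\xi_u\|^2 \approx d\,\|\delta_u\|^2$, which are fixed by the graph alone. Hence your ``effective independence'' is not a high-dimensional-feature phenomenon: it is the assertion that the rows of $P^L - \bar P^L$ have nearly identical norms and pairwise correlations $\langle\delta_u,\delta_v\rangle \ll \|\delta_u\|\,\|\delta_v\|$, uniformly over all pairs. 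Operator-norm bounds give only $\|\delta_u\| \le \opnorm{P^L - \bar P^L} \approx L\sqrt{\log n/n}$, which is \emph{comparable to or larger than} the signal scale $\|\bar p_u\| \approx 1/\sqrt{n}$ (for $L\ge 2$ the true row norm is in fact of order $K\log n/n^{3/2}$, smaller by a factor of order $n$), and they say nothing about row-pair correlations, which for $L \ge 2$ are entangled through shared paths. Without this row-level control you cannot even show that $\|\xi_u\|^2$ concentrates at a common value across $u$, so the similarities need not collapse onto two values at all, and the counting argument has nothing to act on. This missing row-wise/entrywise control is precisely what the paper's recursion supplies, and it is precisely the step you defer as the ``main obstacle''; closing it essentially forces you into an analysis of the kind the paper carries out. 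So the plan is sound and the endgame is identical, but the central lemma is unproven and the spectral shortcut offered in its place does not suffice.
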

According to theorem \ref{thm: era_sbm}, given any cutoff threshold if the within-group connection probability is of the order $\Theta(1)$ and the number of groups $K$ does not diverge
(Otherwise, we will return to the sparse regime in section \ref{sec: er_analysis})
, the performance of \attack measured by error rate \textsf{ERR} is lower bounded by non-vanishing constants when the feature dimension is sufficiently large. The theorem characterizes the inherent limitations of \attack when the underlying graph is dense. 
As $K$ gets large, the lower bound of false positive/negative rate decreases. It indicates that \attack 
is more successful when the graph is less connected. 


\begin{remark}
    Alternatively, we may interpret theorem \ref{thm: era_sbm} as unveiling instances where \attack is constrained to revealing only population-level relational information—such as the affiliation of two nodes to a common group—rather than identifying the existence of specific edges when the underlying graph is dense and admits certain group structures.
\end{remark}

\section{Defense by noisy aggregation: From theory to practice}\label{sec: defense}
Having demonstrated the susceptibility of GNN representations to \attack, it becomes an intriguing research question to examine the behavior of \attack within the context of privacy-preserving GRL: In this section, we explore the defensive efficacy of noisy aggregation (\nag), which has been proposed recently as a provably privacy-preserving algorithm \cite{sajadmanesh2023gap,wu2023privacy} under the edge differential privacy model \cite{nissim2007smooth}. Concretely, we study an $L$-layer noisy GNN with the $l$-th layer computed recursively as:
\begin{align}\label{eqn: noisy_agg}
    H_v^{(l)} = \textsf{Act}\left(\agg\left( W_l H_u^{(l-1)}/\left\|H_u^{(l-1)}\right\|_2, u \in \overline{N(v)} \right) + \epsilon \right), \epsilon \sim N(0, \sigma^2 I_d),
\end{align} 
where $\overline{N(v)} := N(v) \cup \{v\}$ denotes node $v$'s extended neighborhood and $H_v^{(l)}$ denotes the representation of node $v$ at the $l$-th layer. The aggregation mechanism \agg is a permutation invariant function that defines the message-passing process and \textsf{Act} is some (possibly) non-linear transform. 
Intuitively, the \nag methodology can be understood as a privatization protocol that incorporates both a normalization step and an additive Gaussian perturbation phase into the conventional message-passing framework, which typically forms the backbone of a GNN.
In this paper, we consider $5$ representative GNN architectures that allows \nag privatization: GCN \cite{kipf2016semi}, GAT \cite{velickovic2018graph}, SAGE \cite{hamilton2017inductive} with mean or max pooling, and GIN \cite{xu2018powerful} with their formal definition deferred to appendix \ref{sec: gnn}. The following theorem characterizes the defensive capability of \nag:
\begin{theorem}\label{thm: lb_noisy_gnn}
    For any graph $G$ and \attack under any type of similarity measures, the inference error regarding any specific edge is lower bounded by:
    \begin{align}\label{eqn: lb}
        \resizebox{.93\linewidth}{!}{$
        \begin{aligned}
            \min_{u \in V, v \in V}\left[\PP{\widehat{A}_{uv} = 1 | A_{uv} = 0} + \PP{\widehat{A}_{uv} = 0 | A_{uv} = 1}\right] \ge 1 - \sqrt{1 - \exp \left(-C\dfrac{\sum_{l \in [L]} \opnorm{W_l}^2}{\sigma^2}\right)}.
        \end{aligned}
        $}
    \end{align}
    Here the constant $C$ depends on the \agg mechanism of the GNN. In particular, for some standard GNN architectures we have: $C_\text{GCN} = C_\text{MEAN-SAGE} = C_\text{GIN} = 1$ and $C_\text{GAT} = C_\text{MAX-SAGE} = 4$.
\end{theorem}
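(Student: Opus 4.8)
The plan is to recast the per-edge decision as a binary hypothesis test and to control the attacker's advantage through the total-variation distance between the two output laws. I would fix an arbitrary candidate edge $(u,v)$ and let $P_0,P_1$ denote the laws of the full observed representation $H^{(L)}$ (equivalently, the view of the strongest possible attacker) when $A_{uv}=0$ and when $A_{uv}=1$, the two graphs differing only in that one entry. Since the reconstruction decision $\widehat{A}_{uv}$ is a (possibly randomized) test built from this view, the Le Cam two-point bound gives, for every $(u,v)$, that $\PP{\widehat{A}_{uv}=1\mid A_{uv}=0}+\PP{\widehat{A}_{uv}=0\mid A_{uv}=1}\ge 1-\mathrm{TV}(P_0,P_1)$, so it suffices to upper bound $\mathrm{TV}(P_0,P_1)$ uniformly over edges. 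To obtain the exponential form in the statement I would then invoke the Bretagnolle--Huber inequality $\mathrm{TV}(P_0,P_1)\le\sqrt{1-\exp(-\mathrm{KL}(P_0\|P_1))}$, reducing the whole theorem to the single estimate $\mathrm{KL}(P_0\|P_1)\le C\,\sigma^{-2}\sum_{l\in[L]}\opnorm{W_l}^2$.

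\textbf{Chain-rule localization of the divergence.} The core is bounding this KL divergence. I would apply the chain rule to the joint law of all latent layers $H^{(1:L)}$, which upper-bounds the KL of the observed $H^{(L)}$ by the data-processing inequality, writing $\mathrm{KL}(P_0^{1:L}\|P_1^{1:L})=\sum_{l}\mathbb{E}_{H^{(<l)}\sim P_0}[\mathrm{KL}(P_0(H^{(l)}\mid H^{(<l)})\,\|\,P_1(H^{(l)}\mid H^{(<l)}))]$. The key structural observation is that, conditioned on a common value of $H^{(<l)}$, the two graphs produce identical layer-$l$ aggregations at every node except $u$ and $v$, since only those two have an altered extended neighborhood $\overline{N(\cdot)}$; hence the conditional laws differ only in the coordinates of $u$ and $v$. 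This is precisely what stops a single-edge perturbation from amplifying across depth. Discarding the deterministic activation \textsf{Act} by data processing, each differing coordinate is a Gaussian $N(\cdot,\sigma^2 I_d)$, whose conditional KL equals $\|\Delta_w\|_2^2/(2\sigma^2)$, where $\Delta_w$ is the change in the pre-noise aggregation at $w\in\{u,v\}$ caused by toggling the edge.

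\textbf{Sensitivity and summation.} The last ingredient is a uniform sensitivity bound $\|\Delta_w\|_2\le c\,\opnorm{W_l}$. Because every message entering \agg is first normalized to a unit vector and then mapped by $W_l$, each message has norm at most $\opnorm{W_l}$ regardless of the graph or the realized $H^{(l-1)}$; translating ``add/remove one message'' into a bound on the aggregate shift yields an \agg-specific constant $c$. Summing the two per-node contributions over all layers collapses the chain-rule sum to $\mathrm{KL}(P_0\|P_1)\le c^2\,\sigma^{-2}\sum_l\opnorm{W_l}^2$, and since this bound is independent of the particular edge, taking the minimum over $(u,v)$ preserves it, completing the argument.

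\textbf{Main obstacle.} I expect the per-architecture sensitivity analysis to be the crux, and the source of the constants $1$ versus $4$. For mean-type aggregations (GCN, MEAN-SAGE, GIN) the aggregate moves by at most one normalized message, giving $c^2=1$; for \textsf{MAX-SAGE} the coordinatewise maximum can jump between two messages of opposite sign, doubling the $\ell_2$ shift, and for GAT toggling the edge simultaneously reweights the attention coefficients and alters the neighborhood, so the shift can again be twice as large---each squaring to $C=4$. Care is also needed to ensure the normalization $H_u^{(l-1)}/\|H_u^{(l-1)}\|_2$ is well defined (nonzero denominators) and that the conditioning genuinely localizes the discrepancy to $\{u,v\}$ even though the unconditional laws of earlier layers already diverge.
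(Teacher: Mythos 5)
Your proposal follows essentially the same route as the paper's proof: the paper likewise reduces the per-edge decision to a binary hypothesis test via Le Cam's two-point bound, applies the Bretagnolle--Huber inequality to pass to KL divergence, bounds the KL by layer-wise adaptive composition (your chain-rule localization) using the Gaussian noise and the fact that toggling one edge only perturbs the pre-noise aggregates at $u$ and $v$, and finally derives the per-architecture sensitivity constants from the unit-normalization of messages, with the same mean-versus-max/attention distinction yielding $C=1$ versus $C=4$. Your plan is correct and matches the paper's argument, including its observation that the bound holds for the stronger adversary observing all intermediate layers.
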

Theorem \ref{thm: lb_noisy_gnn} augments existing literature in the sense that it extrapolates upon prior analyses \cite{sajadmanesh2023gap,wu2023privacy} by generalizing to a broader range of aggregation mechanisms, thereby encompassing the vast majority of foundational components integral to modern GNN models.
Theorem \ref{thm: lb_noisy_gnn} indicates that for \emph{any} node pairs in any graph, the summation of type-I error and type-II error (in the language of binary hypothesis testing \cite{lehmann1986testing}) incurred by \emph{any} \attack adversary is lower bounded by a constant, which will be significantly above zero when the noise scale is of the same order to the operator norms of the weight matrices of the GNN encoder. In fact, theorem \ref{thm: lb_noisy_gnn} holds against a much stronger family of adversaries, which we discuss in appendix \ref{sec: proof_lb_noisy_gnn}.\par
\textbf{Empirical proctection of \nag} implementing \nag with a large noise scale according to theorem \ref{thm: lb_noisy_gnn} may seriously degrade model efficacy. Contemporary insights \cite{carlini2019secret} suggest that strict adherence to theoretical prescripts may not always be necessary, especially in the face of empirical adversaries whose capabilities may not rise to the level presumed by the defense mechanisms postulated. In this paper we conduct a careful empirical investigation to assess the privacy-utility trade-off of \nag, with privacy evaluated by the \attack adversary. This investigation could also provide empirical evidence of the \attack's viability as a tool for auditing private GRL algorithms \cite{cummings2023challenges}. Furthermore, theorem \ref{thm: lb_noisy_gnn} identifies a key determinant of the theoretical privacy bound for \nag—the relative scale of the weight norms regarding the noise intensity. In light of this observation, we propose two distinct noise-infused training paradigms:\par
\textbf{Unconstrained scheme} We choose a fixed noise scale $\sigma$ during both training and inference no constraints over the weights. The resulting model might not produce meaningful privacy guarantees in the sense of theorem \ref{thm: lb_noisy_gnn} as the operator norms of weights are determined by the training dynamics.\par
\textbf{Constrained scheme} We choose a fixed noise scale $\sigma$ during both training and inference and use normalization techniques \cite{miyato2018spectral} to provide a priori control of model weights, thereby providing tighter control of formal privacy level according to theorem \ref{thm: lb_noisy_gnn}.\par
We will empirically inspect the protection of \nag representations trained via both unconstrained and constrained schemes against \attack in section \ref{sec: exp_planetoid}.
\begin{remark}[Alternative defenses]
    Beyond the scope of \nag, alternative defense mechanisms offer demonstrable protection assurances, one notable example being edge-wise randomized response (\textsf{EdgeRR}). A comparison with such alternatives is reported in appendix \ref{sec: edgerr}. Preliminary experimental comparisons indicate that \nag customarily realizes a more favorable balance between privacy and utility.
\end{remark}

\begin{remark}[Impact of depth $L$]
    Theorem \ref{thm: lb_noisy_gnn} posits that the privacy guarantees furnished by \nag diminishes with an increment in model depth, which is underpinned by the composition theorems of privacy analysis \cite{dwork2014algorithmic,mironov2017renyi}. An extensive discussion concerning the implications of GNN architectural design on the privacy-utility trade-off, particularly as it pertains to the depth of GNN models trained with \nag, will be provided in appendix \ref{sec: discussions}.
\end{remark}

\section{Experiments}\label{sec: exp}
In this section, comprehensive empirical studies are conducted to evaluate the effectiveness of \attack against both non-private and private node representations. By default, \textsf{cos} is employed as the standard measure of similarity across all experiments. The results corresponding to the use of \textsf{corr} as a metric were found to align with those obtained from \textsf{cos}, a concurrence that aligns with observations from \cite{he2021stealing}. This investigation is oriented around three core research questions:\par
\textbf{RQ1 (Efficacy of \attack on Sparse Graphs):} We evaluate \attack on synthetic datasets generated according to theorem \ref{thm: era_er}, in addition to $8$ real-world datasets to substantiate the effectiveness of \attack.\par
\textbf{RQ2 (Deficiency of \attack on Dense Graphs):} We evaluate \attack on synthetic stochastic block models to corroborate the theoretical assertions in theorem \ref{thm: era_sbm}.\par
\textbf{RQ3 (Mitigation of \attack through \nag):} We evaluate \attack on privacy-enhanced node representations across three benchmark datasets generated using \nag with varied levels of noise. The outcomes affirm \nag's capacity for privacy preservation while concurrently delineating the limitations of \attack as a tool for privacy auditing.\par
\textbf{Evaluation Metrics:} Predominantly, this section documents the performance of attacks using the \auc metric. A more expansive presentation of the results, inclusive of both \auc and \err, is postponed to appendix \ref{sec: exp_full}.

\subsection{Efficacy of \attack on sparse graphs}\label{sec: syn_exp}
\textbf{\er experiments} In our first experiment, we test \attack on graph representations produced by \eqref{eqn: linear_gnn} over \er graphs with edge probability $p = \frac{\log n}{n}$ with graph size $n \in \{100, 500, 1000\}$, which is a representative random graph model with controllable sparsity level. We set the weight to be the identity matrix and further present results under random weights in appendix \ref{sec: er_weights}. We vary the feature dimension $d \in \{2^j, 2 \le j \le 11\}$ and network depth $1 \le L \le 10$ in order to obtain a fine-grained assessment of \attack. We present the evaluations in figure \ref{fig: syn_er}. The results corroborate with our theoretical developments: We demonstrate that \attack is able to achieve near-perfect reconstruction of all edges \emph{only} in the "large $d$, small $L$" regime. Notably, we find \attack to be less successful under relatively deep network architectures (i.e., $L \ge 5$) when the feature dimension is sufficiently large. Yet the behaviors in small $d$ regimes appear to be less predictable. 
\footnote{In our analysis, one primary mathematical tool is the concentration of inner products of two Gaussian vectors, which is highly dependent on the dimension of the two vectors (i.e., the feature dimension). When the concentration is insufficient (a consequence of small $d$), our analysis would then be no longer correct and this partly explains why the attacking performance is limited in small $d$
 regimes.}
Furthermore, the influence of the feature dimension appears to be more pronounced than that of the network depth. This suggests that a greater number of features, despite their independence from graph topology, lead to potentially more privacy risks as transmitted through GNN representations. Conversely, augmenting the network depth does not necessarily correlate with an elevation in the success rate of \attack.\par

\textbf{Real-world data experiments} Given that the \er model may not sufficiently capture the complexity of real-world graph structures, we evaluated the \attack algorithm on $8$ diverse real-world graph datasets that exhibit contrasting patterns of feature similarity and edge formation. The analysis comprises the well-known Planetoid datasets \cite{yang2016revisiting}, which are distinguished by their high homophily; the heterophilic datasets Squirrel, Chameleon, and Actor \cite{pei2020geom}, which demonstrate a weak feature-edge correlation; and two larger-scale datasets, namely Amazon-Products \cite{zeng2019graphsaint} and Reddit \cite{hamilton2017inductive}. Dataset statistics are comprehensively detailed in appendix \ref{sec: homophily}. Half of the datasets analyzed manifest a strong positive correlation of feature similarity and edge presence, which is measured via the \auc of the estimator $\widehat{A}_{uv}^\text{FS}(\tau) = \mathbf{1}\left(\textsf{cos}(X_u, X_v) \ge \tau\right)$, while the other half show negligible correlations, an observation underscored in the baseline ($\widehat{A}^\text{FS}$) row of table \ref{tab: homophily}. In all evaluations, we standardize the hidden dimension to $d=128$, with the number of GNN layers adjusted to $L \in {2, 5}$. Our analysis extends beyond the linear aggregation scheme \eqref{eqn: linear_gnn} to encompass four additional prominent GNN architectures: GCN \cite{kipf2016semi}, GAT \cite{velickovic2018graph}, GIN \cite{xu2018powerful}, and SAGE \cite{hamilton2017inductive}. To discern the effect of training dynamics on the potency of attacks, we delineate results for both pre-training (i.e., random initialization) and post-training stages. A precise account of training methodologies can be found in appendix \ref{sec: attack_full}. Results pertaining to the linear GNN (LIN) and GCN are presented in table \ref{tab: homophily}, with a comprehensive evaluation reserved for appendix \ref{sec: attack_full}.
\begin{table}[]
    \centering
    \small
    \caption{Performances of \attack on eight datasets measured by \auc metric (\%). The feature homophily $\mathcal{H}_\text{feature}(G, X) = \frac{1}{\left|E\right|}\sum_{(u, v) \in E} \cos\left(X_u, X_v\right)$ is an alternate measure of correlation between feature similarity and edge presence. For each setup, the results (in the form of \result{\text{mean}}{\text{std}}) are obtained via $5$ random trials.}
    \resizebox{\textwidth}{!}{%
        \begin{tabular}{l c c c c c c c c}
    \toprule
      & Squirrel & Chameleon & Actor & Cora & Citeseer & Pubmed & Products & Reddit \\
    \midrule
    $\mathcal{H}_\text{feature}$ & $0.01$ & $0.01$ & $0.16$ & $0.17$ & $0.19$ & $0.27$ & $0.01$ & $0.12$ \\
    \midrule
    $\widehat{A}^\textsf{FS}$ & $46.2$ & $55.2$ & $44.7$ & $80.3$ & $87.4$ & $87.6$ & $52.0$ & $95.9$ \\
    \midrule
    Victim model & \multicolumn{8}{c}{$\widehat{A}^\attack$, non-trained}\\
    \midrule
    LIN($L=2$)& \result{72.8}{0.0}& \result{76.1}{0.2} & \result{73.1}{0.1} & \result{93.1}{0.4} & \result{92.5}{0.9} & \result{93.9}{1.2} & \result{97.2}{0.3} & \result{96.4}{0.1} \\
    LIN($L=5$)& \result{72.6}{0.0} & \result{76.0}{0.3} & \result{73.0}{0.2} & \result{95.9}{0.6} & \result{93.8}{0.4} & \result{96.0}{0.6} & \result{99.2}{0.1} & \result{95.4}{0.1} \\
    GCN($L=2$)& \result{87.3}{0.3} & \result{87.9}{0.4} & \result{87.1}{0.6} & \result{99.8}{0.1} & \result{99.9}{0.0} & \result{99.7}{0.0} & \result{99.6}{0.0} & \result{97.3}{0.1} \\
    GCN($L=5$)& \result{82.1}{0.3} & \result{84.3}{0.9} & \result{84.1}{0.9} & \result{99.4}{0.2} & \result{99.9}{0.0} & \result{99.5}{0.1} & \result{99.2}{0.1} & \result{96.1}{0.2} \\
    \midrule
    Victim model & \multicolumn{8}{c}{$\widehat{A}^\attack$, trained}\\
    \midrule
    LIN($L=2$)& \result{74.6}{0.0} & \result{75.0}{0.3} & \result{59.9}{0.7} & \result{94.6}{0.1} & \result{93.7}{0.1} & \result{89.0}{0.1} & \result{91.6}{0.2} & \result{94.7}{0.1}  \\
    LIN($L=5$)& \result{74.1}{0.3} & \result{76.9}{0.2} & \result{61.6}{0.7} & \result{94.8}{0.3} & \result{93.3}{0.3} & \result{88.4}{0.9} & \result{98.6}{0.1} & \result{92.3}{0.2} \\
    GCN($L=2$)& \result{79.4}{0.4} & \result{82.3}{0.3} & \result{78.5}{0.8} & \result{97.8}{0.1} & \result{99.0}{0.0} & \result{89.2}{0.3} & \result{94.5}{0.1} & \result{95.1}{0.1} \\
    GCN($L=5$)& \result{77.4}{0.6} & \result{80.6}{0.8} & \result{78.4}{0.6} & \result{97.4}{0.3} & \result{98.7}{0.2} & \result{92.6}{0.4} & \result{98.4}{0.1} & \result{95.0}{0.1} \\
    \bottomrule
\end{tabular}
    }
    \label{tab: homophily}
\end{table}
We have the following observations:\par
\textbf{Homophily is not necessary for \attack to succeed:} The efficacy of \attack on the Planetoid datasets aligns with expectations. However, the outcomes from $4$ heterophilic datasets illuminate significant privacy risks, despite a vacuous association between feature resemblance and edge formation. Notably, the Squirrel and Actor datasets, which demonstrate a mild negative feature-edge correlation, are still subject to substantial privacy breach, particularly with nonlinear models. These empirical findings support our theoretical assertion that a graph's sparsity plays a more pivotal role in its susceptibility to edge reconstruction attacks than the degree of homophily it exhibits. Moreover, in instances of comparatively denser networks, such as the Reddit dataset, the homophily of features can be exploited to mount more sophisticated attacks. \par
\textbf{Efficacy of Linear GNNs as Proxies for Nonlinear Counterparts:} Evidence presented in table \ref{tab: homophily} suggests that the trends exhibited by linear GNN models are broadly reflective of those displayed by their nonlinear, GCN equivalents. It is typically observed that the attack efficacy is modestly reduced in the linear GNN setting, with further details deferred to Appendix \ref{sec: attack_full}.\par
\textbf{Influence of Network Depth and Training Dynamics:} Table \ref{tab: homophily} indicates that the post-training performance of \attack is frequently less effective compared to the scenarios with randomly initialized weights. This observation may be attributed to the notion that supervised training tends to adversely affect the conditioning of weight matrices relative to their initialized state. Additionally, augmenting model depth does not correspond with enhanced attack efficacy, an outcome that is in alignment with our theoretical predictions.
\begin{figure}
    \centering
    \begin{subfigure}[b]{0.497\linewidth}
        \includegraphics[width=\linewidth]{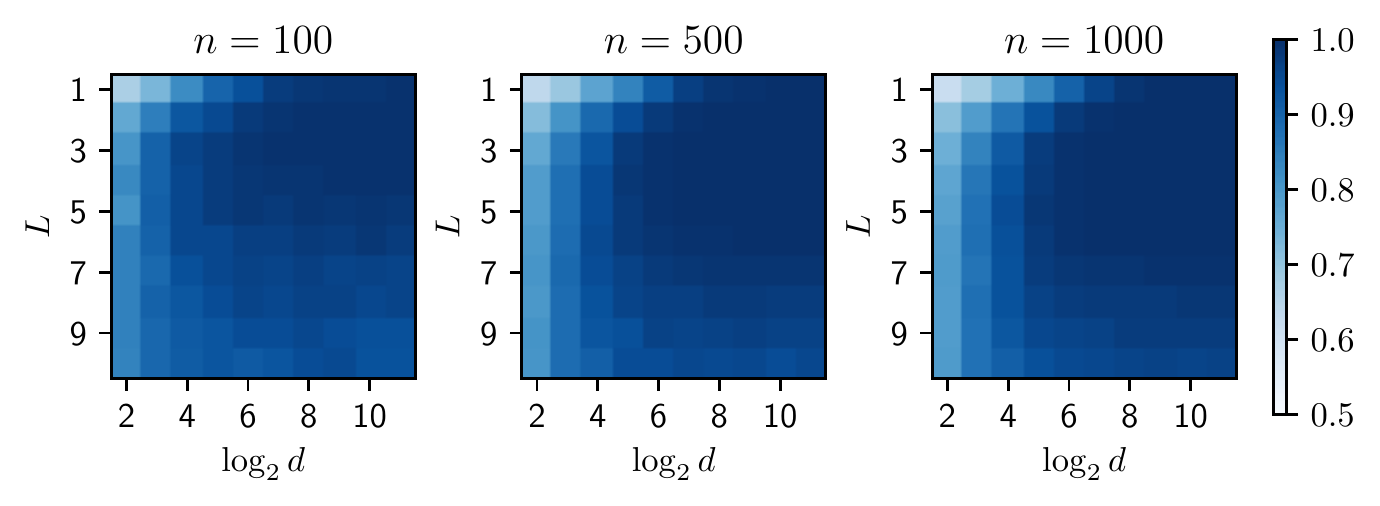}
        \caption{\er}\label{fig: syn_er}
    \end{subfigure}
    \begin{subfigure}[b]{0.497\linewidth}
        \includegraphics[width=\linewidth]{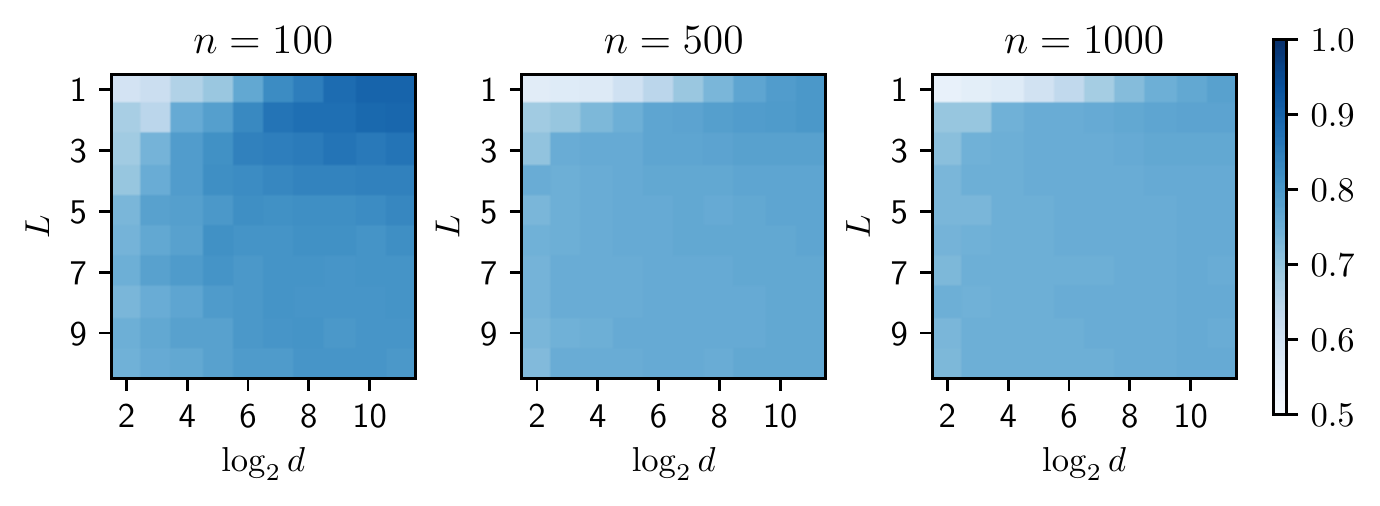}
        \caption{SBM}\label{fig: sbm}
    \end{subfigure}
    \caption{Attacking efficacy of \attack over sparse \er graphs and dense SBM graphs, with performance measured in \auc metric averaged over $5$ random trials for each configuration.}
    \label{fig: syn_er_sbm}
\end{figure}

\subsection{Deficiency of \attack on dense graphs}
\textbf{SBM experiments} In this section, we test \attack graph representations over SBM graphs with $K = 3, p = 0.3, q = 0.05$, with the rest of the experimental setups analogous to that in the \er experiments. The evaluations are presented in figure \ref{fig: syn_sbm}. The results reveal the presence of a pronounced barrier that hinders the success of the attack across a wide range of configurations corresponding to different network depths and feature dimensions. Furthermore, we observe that the results tend to stabilize as the size of the graph increases. We provide a further study on the impact of SBM structure in appendix \ref{sec: er_weights}.

\subsection{Mitigation of \attack through \nag}\label{sec: exp_planetoid}
In this section, we empirically study the defensive performance of noisy aggregation \eqref{eqn: noisy_agg} against \attack We will use the Planetoid datasets \cite{yang2016revisiting} for evaluation. We consider a transductive node classification setting and use the standard train-test splits. The GNN models are trained using the training labels and evaluated on the test nodes. The performances of \attack are evaluated on the subgraphs induced by the test nodes. 
We report the configuration of GNN encoding, as well as the attacking pipeline and training hyperparameters in appendix \ref{sec: config}. Due to space limits, we report results on the Cora dataset under GCN and GAT in the main text and postpone the complete report in appendix \ref{sec: planetoid_full}. We use the following two types of training configurations as proposed in section \ref{sec: defense}: \par
\textbf{Under the unconstrained scheme}, we use aggressive perturbation plans by applying noise with scale range $\sigma \in \{0, 1, 2, 4\}$, with $\sigma = 0$ indicating no protection, and $d \in \{2^i, 5 \le i \le 13\}$. \par
\textbf{Under the constrained scheme}, we adopt the spectral normalization technique \cite{miyato2018spectral} to control the spectral norm of each layer at approximately $1$ (with relative error $<10\%$). We use conservative perturbation plans by applying noise with scale range $\sigma \in \{0, 0.01, 0.05, 0.1, 0.5, 1\}$, and $d \in \{2^i, 5 \le i \le 13\}$. Note that with $\sigma = 1$, we obtain a non-vacuous lower bound according to \eqref{eqn: lb}. We present the evaluations in figure \ref{fig: nag_brief} and summarize our observations and findings as follows:\par
\begin{figure}
    \centering
    \begin{subfigure}[b]{0.23\linewidth}
        \includegraphics[width=\linewidth]{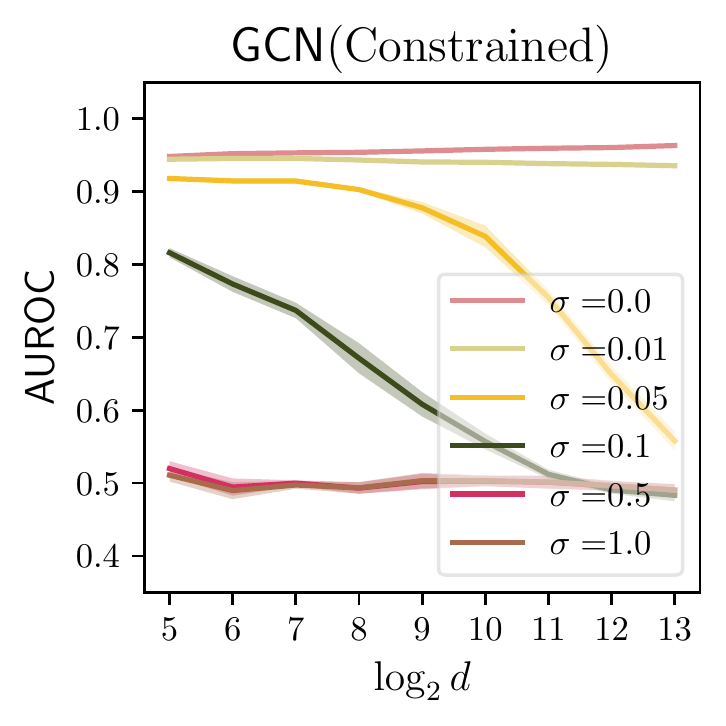}
        \caption{\attack performance}
    \end{subfigure}
    \begin{subfigure}[b]{0.23\linewidth}
        \includegraphics[width=\linewidth]{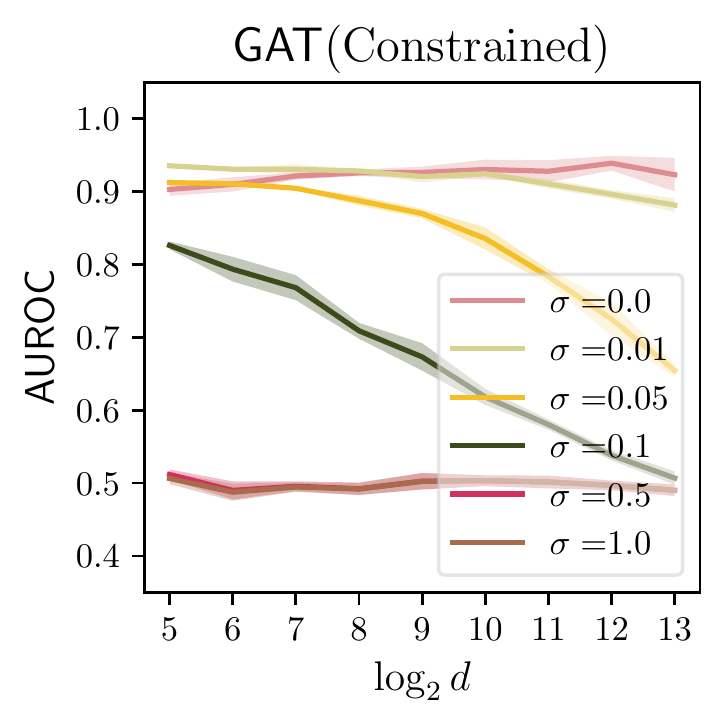}
        \caption{\attack performance}
    \end{subfigure}
    \begin{subfigure}[b]{0.23\linewidth}
        \includegraphics[width=\linewidth]{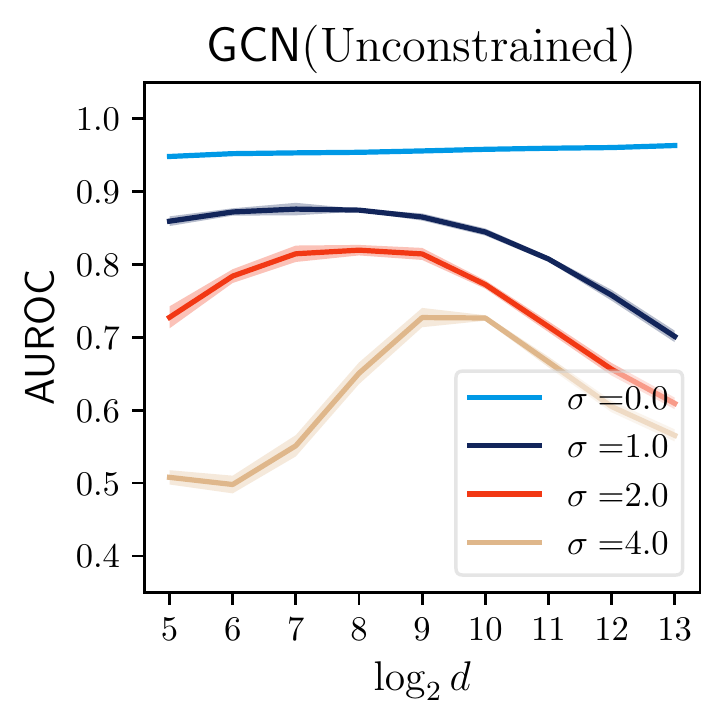}
        \caption{\attack performance}
    \end{subfigure}
    \begin{subfigure}[b]{0.23\linewidth}
        \includegraphics[width=\linewidth]{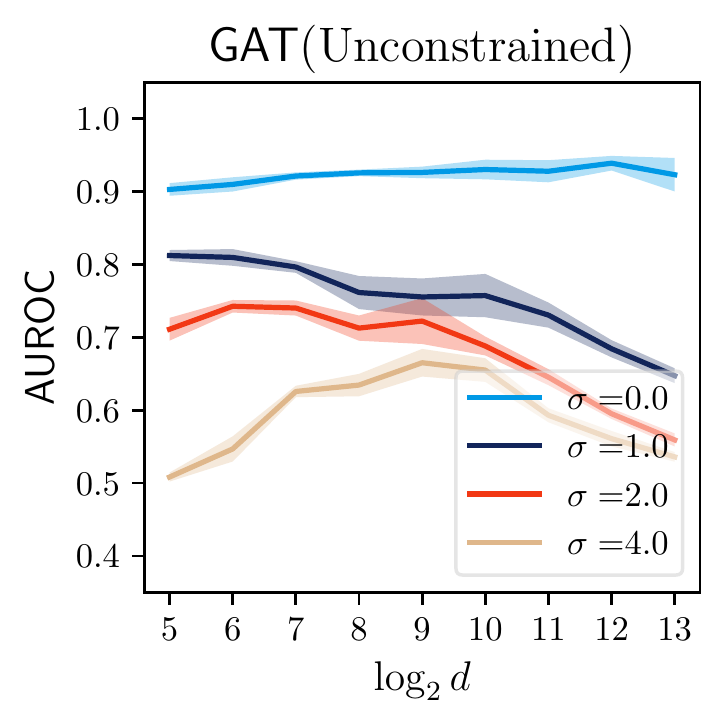}
        \caption{\attack performance}
    \end{subfigure}
    \begin{subfigure}[b]{0.23\linewidth}
        \includegraphics[width=\linewidth]{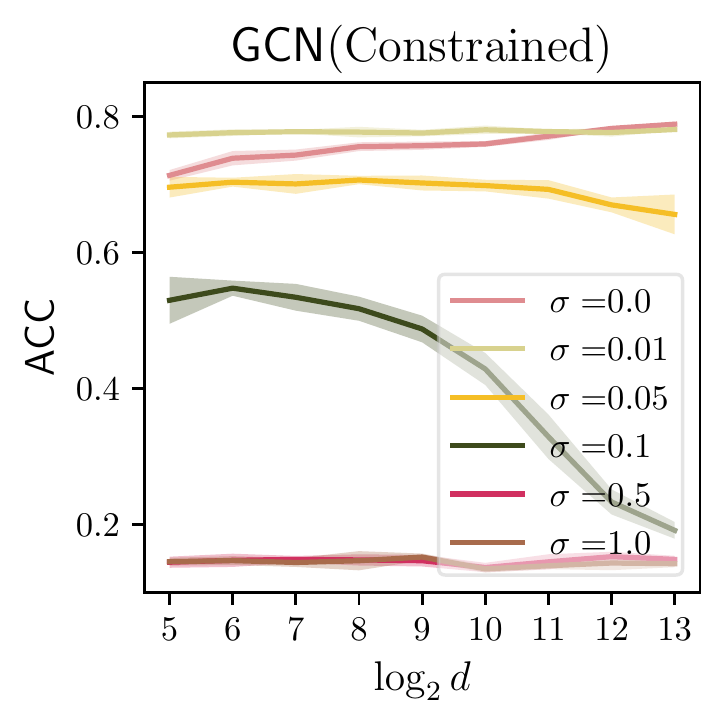}
        \caption{model performance}
    \end{subfigure}
    \begin{subfigure}[b]{0.23\linewidth}
        \includegraphics[width=\linewidth]{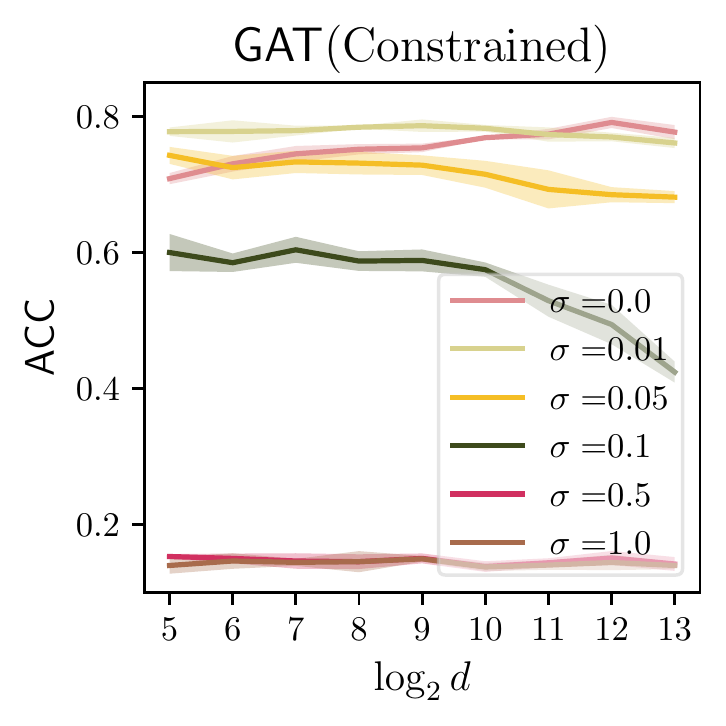}
        \caption{model performance}
    \end{subfigure}
    \begin{subfigure}[b]{0.23\linewidth}
        \includegraphics[width=\linewidth]{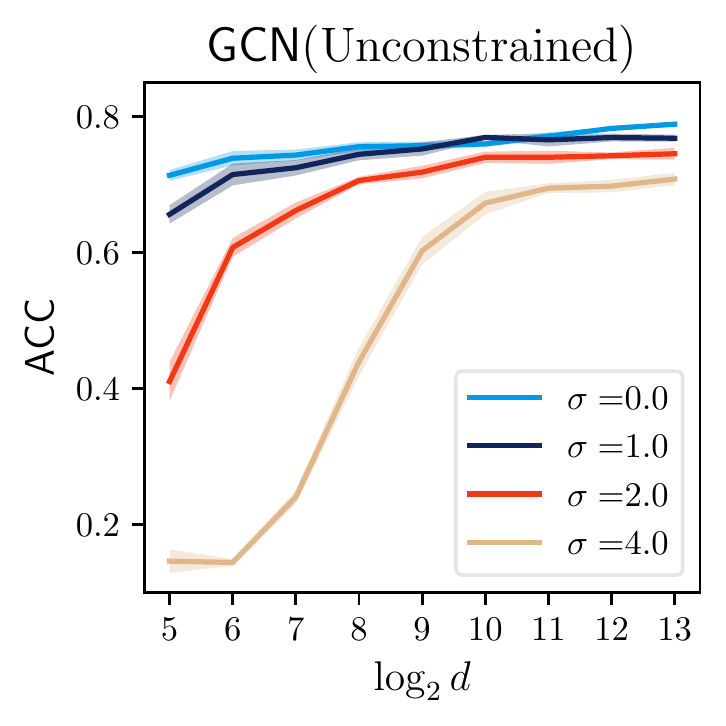}
        \caption{model performance}
    \end{subfigure}
    \begin{subfigure}[b]{0.23\linewidth}
        \includegraphics[width=\linewidth]{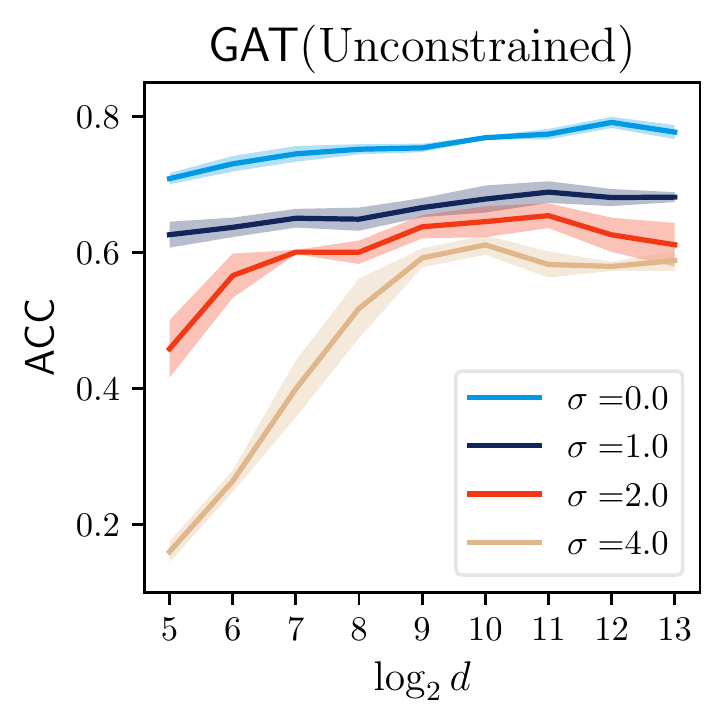}
        \caption{model performance}
    \end{subfigure}
    \caption{Privacy and utility assessments on the Cora dataset with underlying model of \nag being GCN and GAT. The first row contains attack performances of \attack measured using \auc metric under both constrained and unconstrained training scheme. The second row presents corresponding model performances.}
    \label{fig: nag_brief}
\end{figure}
\textbf{\attack empirically elicits privacy-utility trade-off under the constrained scheme} When the noise level is moderate, i.e., $\sigma \in \{0.01, 0.05\}$. The result demonstrates that privacy and utility are, at least to some extent, at odds: Under lower noise level, \attack is able to achieve non-trivial success especially when $d$ is small. Furthermore, raising the feature dimension $d$ results in both a decrease in utility as well as an increase in privacy. This is actually predictable: Since we explicitly control the operator norm to be around $1$, a larger $d$ implies a smaller "signal-to-noise ratio" with the signal being (loosely) defined as the magnitude of the aggregated node representations.\par
\textbf{\attack losses power against \nag using larger $d$s in the unconstrained scheme} A surprising evidence according to figure \ref{fig: nag_brief} is that when the feature dimension $d$ is sufficiently raised, i.e., $d > 1024$, the attacking performances degrade. Consequently, we are able to achieve decent protection against \attack (AUROC $<0.6$) while at the same time incurring slight degradation in model utility ($>0.7$ Accuracy in Cora) Moreover, the phenomenon is more evident for higher noise levels. While the outcome seems favorable insofar as we have identified GNN solutions that manifest both high performance and a degree of privacy since the training procedure is unrelated to the attacking mechanism, these solutions may exhibit diminished robustness, as the corresponding Lipschitz constants are likely to be inadequately regulated \cite{yang2020closer}. Due to space limits, we postpone a more detailed discussion to appendix \ref{sec: spectrum_study}. \par

\section{Discussion and conclusion}
In this paper, we have studied the behavior of the \attack adversary by characterizing its performance against different kinds of underlying graph structures as well as encoding mechanisms. Theoretically, we first identify sparse random graphs where \attack provably reconstructs the input graph, which ascertains the empirical findings of previous works. We then reveal limitations of \attack by showing its performance lower bounds when the input graph follows a dense SBM. Additionally, we discuss protection mechanisms to \attack by exploiting both theoretically and empirically the defensive capability of \nag. Empirical investigations corroborate with our theoretical findings, while suggesting intriguing phenomenons that questions the viability of \attack as a formal privacy auditing procedure for private graph representations. Notwithstanding, several research problems warrant further study, which we discuss in appendix \ref{sec: discussions} alongside with the limitations of this paper.

\newpage
\section{Acknowledgements}
The authors from Ant Group are supported by the Leading Innovative and Entrepreneur Team Introduction Program of Hangzhou (Grant No.TD2022005).
Guanhua Fang is partly supported by the National Natural Science Foundation
of China (nos. 12301376).

\bibliographystyle{abbrv}
\bibliography{reference}

\newpage
\appendix
\addcontentsline{toc}{section}{Appendix} 
\part{Appendix} 
\parttoc
\section{Broader impacts}\label{sec: broader_impacts}
The pervasive integration of graph representation learning (GRL) into various sectors, from social networks to bioinformatics, underscores the necessity of addressing the security and privacy risks inherent in these technologies. This paper contributes to the understanding of such risks by dissecting the structural vulnerabilities of graph representations under cosine-similarity-based edge reconstruction attacks (\attack). Our work has significant ethical implications and societal consequences, as we aim to balance the need for advanced data analytics with the imperative of safeguarding individual and community privacy.\par
Theoretically articulating the success and failure modes of \attack, our research offers a framework for evaluating GRL models against potential privacy breaches. The insights gained can guide the development of more secure algorithms that resist inadvertent information disclosure. By highlighting the efficacy of \attack in various settings, this paper also underscores the potential for such attacks to serve as auditing tools for privacy-preserving mechanisms, thereby fostering the creation of more trustworthy GRL systems.\par
As GRL technologies continue to evolve, our work calls attention to the importance of proactive privacy research in the field. It encourages the industry to adopt privacy-by-design principles and serves as a reminder to policymakers to consider the implications of GRL in legislation around data protection. Future societal consequences hinge on our ability to reconcile the benefits of GRL with the privacy rights of individuals, necessitating ongoing research, transparent practices, and informed governance to navigate this complex landscape.
\section{Additional backgrounds}
\subsection{VFL and the \attack adversary}\label{sec: vfl}
\begin{figure}[h]
    \centering
    \resizebox{0.85\textwidth}{!}{\tikzset{%
    VertArrow/.style = {%
        double equal sign distance,
        scarlet,
        -{Stealth[brightmaroon,scale=1.3,inset=2pt, angle=30:10pt]},
        semithick
    },
    HoriArrow/.style = {%
        -{Stealth[scale=1.5,inset=1pt, angle=20:7pt]},
        semithick
    },
    DArrow/.style = {%
        #1, 
        dashed,
        -{Stealth[scale=1.5,inset=1pt, angle=30:4pt]},
        semithick
    },
    Box/.style 2 args = {
        rectangle,
        draw = black,
        very thick,
        fill = #1!#2, 
        align=center,
        minimum width=1cm,
        minimum height = 0.5cm
    },	
    Frame/.style 2 args = {
        rectangle,
        draw = 	#1,
        fill = #1!#2,
        minimum width=18cm,
        minimum height = 8cm
    },
    MiniFrame/.style 2 args = {
        rectangle,
        draw = 	#1,
        fill = #1!#2,
        minimum width=6cm,
        minimum height = 3cm
    },
    WideBar/.style 2 args = {
        rectangle,
        draw = 	#1,
        fill = #1!#2,
        minimum width=15cm,
        minimum height = 1cm
    },
    MiniBar/.style 2 args = {
        rectangle,
        draw = 	#1,
        fill = #1!#2,
        minimum width=3cm,
        minimum height = 0.5cm
    },
}
\begin{tikzpicture}
    \begin{pgfonlayer}{background}
        \node [draw=mediumturquoise!50, fill=mediumturquoise!10, fit={(-7, 3) (0, 3) (0, 0) (-7, 0)}, inner sep=5.75pt] (partyA) {};
    \end{pgfonlayer}
    \begin{pgfonlayer}{background}
        \node [draw=brown!50, fill=brown!10, fit={(7, 3) (2.5, 3) (2.5, 0) (7, 0)}, inner sep=5.75pt] (partyB) {};
    \end{pgfonlayer}
    \node[] at ($(partyA.north) - (0, 4mm)$){\color{mediumturquoise}\texttt{Party A}};
    \node[] at ($(partyB.north) - (0, 4mm)$){\color{brown}\texttt{Party B}};
    \Vertex[color=antiquewhite, x=-5, y=0.5, size=.1]{n0},
    \Vertex[color=antiquewhite, x=-5.5, y=0.8, size=.1]{n1},
    \Vertex[color=antiquewhite, x=-4.5, y=0.8, size=.1]{n2},
    \Vertex[color=antiquewhite, x=-5.25, y=1.1, size=.1]{n3},
    \Vertex[color=antiquewhite, x=-4.75, y=1.1, size=.1]{n4},
    \Edge[](n1)(n0),
    \Edge[](n2)(n0),
    \Edge[](n3)(n0),
    \Edge[](n3)(n2),
    \Edge[](n4)(n2),
    \node[Box = {antiquewhite}{50}] at (-6, 2)(nodefeature){\small\texttt{node feature}};
    \node[Box = {white}{10}] at (-3, 2)(encoder){\small\texttt{encoder}};
    \node[Box = {white}{10}] at (-0.5, 2)(pre){\small\texttt{GNN}};
    \node[Box = {white}{10}] at (4, 2)(decoder){\small\texttt{decoder}};
    \node[Box = {white}{10}] at (6, 2)(loss){\small\texttt{loss}};
    \node[Box = {antiquewhite}{50}] at (6, 1)(label){\small\texttt{label}};
    \node[Box = {white}{10}] at (-3, 0.8)(sampler){\small\texttt{sampler}};
    \node[color=red] at ($(partyA.east) + (1, 0.8)$){\texttt{embeddings}};
    \node[] at ($(partyA.east) + (1, 0.2)$){\texttt{gradients}};
    \node[] at ($(n0.south) - (0, 0.2)$){\texttt{graph} $G$};
    \draw[HoriArrow](nodefeature.east)--(encoder.west);
    \draw[HoriArrow]($(encoder.east) + (0, 1mm)$)--($(pre.west) + (0, 1mm)$);
    \draw[DArrow={black}]($(pre.west) - (0, 1mm)$)--($(encoder.east) - (0, 1mm)$);
    \draw[HoriArrow]($(decoder.east) + (0, 1mm)$)--($(loss.west) + (0, 1mm)$);
    \draw[DArrow={black}]($(loss.west) - (0, 1mm)$)--($(decoder.east) - (0, 1mm)$);
    \draw[HoriArrow](label.north)--(loss.south);
    \draw[HoriArrow, color=red]($(pre.east) + (0, 1mm)$)--($(decoder.west) + (0, 1mm)$);
    \draw[DArrow={black}]($(decoder.west) - (0, 1mm)$)--($(pre.east) - (0, 1mm)$);
    \draw[HoriArrow](-4.25, 0.8)--(sampler.west);
    \draw[-](sampler.east)--(-0.5, 0.8);
    \draw[HoriArrow](-0.5, 0.8)--(pre.south);
\end{tikzpicture}}
    \caption{Illustration of a typical vertically federated graph representation learning scenario, the figure is adapted from \cite{wu2023privacy}.}
    \label{fig: vfl}
\end{figure}
We describe the scenario of vertically federated graph representation learning mentioned in section \ref{sec: intro} in more detail, with the system architecture illustrated in figure \ref{fig: vfl}. \par 
\textbf{VFL setup} Under this scenario, we assume that party A (on the left side of figure \ref{fig: vfl} holds the graph as well as the node features, and party B (on the right side of figure \ref{fig: vfl} holds the node labels. Such kind of scenarios are encountered in applications like financial risk management \cite{wu2023privacy}. Under VFL protocols, party A and party B iteratively exchange intermediate outputs to facilitate collaborative learning. The most representative method is split learning \cite{wu2023privacy}, where in each step, party A sends the node representations of a possibly sampled subgraph encoded using a graph neural network whose parameters are stored at party A. We call this operation the \textbf{forward transmission step} and highlighted in figure \ref{fig: vfl} as the {\color{red} red} arrow. \par
\textbf{\attack adversary} We assume that party B is honest-but-curious in the sense that party B strictly follows the VFL protocol but tries to infer the graph structure belonged to party A, both during training and during inference, as the forward step is required for both stages. The attack requires nothing more than the VFL protocol: In each step, the two parties agree on a list of node indices that participates in this step (typically carried out using some cryptographically secure primitives), which constitutes the potential victim nodes $V_\text{victim}$. Upon receiving the node embeddings from party A, party B is then free to conduct \attack that targets the topological structure of $G_\text{victim}$. Furthermore, party B can even target a larger subgraph via storing multiple batches of embeddings and conduct attacks based on the unioned collections. Note that the adversary does not have access to GNN model parameters as they are kept locally at party A, which manifests the practicality of the proposed \attack adversary.
\subsection{Measures of homophily}\label{sec: homophily}
The homophily metric is a way to describe the relationship between node properties and graph structure. Depending on the intrinsic nature of the property, we use two types of homophily measures: The label homophily (or edge homophily)
\begin{align}
    \mathcal{H}_\text{label}(G, Y) = \frac{1}{\left|E\right|}\sum_{(u, v) \in E} \mathbf{1}\left(Y_u = Y_v\right)
\end{align}
which measures the averaged agreement of adjacent nodes' labels and the feature homophily (or generalized homophily \cite{jin2022raw,luan2023when})
\begin{align}
    \mathcal{H}_\text{feature}(G, X) = \frac{1}{\left|E\right|}\sum_{(u, v) \in E} \cos\left(X_u, X_v\right)
\end{align}
which replaces the agreement measure in the definition of label homophily with the cosine similarity between adjacent nodes' features. Another important metric we use in the experiments is the \auc of guessing edge presence using feature similarities, i.e., $\widehat{A}_{uv}^\textsf{FS}(\tau) = \mathbf{1}\left(\cos(x_u, x_v) \ge \tau\right)$. Note that $\mathcal{H}_\text{feature}$ might be related to but not always correlate well with the \auc of $\widehat{A}^\textsf{FS}$, as $\mathcal{H}_\text{feature}$ ignores the feature similarity of non-edges.

The feature homophily metric is sometimes related to, but not always correlated with the 
\subsection{Representative GNNs and their corresponding aggregations rules}\label{sec: gnn}
Here we briefly review GNN architectures that are involved in theorem \ref{thm: lb_noisy_gnn} in the message-passing form as in \eqref{eqn: noisy_agg}:
\begin{description}
    \item[Mean pooling \cite{hamilton2017inductive}] This is the most standard form of message passing GNN. With the un-normalized and un-perturbed version analyzed in section \ref{sec: er_analysis} and \ref{sec: sbm_analysis}:
    \begin{align}\label{eqn: nag_mp_gcn}
        H_v^{(l)} = \relu\left(\frac{1}{d_v + 1}\sum_{u \in N(v) \cup \{v\}}\dfrac{W_l H_u^{(l-1)}}{\left\|H_u^{(l-1)}\right\|_2} + \epsilon \right)
        \tag{SAGE-meanpool}
    \end{align}
    \item[Summation pooling \cite{xu2018powerful}] This is a simplified version of the GIN model which is also analyzed in \cite{wu2023privacy}:
    \begin{align}\label{eqn: nag_gin}
        H_v^{(l)} = \relu\left(\sum_{u \in N(v) \cup \{v\}}\dfrac{W_l H_u^{(l-1)}}{\left\|H_u^{(l-1)}\right\|_2} + \epsilon \right)
        \tag{GIN}
    \end{align}
    \item[Max pooling \cite{hamilton2017inductive}] In its un-normalized and un-perturbed version, this corresponds to the mostly used SAGE model:
    \begin{align}\label{eqn: nag_max}
        H_v^{(l)} = \relu\left(\max_{u \in N(v) \cup \{v\}}\dfrac{W_l H_u^{(l-1)}}{\left\|H_u^{(l-1)}\right\|_2} + \epsilon \right)
        \tag{SAGE-maxpool}
    \end{align}
    \item[GCN pooling \cite{kipf2016semi}] The GCN pooling takes the form
    \begin{align}\label{eqn: nag_gcn}
        H_v^{(l)} = \relu\left(\frac{1}{\sqrt{d_v + 1}}\sum_{u \in N(v) \cup \{v\}}\dfrac{W_l H_u^{(l-1)}}{\sqrt{d_u + 1}\left\|H_u^{(l-1)}\right\|_2} + \epsilon \right)
        \tag{GCN}
    \end{align}
    \item[Attentive pooling \cite{velickovic2018graph}] This is also know as the GAT model. To simplify notations, let $\tilde{H}_v^{(l)} = H_v^{(l)} / \left\|H_v^{(l)}\right\|_2$, then the GAT model is recursively defined as
    \begin{align}\label{eqn: nag_gat}
        \begin{aligned}
            &H_v^{(l)} = \relu\left(\sum_{u \in N(v) \cup \{v\}}\alpha_{uv} W_l \tilde{H}_u^{(l-1)} + \epsilon \right) \\
            &\alpha_{uv} = \dfrac{\exp\left(\textsf{LeakyReLU}\left(\langle \beta_\text{src}, W_l \tilde{H}_u^{(l-1)} \rangle + \langle \beta_\text{dst}, W_l \tilde{H}_v^{(l-1)} \rangle\right)\right)}{\sum_{u \in N(v) \cup \{v\}}\exp\left(\textsf{LeakyReLU}\left(\langle \beta_\text{src}, W_l \tilde{H}_v^{(l-1)} \rangle + \langle \beta_\text{dst}, W_l \tilde{H}_v^{(l-1)} \rangle\right)\right)}
        \end{aligned}
        \tag{GAT}
    \end{align}
    where $\beta_\text{src}, \beta_\text{dst} \in \mathbb{R}^d$ are learnable vector parameters.
\end{description}

\section{Proofs of Theorems}\label{sec: thm_proofs}

\subsection{Proof of theorem \ref{thm: era_er} }

In the proof, for notational simplicity, we abuse notation by treating $A = A + I$ and $D = D + I$ (i.e., self-edge is included in the edge graph).
We then define $A^{(L)} := A \cdot \underbrace{...}_{L~\text{times}} \cdot A$ and $p_{ij}^{(L)} := ((D^{-1}A)^L)_{ij}$.

\textit{Proof of the theorem.}~
For any pair of two nodes $i$ and $j$, we next recall the formula of cosine similarity, $\cos\theta(H_i^{(L)}, H_j^{(L)})$, 
\begin{eqnarray}
    \cos\theta(H_i^{(L)}, H_j^{(L)}) 
    := \frac{\langle H_i^{(L)}, H_j^{(L)}\rangle }{\sqrt{\langle H_i^{(L)}, H_i^{(L)}\rangle \cdot \langle H_j^{(L)}, H_j^{(L)}\rangle}},
\end{eqnarray}
which will be used recurrently in the following main proof.

According to the generation mechanism of node features (i.e., isotropic Gaussian assumption), we have that 
\begin{eqnarray}\label{eq:step0}
    |\frac{1}{d} \|X_j\|^2 - 1| \leq 3 \sqrt{\frac{\log n}{d}} ~ 
    \text{and} ~ 
    |\frac{1}{d}\langle X_j, X_{j'}\rangle| \leq 3\sqrt{\frac{\log n}{d}}
\end{eqnarray}
    for all $j, j'$ with probability at least $1 - 1/n^2$.

\textit{Case 1: without considering the learnable weight matrix $W$.}~
For the numerator in $ \cos \theta(H_{i_1}^{(L)}, H_{i_2}^{(L)})$, when $i_1$ and $i_2$ are truly connected, we have 
\begin{eqnarray}\label{eq:inner:connect}
\langle H_{i_1}^{(L)}, H_{i_2}^{(L)} \rangle
&=& \sum_{j=1}^n p_{i_1j}^{(L)} p_{i_2j}^{(L)} \|X_j\|^2 
+ \sum_{j \neq j'} 
p_{i_1j}^{(L)} p_{i_2j'}^{(L)}
\langle X_{j}, X_{j'} \rangle \nonumber \\
&\geq& 
p_{i_1 i_1}^{(L)} p_{i_2i_1}^{(L)} \|X_{i_1}\|^2 
+ \sum_{j \neq j'} 
p_{i_1j}^{(L)} p_{i_2j'}^{(L)}
\langle X_{j}, X_{j'} \rangle \nonumber \\
&\geq& \frac{1}{|\mathcal N_{i_1}^{(L)}||\mathcal N_{i_2}^{(L)}|} \|X_{i_1}\|^2
+ \sum_{j \neq j'} 
p_{i_1j}^{(L)} p_{i_2j'}^{(L)}
\langle X_{j}, X_{j'} \rangle \nonumber\\
&\geq& \frac{1}{(C_2 \log n)^{2L}} - 3\sqrt{\frac{\log n}{d}} ~~ (\text{use the fact that}~ \sum_{j \neq j'} 
p_{i_1j}^{(L)} p_{i_2j'}^{(L)} \leq 1) \nonumber \\
&\geq& \frac{2}{3} \cdot \frac{1}{(C_2 \log n)^{2L}}
\end{eqnarray}
when $d > 9 (C_2 \log n)^{4L + 2}\cdot \log n$.
On the other hand, when $i_1$ and $i_2$ are not connected, by Lemma \ref{lem:size:2}, we know that, with high probability, there are at most $\frac{(C_2 \log n)^{2L}}{n} \cdot n(n-1)/2$ pairs of $i_1, i_2$ such that $\sum_j A_{i_1j}^{(L)} A_{i_2j}^{(L)} \geq 1$ which is equivalent to $\sum_j p_{i_1j}^{(L)} p_{i_2j}^{(L)} > 0$.
For the rest of pairs, we have 
\begin{eqnarray}\label{eq:dis:inner:connect}
\langle H_{i_1}^{(L)}, H_{i_2}^{(L)} \rangle
&=&  \sum_{j \neq j'} 
p_{i_1j}^{(L)} p_{i_2j'}^{(L)}
\langle X_{j}, X_{j'} \rangle \nonumber \\
&\leq& 3 \sqrt{\frac{\log n}{d}}  \nonumber \\
&<& \frac{1}{3} \cdot \frac{1}{(C_2 \log n)^{3L + 1}},
\end{eqnarray}
when $d > 9 (C_2 \log n)^{6L + 2}\cdot \log n$.

For the denominator
$(\|H_{i_1}^{(L)}\| \cdot \|H_{i_2}^{(L)}\|)^{1/2}$, we give the upper and lower bounds of $\|H_{i}^{(L)}\|$. 
We can compute 
\begin{eqnarray}\label{eq:dis:norm}
\langle H_{i}^{(L)}, H_{i}^{(L)} \rangle
&=&  \sum_{j=1}^n p_{ij}^{(L)} p_{ij}^{(L)} \|X_j\|^2 + \sum_{j \neq j'} 
p_{ij}^{(L)} p_{ij'}^{(L)}
\langle X_{j}, X_{j'} \rangle \nonumber \\
&\leq& 1 + 3 \sqrt{\frac{\log n}{d}} \nonumber \\
&<& 1 + \frac{1}{3} \cdot \frac{1}{(C_2 \log n)^{2L + 1}},
\end{eqnarray}
where we use the fact that $\sum_j p_{ij}^{(L)} p_{ij}^{(L)} \leq 1$.
Conversely, we have 
\begin{eqnarray}\label{eq:dis:norm2}
\langle H_{i}^{(L)}, H_{i}^{(L)} \rangle
&=&  \sum_{j=1}^n p_{ij}^{(L)} p_{ij}^{(L)} \|X_j\|^2 + \sum_{j \neq j'} 
p_{ij}^{(L)} p_{ij'}^{(L)}
\langle X_{j}, X_{j'} \rangle \nonumber \\
&\geq& \frac{1}{(C_2 \log n)^L} - 3\sqrt{\frac{\log n}{d}} \nonumber \\
&\geq& \frac{1}{(C_2 \log n)^L} - \frac{1}{3} \cdot \frac{1}{(C_2 \log n)^{2L+1}},
\end{eqnarray}
where we use the fact that $\sum_j p_{ij}^{(L)} p_{ij}^{(L)} \geq 1 / (C_2 \log n)^L$ when $\mathcal |N_i^{(L)}| \leq (C_2 \log n)^L$.

To sum up, $ \cos \theta(H_{i_1}^{(L)}, H_{i_2}^{(L)})$ is at least
\begin{eqnarray}
    \frac{2}{3} \cdot \frac{1}{(C_2 \log n)^{2L}} / (1 + \frac{1}{3 (C_2 \log n)^{2L+1}}) 
    \geq \frac{1}{2} \cdot \frac{1}{(C_2 \log n)^{2L}}
\end{eqnarray}
 when node $i_1$ and $i_2$ are connected.
On the other hand, 
$ \cos \theta(H_{i_1}^{(L)}, H_{i_2}^{(L)})$ is at most 
\begin{eqnarray}
    \frac{1}{3} \cdot \frac{1}{(C_2 \log n)^{3L + 1}} / (\frac{1}{(C_2 \log n)^L} - \frac{1}{3} \cdot \frac{1}{(C_2 \log n)^{2L+1}}) 
    < \frac{1}{2} \cdot \frac{1}{(C_2 \log n)^{2L}}
\end{eqnarray}
 for all pairs (except at most $\frac{(C_2 \log n)^{2L}}{n} \cdot n(n-1)/2$ pairs) of disconnected nodes $i_1$ and $i_2$.

By choosing the cutoff $\tau = \frac{1}{2} \cdot \frac{1}{(C_2 \log n)^{2L}}$, with probability at least $1 - 2/n^2$, we have the false negative is zero and the false positive is 
$\frac{(C_2 \log n)^{2L}}{n}$.

\textit{Case 2: with considering the learnable weight matrix $W$.}~
Additionally, if the learnable weight $W$ is taken into account, we can derive the following results.
We define $\kappa_1$ and $\kappa_2$ to be the largest and smallest positive constants such that 
\[\kappa_1 \langle X,  X' \rangle \leq  \langle W X, W X' \rangle
\leq \kappa_2 \langle  X,  X' \rangle \]
holds.
It is easy to see that $\kappa_2 / \kappa_1 = (\kappa(W))^2$.
Then the parallel version of \eqref{eq:inner:connect} becomes 
\begin{eqnarray}\label{eq:parallel:1}
    \langle H_{i_1}^{(L)}, H_{i_2}^{(L)} \rangle \geq \kappa_1 \frac{2}{3} \frac{1}{(C_2 \log n)^{2L}}.
\end{eqnarray}
The parallel version of \eqref{eq:dis:inner:connect} becomes
\begin{eqnarray}\label{eq:parallel:2}
    \langle H_{i_1}^{(L)}, H_{i_2}^{(L)} \rangle \leq 3 \kappa_2  \sqrt{\frac{\log n}{d}}.
\end{eqnarray}
The parallel version of \eqref{eq:dis:norm} becomes
\begin{eqnarray}\label{eq:parallel:3}
    \langle H_{i}^{(L)}, H_{i}^{(L)} \rangle \leq \kappa_2 (1 + 3 \sqrt{\frac{\log n}{d}}).
\end{eqnarray}
The parallel version of \eqref{eq:dis:norm2} becomes
\begin{eqnarray}\label{eq:parallel:4}
    \langle H_{i_1}^{(L)}, H_{i_2}^{(L)} \rangle \geq  \kappa_1  (\frac{1}{(C_2 \log n)^{L}} - 3 \sqrt{\frac{\log n}{d}}).
\end{eqnarray}
Combining above results, we have that 
\begin{eqnarray}\label{eq:kappa:1}
\cos \theta(H_{i_1}^{(L)}, H_{i_2}^{(L)}) 
&\geq& \frac{\kappa_1 \frac{2}{3} \frac{1}{(C_2 \log n)^{2L}}}{\kappa_2 (1 + 3 \sqrt{\frac{\log n}{d}})} \nonumber \\
&\geq& \frac{\kappa_1}{2 \kappa_2} \frac{1}{(C_2 \log n)^{2L}} =: \text{cut}_1(L)
\end{eqnarray}
when $i_1, i_2$ are connected and $d \gg \log^2 n$
and
\begin{eqnarray}\label{eq:kappa:2}
\cos \theta(H_{i_1}^{(L)}, H_{i_2}^{(L)}) 
&\leq& \frac{3 \kappa_2  \sqrt{\frac{\log n}{d}}}{ \kappa_1  (\frac{1}{(C_2 \log n)^{L}} - 3 \sqrt{\frac{\log n}{d}})}  \nonumber \\
&\leq& \frac{4 \kappa_2}{\kappa_1} \frac{\sqrt{\frac{\log n}{d}}}{\frac{1}{(C_2 \log n)^{L}}} =: \text{cut}_2(L)
\end{eqnarray}
when $i_1, i_2$ are not connected and $d \gg (C_2 \log n)^{6L + 2} \log n$.

Therefore as long as $d \gg (C_2 \log n)^{6L + 2} \log n$ and 
\begin{eqnarray}\label{cond:kappa}
(\frac{\kappa_1}{\kappa_2})^2 \geq 
8 (C_2 \log n)^{3L} \cdot \sqrt{\frac{\log n}{d}}
\end{eqnarray}
holds, 
we can choose any cutoff $\tau$ between $\text{cut}_1(L)$ and $\text{cut}_2(L)$ so that false negative rate is zero and false positive rate is no larger than 
$(C_2 \log n)^{2L} / n$.
This completes the proof regarding \fpr and \fnr. For the implications in \auc, the result follows immediately by noting the discoveries are montone in $\tau$.

\paragraph{Supporting Lemma of Theorem \ref{thm: era_er}}
The following Lemmas are used for controlling the number of pairs of nodes $(u,v)$'s which satisfy $\sum_j A_{uj}^{(L)} A_{vj}^{(L)} \geq 1$.

\begin{lemma}\label{lem:concentration}
Let $B(n,p)$ denote the binomial distribution with probability $p$ and size $n$.
\begin{itemize}
    \item[1.] Suppose $X$ dominates $B(n,p)$. For any $a > 0$, we have 
\begin{eqnarray}
    \mathbb P(X < np - a) \leq \exp\{- a^2 / 2np\}.
\end{eqnarray}
    \item[2.] 
    Suppose $X$ is dominated by $B(n,p)$. For any $a > 0$, we have 
\begin{eqnarray}
    \mathbb P(X > np + a) \leq \min\{\exp\{- a^2 / 2np + a^3 / (np)^3\}, \exp\{- \frac{a^2}{2 np + 2a / 3}\} \}.
\end{eqnarray}
\end{itemize}
\end{lemma}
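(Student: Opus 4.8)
The plan is to split the argument into a stochastic-dominance reduction followed by the classical Chernoff bound for the binomial, treating the two items separately only at the very last step. First I would observe that in item 1 the hypothesis that $X$ dominates $B(n,p)$ means $\mathbb P(X \le t) \le \mathbb P(B(n,p) \le t)$ for every $t$, so in particular $\mathbb P(X < np - a) \le \mathbb P(B(n,p) < np - a)$; dually, in item 2 the hypothesis that $X$ is dominated by $B(n,p)$ gives $\mathbb P(X > np + a) \le \mathbb P(B(n,p) > np + a)$. This reduces both claims to tail bounds for an exact binomial, so for the remainder I may assume $X = \sum_{i=1}^n Y_i$ with $Y_i$ i.i.d.\ $\text{Ber}(p)$ and only the two one-sided binomial estimates remain.

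For the binomial tails I would run the exponential moment method, using the factorization $\mathbb E[e^{\lambda X}] = (1 - p + p e^{\lambda})^n$. Applying Markov's inequality to $e^{-\lambda X}$ with $\lambda > 0$ and optimizing over $\lambda$ produces the relative-entropy bound $\mathbb P(X < np - a) \le \exp\{-np\,[(1-\delta)\log(1-\delta) + \delta]\}$ with $\delta = a/(np)$; the elementary convexity inequality $(1-\delta)\log(1-\delta) + \delta \ge \delta^2/2$ then collapses the exponent to $a^2/(2np)$, which is exactly item 1 (the bound being vacuously true once $a > np$). For item 2 the same method applied to $e^{\lambda X}$ yields $\mathbb P(X > np + a) \le \exp\{-np\,[(1+\delta)\log(1+\delta) - \delta]\}$, and the two expressions inside the $\min$ arise from two different lower bounds on the single exponent $g(\delta) = (1+\delta)\log(1+\delta) - \delta$: a Taylor-type estimate $g(\delta) \ge \delta^2/2 - O(\delta^3)$ gives the first summand, while the Bennett--Bernstein inequality $g(\delta) \ge \frac{\delta^2/2}{1 + \delta/3}$ gives the second, which rewrites as $\exp\{-a^2/(2np + 2a/3)\}$ after substituting $\delta = a/(np)$.

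The hard part will be bookkeeping rather than anything conceptual: one must check that each relaxation of $g(\delta)$ is valid for \emph{all} $a > 0$ and match the stated constants, noting that the Taylor-type form is the tighter one in the moderate-deviation regime whereas the Bernstein form $\exp\{-a^2/(2np + 2a/3)\}$ is the one that survives when $a$ is large compared with $np$; taking the minimum records both regimes in a single inequality. Everything upstream---the dominance reduction and the moment-generating-function factorization---is routine, so I expect no probabilistic subtlety beyond standard concentration machinery.
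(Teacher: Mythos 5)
Your stochastic-dominance reduction, your proof of item 1, and your derivation of the Bernstein term $\exp\{-a^2/(2np+2a/3)\}$ via $g(\delta)=(1+\delta)\log(1+\delta)-\delta\ge \frac{\delta^2/2}{1+\delta/3}$ are all correct; the paper omits this proof as ``standard,'' so those parts need no comparison. The genuine gap is precisely the step you wave off as bookkeeping: the claim that the Taylor relaxation ``gives the first summand.'' Substituting $\delta=a/(np)$ into $g(\delta)\ge\delta^2/2-\delta^3/6$ yields the exponent $-a^2/(2np)+a^3/(6(np)^2)$, i.e.\ a correction of order $a^3/(np)^2$, whereas the lemma asserts $a^3/(np)^3$. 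These do not match: whenever $np>6$ the stated bound is strictly smaller than the one your Taylor step produces, so your argument does not imply it.

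Moreover, no argument can, because the first expression in the $\min$ is false as stated. Take $X=B(n,p)$ itself with $n=10^9$, $p=10^{-3}$ (so $np=10^6$) and $a=10^5$ (so $\delta=0.1$). The stated bound equals $\exp\{-5000+10^{-3}\}$, but a Stirling estimate of the single term $\mathbb{P}(X=1.1\,np)$ gives $\mathbb{P}(X>np+a)\ge \exp\{-n\,\mathrm{KL}(1.1p\,\|\,p)-O(\log np)\}\approx\exp\{-4854\}$, which is larger: the true decay exponent here is $np(\delta^2/2-\delta^3/6)+O(\log np)$, and the deficit $np\delta^3/6\approx 167$ dwarfs both $\delta^3$ and the logarithmic correction. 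Since ``$\mathbb{P}\le\min\{A,B\}$'' requires both bounds to hold, item 2 as written is itself erroneous; the statement (and your proof) become correct once $(np)^3$ is replaced by $(np)^2$, with any constant at least $1/6$, which is exactly what your Taylor relaxation delivers. This defect is harmless downstream, since the paper only invokes the lemma for the degree concentration $\frac{C_1}{2}\log n\le\sum_j A_{ij}\le\frac{3C_1}{2}\log n$, which follows from the Bernstein half that you did establish; but a complete write-up must flag and fix the exponent rather than promise to ``match the stated constants,'' which is impossible.
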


Proof of Lemma \ref{lem:concentration} is standard and we omit it here. The consequences of this Lemma is that
$\frac{C_1}{2} \log n \leq \sum_{j} A_{ij} \leq \frac{3 C_1}{2} \log n$ with high probability at least $1 - 1/n^2$ for all node $i \in [n]$.




\begin{lemma}\label{lem:size:2}
    Given a graph with edge probability $p$ ($p \leq C_1 \frac{\log n}{n}$), then 
    \begin{eqnarray}
        \mathbb P( \sum_j A_{i_1 j}^{(L)} A_{i_2 j}^{(L)} \geq 1) \leq  \frac{(C_2 \log n)^{2L}}{n - 1},
    \end{eqnarray}
    where $i_1, i_2$ are two nodes uniformly randomly sampled from the graph.
\end{lemma}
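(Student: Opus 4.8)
The plan is to recognize that $\sum_j A_{i_1 j}^{(L)} A_{i_2 j}^{(L)}$ is a walk-counting object, reduce the event $\{\sum_j A_{i_1 j}^{(L)} A_{i_2 j}^{(L)} \ge 1\}$ to a graph-distance condition, and then bound its probability through the size of a multi-hop neighborhood, which Lemma \ref{lem:concentration} controls.

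First, under the self-loop convention $A = A + I$ in force throughout the proof, $A_{ij}^{(L)}$ counts walks of length $L$ from $i$ to $j$ and is a nonnegative integer; hence $A_{i_1 j}^{(L)} A_{i_2 j}^{(L)} \ge 1$ if and only if $j$ is reachable within $L$ steps from both $i_1$ and $i_2$. Because a self-loop lets a walk idle, a length-$L$ walk to $j$ exists exactly when $\mathrm{dist}(\cdot,j)\le L$, so $\sum_j A_{i_1 j}^{(L)} A_{i_2 j}^{(L)} \ge 1$ is precisely the event $\mathcal N_{i_1}^{(L)} \cap \mathcal N_{i_2}^{(L)} \ne \emptyset$. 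By the triangle inequality this intersection is nonempty if and only if $\mathrm{dist}(i_1,i_2) \le 2L$, i.e. $i_2 \in \mathcal N_{i_1}^{(2L)}$. This reduction is the \emph{conceptual heart} of the argument.

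Second, I would bound the neighborhood size deterministically on a high-probability event. Conditioning on the graph, every vertex within distance $2L$ of $i_1$ is the endpoint of some length-$2L$ walk, so $|\mathcal N_{i_1}^{(2L)}| \le \Delta^{2L}$, where $\Delta := \max_i \sum_j A_{ij}$ is the maximum (self-loop-augmented) degree. By the consequence of Lemma \ref{lem:concentration}, with probability at least $1 - 1/n^2$ one has $\Delta \le \tfrac{3C_1}{2}\log n$ simultaneously over all vertices; choosing $C_2$ so that $C_2\log n \ge \tfrac{3C_1}{2}\log n$ gives $\max_i |\mathcal N_i^{(2L)}| \le (C_2\log n)^{2L}$ on this good event $\mathcal E$.

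Finally, I would separate the two sources of randomness. On $\mathcal E$, averaging over the uniformly sampled ordered pair gives $\mathbb P(i_2 \in \mathcal N_{i_1}^{(2L)} \mid G) = \frac{1}{n(n-1)}\sum_{i_1}(|\mathcal N_{i_1}^{(2L)}| - 1) \le (C_2\log n)^{2L}/(n-1)$, and the law of total probability over the graph adds only the contribution of $\mathcal E^c$, which is at most $1/n^2$ and is negligible against the main term (absorbable into a marginally larger $C_2$). The step I expect to be the main obstacle is managing the interplay of the two randomness sources: the neighborhood sizes are themselves graph-dependent random quantities, so the clean bound requires the degree control of Lemma \ref{lem:concentration} to hold uniformly over \emph{all} vertices, not merely the sampled pair, in order to replace the conditional sampling probability by a single deterministic quantity.
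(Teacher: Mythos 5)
Your proof is correct, and it reaches the paper's bound by a genuinely different decomposition. The paper keeps the common neighbor explicit: for each fixed witness $j$, the event requires both sampled nodes to land in $\mathcal{N}_j^{(L)}$, which by symmetry has probability at most $|\mathcal{N}_j^{(L)}|\,(|\mathcal{N}_j^{(L)}|-1)/(n(n-1))$, and it concludes with a union bound over $j$ together with the neighborhood bound $|\mathcal{N}_j^{(L)}| \le (1.5\,C_1\log n)^{L}$. You instead eliminate the witness via the triangle inequality: $\sum_j A^{(L)}_{i_1 j}A^{(L)}_{i_2 j}\ge 1$ if and only if $\mathrm{dist}(i_1,i_2)\le 2L$ (your self-loop padding argument correctly supplies both implications), so you need only the single-node bound $|\mathcal{N}_{i_1}^{(2L)}|\le \Delta^{2L}\le (C_2\log n)^{2L}$ and one averaging step over the sampled pair, with no union bound over intermediate nodes. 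Both routes consume exactly the same ingredient—the uniform degree bound that follows from Lemma \ref{lem:concentration}—and yield the same constant. What your route buys: it is more elementary, and it is more scrupulous about the two layers of randomness. You condition on the good degree event $\mathcal{E}$, replace the conditional sampling probability by a deterministic quantity uniform over $i_1$, and then account for $\mathbb{P}(\mathcal{E}^c)\le 1/n^2$ explicitly, absorbing it into the constant (this absorption is legitimate but does require enlarging $C_2$, e.g.\ $C_2 = 1.5\,C_1+1$ suffices since $(a+1)^{2L}\ge a^{2L}+1$ for $a\ge 0$). The paper's write-up, by contrast, implicitly works on the good event and states the final inequality without any additive failure-probability term, so your bookkeeping is, if anything, tighter than the original. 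What the paper's route buys is mainly brevity on the page; downstream, the main theorem only uses the per-pair probability (to count bad pairs via expectation), so either formulation plugs in equally well.
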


\begin{proof}[Proof of Lemma \ref{lem:size:2}]
By recalling the definition of $A_{ij}^{(L)}$ that 
$A_{ij}^{(L)}$ equals one only when node $i$ and node $j$ can be connected within a path of length $L$.
Therefore, with probability at least $1 - 1/n^2$, it holds 
$|\mathcal N_j^{(L)}| \leq (\frac{3 C_1 \log n}{2})^{L}$, where $\mathcal N_j^{(L)} := \{i: A_{ij}^{(L)} = 1\}$

Note that, given fixed $j$, $A_{i_1 j}A_{i_2 j}$ is greater than 0 only if $i_1, i_2 \in \mathcal N_j^{(L)}$. By the symmetry, we know that this happens with probability at most $ \frac{|\mathcal N_j^{(L)}|(|\mathcal N_j^{(L)}| - 1)}{n(n-1)}$ when $i_1, i_2$ are uniformly randomly sampled.
Therefore, by union bound, we have 
\begin{eqnarray}
    \mathbb P( \sum_j A_{i_1 j}^{(L)} A_{i_2 j}^{(L)} \geq 1) &\leq& \sum_j \frac{|\mathcal N_j^{(L)}|(|\mathcal N_j^{(L)}| - 1)}{n(n-1)} \nonumber \\
    &\leq& \frac{(1.5 C_1 \log n)^{2L}}{n-1},
\end{eqnarray}
which concludes the proof.
\end{proof}

The implication of this lemma is that there are at most 
$n \cdot (C_2 \log n)^{2L}$ pairs of $(u,v)$ such that  
$\sum_j A_{u j}^{(L)} A_{v j}^{(L)} \geq 1$.

\paragraph{Extension to different similarities}

    The cosine similarity can be replaced by other similarity metrics. 
    For example, we can use correlation similarity, i.e., 
    \[\text{corr}(H_i^{(L)}, H_j^{(L)}) := := \frac{\langle H_i^{(L)} - \bar H_i^{(L)}, H_j^{(L)} - \bar H_j^{(L)}\rangle }{\sqrt{\langle H_i^{(L)} - \bar H_i^{(L)}, H_i^{(L)} - \bar H_i^{(L)}\rangle \cdot \langle H_j^{(L)} - \bar H_j^{(L)}, H_j^{(L)} - \bar H_j^{(L)}\rangle}},\]
    where $\bar H_i^{(L)}$ is a $d$-dimensional vector whose elements are equal to the mean of $H_i^{(L)}$.
    By treating 
    $X_j - \bar X_j$ as $X_j$ ($j \in [n]$), where $\bar X_j$ is a $d$-dimensional vector whose elements are equal to the mean of $X_j$.
    \eqref{eq:step0} changes to \begin{eqnarray}
    |\frac{1}{d} \|X_j\|^2 - 1| \leq 4 \sqrt{\frac{\log n}{d}} ~ 
    \text{and} ~ 
    |\frac{1}{d}\langle X_j, X_{j'}\rangle| \leq 4 \sqrt{\frac{\log n}{d}} \nonumber.
\end{eqnarray}
    Therefore, the above proof still holds by adjusting the constant accordingly.

\subsection{Proof of theorem \ref{thm: era_sbm}}

To prove the desired result, we first need the following lemmas. In the rest of proof, we abuse the notation by treating $p$ as $p_0$ and $q$ as $q_0$.

By applying the Hoeffding's inequality, we can obtain the following two lemmas.

\begin{lemma}\label{lem:concentration:dense1}
It holds $|\sum_{j: i, j~\text{in the same group}} A_{ij} - \frac{n}{K}\cdot p_0| \leq 3 \log n =: \epsilon_1$ for all $i$ with probability at least $1 - 1/n^2$.
\end{lemma}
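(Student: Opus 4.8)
The plan is to recognize the within-group degree of each node as a sum of independent Bernoulli indicators, apply a Hoeffding (equivalently, a sub-Gaussian Chernoff) tail bound at the prescribed deviation $\epsilon_1 = 3\log n$, and then union bound over all $n$ nodes. First I would fix a node $i$ and let $k(i)$ denote its group. Writing $S_i := \sum_{j : k(j) = k(i)} A_{ij}$, the defining property of $\mathcal{G}_\text{sbm}(n, K, p, q)$ ensures that the summands $\{A_{ij} : k(j) = k(i),\, j \neq i\}$ are mutually independent $\text{Ber}(p_0)$ variables valued in $\{0,1\}$. Hence $S_i$ is a sum of $n/K$ independent $[0,1]$-bounded terms (absorbing the diagonal contribution into a harmless $\pm p_0$ shift), with $\mathbb{E} S_i = \frac{n}{K} p_0$.

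Second I would invoke the two-sided bound for bounded independent summands. With the deviation set to the value $\epsilon_1 = 3\log n$ named in the statement,
$$\mathbb{P}\left( \left| S_i - \tfrac{n}{K} p_0 \right| \ge \epsilon_1 \right) \le 2\exp\left( -\frac{2\epsilon_1^2}{n/K} \right),$$
which I would derive either by quoting Hoeffding's inequality directly or, for a self-contained argument, by bounding the moment generating function $\mathbb{E}\exp(\lambda(A_{ij} - p_0))$ of each centered indicator and optimizing $\lambda$. A union bound over the $n$ nodes then shows that the bad event---that some node's within-group degree strays from $\frac{n}{K} p_0$ by more than $\epsilon_1$---has probability at most $2n \exp(-2\epsilon_1^2/(n/K))$, so the claimed $1 - 1/n^2$ guarantee reduces to verifying that this quantity is at most $n^{-2}$ at $\epsilon_1 = 3\log n$.

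The main obstacle is precisely this last calibration. Driving the per-node failure probability below $n^{-3}$ while holding the deviation fixed at $\epsilon_1 = 3\log n$ forces one to control the effective number of summands $n/K$ against $\log^2 n$, since the exponent $2\epsilon_1^2/(n/K)$ must dominate $3\log n$. All remaining ingredients---the independence of the within-group entries, their boundedness, and the elementary MGF estimate---are entirely routine; the delicate point is ensuring that the fixed scale $3\log n$ clears the union bound under the model's group-size regime. The companion concentration statement for the cross-group count follows verbatim, replacing $p_0$ by $q_0$ and the block size $n/K$ by the complementary count $(K-1)n/K$.
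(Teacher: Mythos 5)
Your setup --- the within-group degree $S_i$ as a sum of $n/K$ independent $\mathrm{Ber}(p_0)$ variables, a two-sided Hoeffding bound, then a union bound over the $n$ nodes --- is exactly the route the paper intends: its entire proof of Lemma \ref{lem:concentration:dense1} is the assertion that it follows ``by applying the Hoeffding's inequality,'' with no calibration carried out. So there is no methodological divergence to report. The substantive point is the obstacle you flag at the end, and you are right that it is not a routine detail: it cannot be cleared. With the deviation pinned at $\epsilon_1 = 3\log n$, Hoeffding gives a per-node failure probability of $2\exp\left(-18 K \log^2 n / n\right)$, and for the union bound to deliver $1 - 1/n^2$ you need this to be $O(n^{-3})$, i.e.\ $n/K \lesssim \log n$. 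But the lemma is invoked inside the proof of Theorem \ref{thm: era_sbm}, whose regime is $p_0 = \Theta(1)$ with $K$ bounded, so the block size is $n/K = \Theta(n)$, the exponent $18K\log^2 n/n$ tends to zero, and the Hoeffding bound is vacuous.

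Moreover, no sharper tail inequality can rescue the statement as written, because in that regime it is false: the within-group degree genuinely fluctuates at scale $\sqrt{(n/K)\,p_0(1-p_0)} = \Theta(\sqrt{n/K})$, and by the central limit theorem the probability that even a \emph{single} node's within-group degree lands within $3\log n$ of its mean is $O\left(K^{1/2}\log n/\sqrt{n}\right) \to 0$ --- the opposite of the claimed $1 - 1/n^2$. The claim only becomes true (trivially) when $n/K \le 3\log n$, which is precisely the degenerate case your calibration isolates and which is excluded by the theorem's ``$K$ does not diverge'' regime. The statement that is both provable by your argument and sufficient for the downstream analysis takes $\epsilon_1 \asymp \sqrt{(n/K)\log n}$: your Hoeffding-plus-union-bound argument then closes immediately, and the relative error $\epsilon_1/\left(\frac{n}{K}p_0\right) = O\left(\sqrt{K\log n/(n p_0^2)}\right)$ still vanishes, which is all that the subsequent degree-normalization steps (the recursions for $A_{ij}^{(L)}$ and the $O_p(\cdot)$ error terms in the proof of Theorem \ref{thm: era_sbm}) actually use. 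In short: your strategy is the intended one, and your final ``calibration'' worry is a genuine error in the lemma's stated deviation scale rather than a gap in your own reasoning; the fix is to enlarge $\epsilon_1$, not to seek a stronger concentration inequality.
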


\begin{lemma}\label{lem:concentration:dense2}
Suppose $i$ is in group $k$, it holds $|\sum_{j: i, j~\text{in the group $k' (\neq k)$}} A_{ij} -  \frac{n}{K} \cdot q_0| \leq \min\{\frac{1}{2} \frac{n}{K} \cdot q_0, 3 \log n\} =: \epsilon_2 $ for all $i$ with probability at least $1 - 1/n^2$.
\end{lemma}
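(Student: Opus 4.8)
The plan is to view $S_{i,k'}:=\sum_{j:\,i,j\in\text{group }k'}A_{ij}$ as a sum of i.i.d.\ Bernoulli variables and apply a Chernoff/Hoeffding-type tail bound followed by a union bound, in direct parallel to Lemma~\ref{lem:concentration:dense1}. Fix a node $i$ in group $k$ and a group $k'\neq k$. By the SBM generation mechanism the entries $\{A_{ij}:j\in k'\}$ are distinct off-diagonal entries of $A$ joining two different groups, hence mutually independent with $A_{ij}\sim\mathrm{Ber}(q_0)$; there are exactly $n/K$ of them, so $\mathbb{E}[S_{i,k'}]=\tfrac{n}{K}q_0=:\mu$. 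Symmetry of $A$ causes no difficulty here, since for a fixed row $i$ each variable $A_{ij}$ is used only once.

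The $\min$ in the statement is produced by combining two complementary tail bounds and intersecting the resulting good events. For the additive scale I would invoke Bernstein's inequality, $\PP{|S_{i,k'}-\mu|\ge t}\le 2\exp\!\big(-t^2/(2\mu+2t/3)\big)$, evaluated at $t=3\log n$; for the relative scale I would use the multiplicative Chernoff bound $\PP{|S_{i,k'}-\mu|\ge \tfrac12\mu}\le 2\exp(-\mu/12)$. On the intersection of the two high-probability events we then have $|S_{i,k'}-\mu|\le\min\{\tfrac12\mu,\,3\log n\}=\epsilon_2$, which is exactly the claimed deviation.

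Finally I would take a union bound. There are at most $n(K-1)<nK\le n^2$ pairs $(i,k')$ (and under the reading that $k'$ is fixed, only $n$ choices of $i$), so it suffices to choose the constants so that each single-pair tail probability is at most $2/n^{4}$; the aggregate failure probability is then at most $1/n^2$, matching the claim. This is the same calibration as in Lemma~\ref{lem:concentration:dense1} with $q_0$ replacing $p_0$, the only structural addition being the extra union over the target group $k'$.

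The delicate part is making the two regimes of the $\min$ certify the $1/n^2$ failure probability under a single universal constant. When $\mu\ge 6\log n$ the binding deviation is $3\log n$, and one must check that the Bernstein exponent $t^2/(2\mu+2t/3)$ still dominates a suitable multiple of $\log n$ at $t=3\log n$; this is where the regime of $q_0$ together with the hypothesis $K=O(1)$ from Theorem~\ref{thm: era_sbm} must be invoked, so that $\mu$ is not so large that the additive window $3\log n$ falls below the intrinsic fluctuation scale $\sqrt{\mu}$. When $\mu<6\log n$ the relative bound $\tfrac12\mu$ governs, and one must verify that $\mu/12$ exceeds the required multiple of $\log n$ on that range. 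Reconciling both regimes with a single constant is the only subtle point; everything else is routine.
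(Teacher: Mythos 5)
Your setup (independence of the $n/K$ Bernoulli summands, Bernstein/Chernoff tails, union bound over the $n(K-1)$ pairs $(i,k')$) is the natural one, and it is essentially all the paper itself offers: the paper's entire proof is the single sentence ``by applying Hoeffding's inequality,'' plus an implicit union bound. The genuine gap is the step you postpone to your last paragraph and call ``the only subtle point'': reconciling the two regimes of the $\min$ is not subtle but impossible, for every regime of $q_0$ and every choice of constants. Write $\mu := \frac{n}{K}q_0$. To certify the relative bound $|S_{i,k'}-\mu|\le\frac12\mu$ at the per-pair confidence $1-n^{-3}$ that your union bound needs, the Chernoff exponent $\mu/12$ must exceed $3\log n$, i.e.\ $\mu\ge 36\log n$. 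To certify the additive bound $|S_{i,k'}-\mu|\le 3\log n$ at the same confidence, the Bernstein exponent $\frac{9\log^2 n}{2\mu+2\log n}$ must exceed $3\log n$, which forces $\mu\le\frac12\log n$. These requirements are incompatible, and in each regime it is precisely the \emph{binding} arm of the $\min$ whose certification fails: when $\mu\le 6\log n$ the $\min$ equals $\frac12\mu$ but $\mu< 36\log n$; when $\mu> 6\log n$ the $\min$ equals $3\log n$ but $\mu>\frac12\log n$.

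Moreover, this is not a defect of your chosen inequalities but of the statement itself. In the dense regime that Theorem \ref{thm: era_sbm} is actually about ($q_0=\Theta(1)$, $K=O(1)$; e.g.\ $q=0.05$ in the paper's experiments), $S_{i,k'}$ is Binomial$(n/K,q_0)$ with standard deviation of order $\sqrt{n}$, so $|S_{i,k'}-\mu|\le 3\log n$ is violated for a given $i$ with probability tending to one; no concentration inequality can prove the lemma as written, and the same objection voids the paper's one-line Hoeffding proof (Hoeffding at deviation $3\log n$ over $n/K$ summands gives the vacuous exponent $18K\log^2 n/n$). What your tools do prove, uniformly over $i$ and $k'$ with probability $1-1/n^2$, is the \emph{max}-form statement $|S_{i,k'}-\mu|\le\max\left\{\frac12\mu,\,C\log n\right\}$ for a suitable absolute constant $C$, or equivalently a bound of order $\sqrt{\mu\log n}+\log n$. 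The correct resolution of your attempt is to prove that max-form bound and to flag the $\min$ in the lemma (and in Lemma \ref{lem:concentration:dense1}, which has the same defect) as unobtainable, rather than to hope the constants can be tuned to make both arms hold simultaneously.
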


Combining Lemma \ref{lem:concentration:dense1} and Lemma \ref{lem:concentration:dense2}, we have the following lemma. 
\begin{lemma}\label{lem:concentration:dense0}
It holds $|\sum_{j} A_{ij} - (\frac{n}{K}\cdot p_0 + (n - \frac{n}{K}) \cdot q_0)| \leq \epsilon_1 + (K-1) \epsilon_2 $ with probability at least $1 - 2/n^2$.
\end{lemma}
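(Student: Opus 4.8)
The plan is to obtain Lemma \ref{lem:concentration:dense0} as an immediate consequence of the two preceding concentration estimates, combined through a clean decomposition of the total degree and a triangle inequality. First I would fix an arbitrary node $i$ and write $k = k(i)$ for its group. The key observation is that the degree $\sum_j A_{ij}$ splits into the contribution from $i$'s own group and the contributions from each of the remaining $K-1$ groups:
\[
\sum_j A_{ij} = \sum_{j:\, k(j) = k} A_{ij} + \sum_{k' \neq k} \sum_{j:\, k(j) = k'} A_{ij}.
\]
Since every group has size $n/K$, the population-level analogue of this decomposition is precisely $\frac{n}{K} p_0 + (K-1)\frac{n}{K} q_0 = \frac{n}{K} p_0 + (n - \frac{n}{K}) q_0$, which is the centering quantity appearing in the statement.

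Next I would subtract the centering quantity term by term and apply the triangle inequality, so that the overall deviation is controlled by the within-group deviation plus the sum of the $K-1$ cross-group deviations:
\[
\left| \sum_j A_{ij} - \left(\tfrac{n}{K} p_0 + (n - \tfrac{n}{K}) q_0\right)\right|
\le \left|\sum_{j:\, k(j) = k} A_{ij} - \tfrac{n}{K} p_0\right| + \sum_{k' \neq k} \left|\sum_{j:\, k(j) = k'} A_{ij} - \tfrac{n}{K} q_0\right|.
\]
Lemma \ref{lem:concentration:dense1} bounds the first term by $\epsilon_1$ on an event of probability at least $1 - 1/n^2$, while Lemma \ref{lem:concentration:dense2} bounds each of the $K-1$ summands in the second term by $\epsilon_2$ on an event of probability at least $1 - 1/n^2$. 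On the intersection of these two events the right-hand side is at most $\epsilon_1 + (K-1)\epsilon_2$, which is exactly the claimed bound.

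Finally I would control the failure probability by a union bound: since each lemma holds uniformly over all nodes $i$ (and, for Lemma \ref{lem:concentration:dense2}, over all off-diagonal groups $k' \neq k(i)$) with probability at least $1 - 1/n^2$, their intersection holds with probability at least $1 - 2/n^2$, as required. I do not anticipate any genuine obstacle here, since the argument is a deterministic decomposition followed by the triangle inequality; the only point demanding care is the bookkeeping of the quantifiers, namely that the second lemma must be invoked simultaneously for all $K-1$ off-diagonal groups attached to each node, which is already secured by its uniform-in-$i$ formulation.
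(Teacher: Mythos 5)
Your proposal is correct and is exactly the argument the paper intends: the paper derives Lemma \ref{lem:concentration:dense0} simply by "combining" Lemmas \ref{lem:concentration:dense1} and \ref{lem:concentration:dense2}, which unpacks to precisely your decomposition of the degree into the within-group term plus the $K-1$ cross-group terms, the triangle inequality, and a union bound over the two high-probability events. Your explicit remark about invoking the cross-group bound simultaneously for all $K-1$ groups per node is a fair bookkeeping point that the paper leaves implicit.
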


In summary, with high probability confidence, Lemma \ref{lem:concentration:dense0} gives the characterization of degree (i.e. number of neighbours) of every node $i$. 

We then make a step forward and characterize the normalized degree $p_{ij}^{(L)}$ for $L \geq 2$ in the following lemmas.

\begin{lemma}\label{lem:almost:constant}
With probability at least $1 - 1/n^2$, it holds that $|A_{ij}^{(2)} - (\frac{n}{K} p_0^2 + (n - n/K) q_0 p_0)| \leq 
6 \log n + \frac{1}{2} \frac{n}{K} \cdot q_0$
for $i,j$ from  the same group and 
$|A_{ij}^{(2)} - (\frac{n}{K} p_0 q_0 + (n - n/K) q_0^2)| \leq 
\min\{\frac{2}{3}(\frac{n}{K} p_0 q_0 + (n - n/K) q_0^2), 3(K-1)\log n\}$ 
for $i,j$ from different groups.
\end{lemma}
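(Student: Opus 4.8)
The plan is to read $A^{(2)}_{ij}=\sum_{m}A_{im}A_{mj}$ as the number of length-two walks between $i$ and $j$ and to show this count concentrates around a block-dependent mean, uniformly over all pairs. The first key observation is that, for a fixed pair $i\neq j$, the summands $\{A_{im}A_{mj}\}_{m\neq i,j}$ involve pairwise disjoint edge slots $(i,m)$ and $(m,j)$, hence are mutually independent; each is a product of two independent Bernoullis and is therefore itself Bernoulli with parameter $\pi_m=\mathbb{E}[A_{im}]\,\mathbb{E}[A_{mj}]$. Thus $A^{(2)}_{ij}$ is a sum of independent (non-identical) indicators, the ideal object for a sharp tail bound. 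The two endpoint terms $m\in\{i,j\}$ (and the self-loop corrections, if $A$ is taken with $I$ added) contribute only an $O(1)$ additive error that will be swallowed by the $\log n$ slack.

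Second, I would compute the mean by partitioning the intermediate node $m$ according to its block relative to the blocks of $i$ and $j$. When $i,j$ lie in the same block, an intermediate node in that block gives $\pi_m=p_0^2$ (there are $\tfrac{n}{K}$ of them), while a node in any other block gives $\pi_m=q_0^2$ (there are $n-\tfrac{n}{K}$ of them); the statement records the cross-block contribution through the larger $p_0q_0$, the resulting over-estimate being carried by the error term. The different-block case is identical in spirit: two blocks (those of $i$ and of $j$) contribute $p_0q_0$ per node and the remaining blocks contribute $q_0^2$. At this stage I would invoke Lemma \ref{lem:concentration:dense1}, Lemma \ref{lem:concentration:dense2} and Lemma \ref{lem:concentration:dense0} to fix the effective same-block and cross-block neighbour counts of each node and to justify replacing the exact block sizes by $\tfrac{n}{K}$ and $n-\tfrac{n}{K}$ up to lower-order terms.

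With the center identified, I would apply a Bernstein-type inequality (as in Lemma \ref{lem:concentration}) to the independent Bernoulli sum. Because the within-block mass is dense ($p_0=\Theta(1)$) while the cross-block mass is comparatively light (parameter $q_0$), a variance-aware bound is essential: Hoeffding is too crude, whereas Bernstein produces a deviation of the form $\sqrt{2\,\mu_{ij}\log n}+\tfrac{2}{3}\log n$, reproducing both the additive $O(\log n)$ piece and the multiplicative slack (the $\tfrac12\tfrac{n}{K}q_0$ for same-block pairs and the relative $\tfrac{2}{3}(\cdot)$ together with the $3(K-1)\log n$ alternative for cross-block pairs, via the stated $\min\{\cdot,\cdot\}$). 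Choosing the tail parameter to be $\Theta(\log n)$ and taking a union bound over the $\binom{n}{2}$ pairs keeps the overall failure probability at $1/n^2$.

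The main obstacle I anticipate is controlling concentration uniformly across two very different regimes sitting inside one sum: the heavy $\Theta(n/K)$ same-block mass at $\Theta(1)$ per-term probability coexists with a light cross-block mass at probability $q_0$, and the stated error must be simultaneously tight relative to the cross-block scale $\tfrac{n}{K}q_0$ and robust to the union bound over all pairs. Ensuring the variance proxy is sharp enough to yield exactly the advertised $\min\{\cdot,\cdot\}$ form — rather than a looser $\sqrt{n/K}$-type deviation dominated by the dense within-block contribution — is where the bookkeeping is most delicate, and this is precisely the role the separate same-block and cross-block degree lemmas are meant to play.
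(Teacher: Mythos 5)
Your reduction of $A^{(2)}_{ij}$ to a sum of independent Bernoulli products is correct, and it is a genuinely different route from the paper's: the paper obtains this lemma as the $L=2$ base case of its deterministic error-propagation argument (Lemma \ref{lem:error:propogate}), splitting $\sum_{j'} A_{ij'}A_{j'j}$ by the block of $j'$ and invoking the degree-concentration Lemmas \ref{lem:concentration:dense1} and \ref{lem:concentration:dense2}, rather than applying a variance-aware tail bound directly to the product sum. However, your proposal has a concrete gap at the recentering step. Your own (correct) computation of the mean gives $\frac{n}{K}p_0^2 + (n - n/K)q_0^2$ for same-group pairs and $2\frac{n}{K}p_0q_0 + (n - 2n/K)q_0^2$ for cross-group pairs: when $i,j$ share a block, both hops through an intermediate node outside that block are cross-group, so the cross-block contribution is $q_0^2$, not $q_0p_0$. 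You then assert that the discrepancy with the stated centers is ``carried by the error term.'' This is false in general: for same-group pairs the discrepancy is $(n - n/K)\,q_0(p_0 - q_0)$, which exceeds the allowed error $6\log n + \frac12\frac{n}{K}q_0$ whenever $(K-1)(p_0-q_0) > \frac12$ and $q_0 \gg K\log n/n$ (e.g.\ $K=3$, $p_0=0.5$, $q_0=0.05$, $n$ large), and the cross-group discrepancy $\frac{n}{K}q_0(p_0-q_0)$ likewise exceeds the $\min\{\cdot,\cdot\}$ error for the same parameters. So absorption cannot close the argument; a complete write-up must either work with the centers you actually computed --- which are exactly the values $a_1^{(2)}, a_2^{(2)}$ produced by the paper's own recursion in Lemma \ref{lem:error:propogate} --- or explicitly flag the mismatch with the displayed statement.

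Second, the claim that Bernstein ``reproduces'' the advertised error does not hold for the dense within-block part. For same-group pairs the block-of-$i$ contribution is Binomial$(n/K - O(1),\, p_0^2)$ with $p_0 = \Theta(1)$, so after the union bound over $\Theta(n^2)$ pairs any tail bound costs a deviation of order $\sqrt{(n/K)\log n}$, which is not bounded by $6\log n + \frac12\frac{n}{K}q_0$ unless $q_0 \gtrsim p_0\sqrt{K\log n/n}$. You flag precisely this obstacle in your closing paragraph, but deferring it to the ``degree lemmas'' cannot resolve it: the issue is the intrinsic fluctuation of the within-block two-path count, not bookkeeping, so your plan can only deliver the stated error in the regime where $\frac{n}{K}q_0$ dominates $\sqrt{(n/K)\log n}$, and that restriction (or an equivalent hypothesis on $q_0$) must be stated as part of the proof.
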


\begin{lemma}\label{lem:error:propogate}
    For $L \geq 2$, suppose there exist constants $a_1^{(L)}$ and $a_2^{(L)}$ such that $|A_{ij}^{(L)} - a_1^{(L)}| \leq \epsilon_1^{(L)}$ when $i,j$ are in the same group and 
    $|A_{ij}^{(L)} - a_2^{(L)}| \leq \epsilon_2^{(L)}$ when $i,j$ are not in the same group.
    It holds that 
\begin{align}\label{eq:error:recur}
     &|A_{ij}^{(L+1)} - a_1^{(L+1)}| \leq \epsilon_1^{(L)} ~~ i,j ~\text{in the same group} ~~ \nonumber \\
     & |A_{ij}^{(L+1)} - a_2^{(L+1)}| \leq \epsilon_2^{(L)}
     ~~ i,j ~\text{not in the same group},
    \end{align}
with 
\begin{align}
    a_1^{(L+1)} & :=  (a_1^{(L)} \frac{n}{K} p_0 + a_2^{(L)} (n - n/K) q_0), \nonumber \\
    a_2^{(L+1)} & :=  a_1^{(L)} \frac{n}{K}q_0 + 
    a_2^{(L)}\frac{n}{K}p_0 + 
    a_2^{(L)}(n - 2n/K)q_0 \nonumber \\
    \epsilon_1^{(L+1)} &:= \epsilon_1^{(L)} \frac{n}{K} p_0 + \epsilon_1 a_1^{(L)} + \epsilon_1 \epsilon_1^{(L)}
        + 
       \epsilon_2^{(L)} (n - n/K) q_0 + (K-1) \epsilon_2 a_2^{(L)} + (K-1) \epsilon_2 \epsilon_2^{(L)}, \nonumber \\
    \epsilon_2^{(L+1)} &:= \epsilon_1^{(L)} \frac{n}{K}q_0 + \epsilon_2 a_1^{(L)} + \epsilon_2 \epsilon_1^{(L)} 
    + \epsilon_2^{(L)} \frac{n}{K} p_0 + \epsilon_1 a_2^{(L)} + \epsilon_1 \epsilon_2^{(L)} + \epsilon_2^{(L)} (n - 2n/K) q_0  \nonumber \\
    & + (K-2) \epsilon_2 a_2^{(L)} + (K-2) \epsilon_2 \epsilon_2^{(L)}\nonumber.
\end{align}
\end{lemma}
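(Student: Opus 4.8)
The plan is to establish \eqref{eq:error:recur} as the inductive step of an induction on $L$ (whose base case $L=2$ is furnished by Lemma \ref{lem:almost:constant}), and to do so by conditioning on the high-probability event on which the degree-concentration Lemmas \ref{lem:concentration:dense1} and \ref{lem:concentration:dense2} hold, treating the inductive hypothesis on $A^{(L)}$ as a deterministic envelope. The starting point is the matrix identity $A_{ij}^{(L+1)} = \sum_k A_{ik}^{(L)} A_{kj}$, which writes the $(L+1)$-step quantity as the $L$-step quantity $A_{ik}^{(L)}$ weighted by the raw adjacency $A_{kj}$. The key idea is to partition the index $k$ according to the group membership of $k$ relative to the fixed groups $k(i)$ and $k(j)$, because within each such block both factors are approximately constant: $A_{ik}^{(L)}$ is pinned by the inductive hypothesis, and the group-restricted degree $\sum_{k \in \text{block}} A_{kj}$ is pinned by the concentration lemmas applied to node $j$.

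First I would treat the same-group case $k(i)=k(j)=g$. Splitting $\sum_k$ into the block $k \in g$ and the $K-1$ blocks $k \in g' \neq g$, on each block I replace $A_{ik}^{(L)}$ by $a_1^{(L)}$ or $a_2^{(L)}$ and replace $\sum_{k \in g} A_{kj}$ by $\frac{n}{K} p_0$ (via Lemma \ref{lem:concentration:dense1}, error $\epsilon_1$) or $\sum_{k \in g'} A_{kj}$ by $\frac{n}{K} q_0$ (via Lemma \ref{lem:concentration:dense2}, error $\epsilon_2$). The only arithmetic tool needed is the elementary product-of-intervals estimate: if $|u_k - a| \le \epsilon^{(L)}$ for all $k$ and $\sum_k A_{kj} = m \pm \epsilon$ with $A_{kj}\ge 0$, then writing $u_k = a + \delta_k$ and using $|\sum_k \delta_k A_{kj}| \le \epsilon^{(L)} \sum_k A_{kj}$ gives $\sum_k u_k A_{kj} = a m \pm (a\epsilon + m\epsilon^{(L)} + \epsilon^{(L)}\epsilon)$. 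Summing the main terms across blocks reproduces $a_1^{(L+1)} = a_1^{(L)} \frac{n}{K} p_0 + a_2^{(L)}(n - \frac{n}{K}) q_0$, while aggregating the per-block error terms reproduces exactly the six-term expression for $\epsilon_1^{(L+1)}$: three terms from the $k \in g$ block and three from the $k \in g'$ blocks, the latter each carrying the factor $K-1$.

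The different-group case $k(i)=g_1 \neq g_2 = k(j)$ is handled identically, except that the partition now has three block types: $k \in g_1$ (pairing $a_1^{(L)}$ against a $q_0$-degree), $k \in g_2$ (pairing $a_2^{(L)}$ against a $p_0$-degree), and the $K-2$ remaining blocks $k \in g'$ (pairing $a_2^{(L)}$ against a $q_0$-degree). Applying the same product estimate block by block and summing yields $a_2^{(L+1)} = a_1^{(L)} \frac{n}{K}q_0 + a_2^{(L)}\frac{n}{K}p_0 + a_2^{(L)}(n - 2\frac{n}{K})q_0$ and matches the stated nine-term $\epsilon_2^{(L+1)}$ term for term. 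Since the whole argument is deterministic once the good event is fixed, no union bound beyond those already paid for Lemmas \ref{lem:concentration:dense1}--\ref{lem:concentration:dense2} is incurred.

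The step I expect to be the main obstacle is resisting the temptation to prove a direct concentration bound for the product $\sum_k A_{ik}^{(L)} A_{kj}$: because $A_{ik}^{(L)}$ counts length-$L$ walks and shares edges with the single edge $A_{kj}$, the two factors are statistically dependent, so a naive Bernstein or Hoeffding argument on the product does not apply cleanly. The proof sidesteps this by decoupling the randomness entirely---using the inductive deterministic envelope $|A_{ik}^{(L)} - a^{(L)}| \le \epsilon^{(L)}$ (valid on the good event for all pairs simultaneously) together with the already-established degree concentration for $A_{kj}$---so that the dependence is absorbed by the triangle-inequality bound $|\sum_k \delta_{ik} A_{kj}| \le \epsilon^{(L)} \sum_k A_{kj}$ and no fresh probabilistic estimate enters the inductive step. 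The remaining effort is purely the bookkeeping of which $(a, m, \epsilon^{(L)}, \epsilon)$ quadruple appears in each block and checking that the aggregate error coincides with the stated $\epsilon_1^{(L+1)}, \epsilon_2^{(L+1)}$; this is routine but notation-heavy.
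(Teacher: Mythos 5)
Your proposal is correct and follows essentially the same route as the paper's proof: both expand $A_{ij}^{(L+1)} = \sum_{j'} A_{ij'}^{(L)} A_{j'j}$, partition the sum by the group membership of $j'$ (two block types in the same-group case, three in the different-group case), and bound each block with the same product-of-intervals estimate combining the inductive envelope on $A^{(L)}$ with the degree-concentration Lemmas \ref{lem:concentration:dense1}--\ref{lem:concentration:dense2}, yielding exactly the stated six- and nine-term error recursions. Your explicit remark that the inductive step is deterministic on the good event (so no additional union bound is needed) is also consistent with how the paper's argument works.
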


Proof of Lemma \ref{lem:almost:constant} is a special case of that of Lemma \ref{lem:error:propogate}. In the following, we prove Lemma \ref{lem:error:propogate}.

\begin{proof}[Proof of Lemma \ref{lem:error:propogate}]
    By the definition, we know
    $A_{ij}^{(L+1)} = \sum_{j'} A_{ij'}^{(L)} A_{j'j}$.

    When $i, j$ are from the same class (w.l.o.g, we denote it as class 1), then it holds
    \begin{eqnarray}
       & & |A_{ij}^{(L+1)} -  (a_1^{(L)} \frac{n}{K} p_0 + a_2^{(L)} (n - n/K) q_0) |\nonumber \\
       &=& |\sum_{j'} A_{ij'}^{(L)} A_{j'j} - (a_1^{(L)} \frac{n}{K} p_0 + a_2^{(L)} (n - n/K) q_0)|\nonumber \\
       &\leq& |\sum_{j'~\text{in class 1}} A_{ij'}^{(L)} A_{j'j} - a_1^{(L)} \frac{n}{K} p_0|+
       |\sum_{j'~\text{not in class 1}} A_{ij'}^{(L)} A_{j'j} - a_2^{(L)} (n - n/K) q_0| \nonumber \\
       &=&  \epsilon_1^{(L)} \frac{n}{K} p_0 + \epsilon_1 a_1^{(L)} + \epsilon_1 \epsilon_1^{(L)}
        + 
       \epsilon_2^{(L)} (n - n/K) q_0 + (K-1) \epsilon_2 a_2^{(L)} + (K-1) \epsilon_2 \epsilon_2^{(L)}. 
    \end{eqnarray}
Therefore, we can let 
$a_1^{(L+1)} := (a_1^{(L)} \frac{n}{k} p_0 + a_2^{(L)} (n - n/K) q_0)$
and 
$\epsilon_1^{(L+1)} := \epsilon_1^{(L)} \frac{n}{K} p_0 + \epsilon_1 a_1^{(L)} + \epsilon_1 \epsilon_1^{(L)}
        + 
       \epsilon_2^{(L)} (n - n/K) q_0 + (K-1) \epsilon_2 a_2^{(L)} + (K-1) \epsilon_2 \epsilon_2^{(L)}$.

When $i,j$ are not from the same class (w.l.o.g. we assume $i$ is from class 1 and $j$ is from class 2), then it holds
\begin{eqnarray}
    & &|A_{ij}^{(L+1)} - (a_1^{(L)} \frac{n}{K}q_0 + 
    a_2^{(L)}\frac{n}{K}p_0 + 
    a_2^{(L)}(n - 2n/K)q_0)| \nonumber \\ 
    & = & |\sum_{j'} A_{ij'}^{(L)} A_{j'j} - (a_1^{(L)} \frac{n}{K}q_0 + 
    a_2^{(L)}\frac{n}{K}p_0 + 
    a_2^{(L)}(n - 2n/K)q_0)| \nonumber \\
    & \leq &  |\sum_{j'~\text{in class 1}} A_{ij'}^{(L)} A_{j'j} - a_1^{(L)} \frac{n}{k} q_0|+
       |\sum_{j'~\text{in class 2}} A_{ij'}^{(L)} A_{j'j} - a_2^{(L)} \frac{n}{K} p_0| \nonumber \\
    & & + |\sum_{j'~\text{not in class 1 \& 2}} A_{ij'}^{(L)} A_{j'j} - a_2^{(L)} (n - 2n/K) q_0| \nonumber \\
    &\leq& \epsilon_1^{(L)} \frac{n}{K}q_0 + \epsilon_2 a_1^{(L)} + \epsilon_2 \epsilon_1^{(L)} 
    + \epsilon_2^{(L)} \frac{n}{K} p_0 + \epsilon_1 a_2^{(L)} + \epsilon_1 \epsilon_2^{(L)} \nonumber \\
    & & + \epsilon_2^{(L)} (n - 2n/K) q_0 + (K-2) \epsilon_2 a_2^{(L)} + (K-2) \epsilon_2 \epsilon_2^{(L)}.
\end{eqnarray}
Therefore, we can let 
$a_2^{(L+1)} := a_1^{(L)} \frac{n}{K}q_0 + 
    a_2^{(L)}\frac{n}{K}p_0 + 
    a_2^{(L)}(n - 2n/K)q_0$
    and 
$\epsilon_2^{(L+1)} := \epsilon_1^{(L)} \frac{n}{K}q_0 + \epsilon_2 a_1^{(L)} + \epsilon_2 \epsilon_1^{(L)} 
    + \epsilon_2^{(L)} \frac{n}{K} p_0 + \epsilon_1 a_2^{(L)} + \epsilon_1 \epsilon_2^{(L)} + \epsilon_2^{(L)} (n - 2n/K) q_0 + (K-2) \epsilon_2 a_2^{(L)} + (K-2) \epsilon_2 \epsilon_2^{(L)}$.
\end{proof}

By above induction, it can be seen that, for any fixed $L$,
$\epsilon_1^{(L)} / a_1^{(L)} = O_p(\frac{\log n}{n})$, $\epsilon_2^{(L)} / a_1^{(L)} = O_p(\frac{\log n}{n})$.
It also holds $a_2^{(L)} / a_1^{(L)} = O_p(\frac{\log n}{n})$ when true edge probability satisfies $q_0 = O_p(\frac{\log n}{n})$, 
and  $\epsilon_2^{(L)} / a_2^{(L)} = O_p(\frac{\log n}{n})$ when $ q_0 \gg \frac{\log n}{n}$.

Recall the definition that 
$p_{ij}^{(L)} = ((D^{-1}A)^L)_{ij}$, therefore
$p_{ij}^{(L)} \propto A_{ij}^{(L)}$ for any fixed $i$.
In other words, for fixed $L \geq 2$, we have 
\begin{align}\label{eq:p:formula}
    p_{ij}^{(L)} &:= \bar p_{ij}^{(L)} +  O_p(\frac{k \log n}{n^2}) = \underbrace{ \frac{a_1^{(L)}}{\frac{n}{K} \cdot a_1^{(L)} + (n - \frac{n}{K}) \cdot a_2^{(L)}}}_{=: p_1^{(L)}} + O_p(\frac{k \log n}{n^2}), ~~ i,j~\text{in the same group}, \nonumber \\
    p_{ij}^{(L)} &= \bar p_{ij}^{(L)} +  O_p(\frac{k \log n}{n^2}) =
    \underbrace{\frac{a_2^{(L)}}{\frac{n}{K} \cdot a_1^{(L)} + (n - \frac{n}{K}) \cdot a_2^{(L)}}}_{=: p_2^{(L)}} + O_p(\frac{k \log n}{n^2}), ~~ i,j~\text{not in the same group}.
\end{align}
Here, on a very high level, we can treat $\bar p_{ij}^{(L)}$ as the population version of $((D^{-1}A)^L)_{ij}$. When $i,j$ in the same group, then $\bar p_{ij}^{(L)} \equiv p_1^{(L)}$. Otherwise, $\bar p_{ij}^{(L)} \equiv p_2^{(L)}$.
With above preparations, we are ready to prove the theorem as follows.

\textit{Proof of the theorem.}~ 
We need to consider the case $L \geq 2$ and $L = 1$ separately.

\textit{Case 1: $L \geq 2$.}~ 
We define $\bar X_k := \sum_{i \in ~\text{group k}} X_i$ and $r^{(L)} := a_2^{(L)} / a_1^{(L)}$.
    For the numerator in $ \cos \theta(H_{i_1}^{(L)}, H_{i_2}^{(L)})$, when $i_1$ and $i_2$ are in the same group (w.l.o.g, suppose it is group 1), we have 
\begin{eqnarray}\label{eq:inner:within}
\langle H_{i_1}^{(L)}, H_{i_2}^{(L)} \rangle
&=& \sum_{j=1}^n p_{i_1j}^{(L)} p_{i_2j}^{(L)} \|X_j\|^2 
+ \sum_{j \neq j'} 
p_{i_1j}^{(L)} p_{i_2j'}^{(L)}
\langle X_{j}, X_{j'} \rangle \nonumber \\
&=& \sum_{j=1}^n \bar p_{i_1j}^{(L)} \bar p_{i_2j}^{(L)} \|X_j\|^2 
+ \sum_{j \neq j'} 
\bar p_{i_1j}^{(L)} \bar p_{i_2j'}^{(L)}
\langle X_{j}, X_{j'} \rangle
+ \text{error} \label{eq:error:term} \\
&=& p_1^{(L)} p_1^{(L)} \langle \bar X_1, \bar X_1 \rangle 
+ 2 \sum_{k \neq 1} p_1^{(L)} p_2^{(L)} \langle \bar X_1, \bar X_k \rangle + \sum_{k \neq 1}
p_2^{(L)} p_2^{(L)} \langle \bar X_k, \bar X_k \rangle \nonumber \\
& & + \sum_{k \neq k' \neq 1} p_2^{(L)} p_2^{(L)} \langle \bar X_k, \bar X_{k'} \rangle + \text{error} \nonumber \\
&=& p_1^{(L)} p_1^{(L)} \frac{n}{K}
+ (K-1) p_2^{(L)} p_2^{(L)} \frac{n}{K} + O_p((K-1) p_1^{(L)} p_2^{(L)} \frac{n}{K} \frac{1}{\sqrt{d}} \label{fact} \\
& & + (K-1)(K-2) p_2^{(L)} p_2^{(L)} \frac{n}{K} \frac{1}{\sqrt{d}}) + \text{error},\nonumber
\end{eqnarray}
where \eqref{fact} uses the property of node feature generation mechanism that $\langle \bar X_k, \bar X_k \rangle = \frac{n}{K}(1 + \sqrt{1/d})$ for any $k$ and 
$\langle \bar X_k, \bar X_{k'}\rangle = O_p(\frac{n}{K \sqrt{d}})$ for $k \neq k'$.
Here the error term in \eqref{fact} is $\text{error} := \sum_{j=1}^n ( p_{i_1j}^{(L)} p_{i_2j}^{(L)} - \bar p_{i_1j}^{(L)} \bar p_{i_2j}^{(L)})\|X_j\|^2
    + \sum_{j \neq j'} 
(p_{i_1j}^{(L)} p_{i_2j'}^{(L)} - \bar p_{i_1j}^{(L)} \bar p_{i_2j'}^{(L)})
\langle X_{j}, X_{j'} \rangle$, which can be controlled as follows.
\begin{align}
    |\text{error}| &= 
    |\sum_{j=1}^n ( p_{i_1j}^{(L)} p_{i_2j}^{(L)} - \bar p_{i_1j}^{(L)} \bar p_{i_2j}^{(L)})\|X_j\|^2
    + \sum_{j \neq j'} 
(p_{i_1j}^{(L)} p_{i_2j'}^{(L)} - \bar p_{i_1j}^{(L)} \bar p_{i_2j'}^{(L)})
\langle X_{j}, X_{j'} \rangle| \nonumber \\
& \leq |\sum_{j=1}^n ( p_{i_1j}^{(L)} p_{i_2j}^{(L)} - \bar p_{i_1j}^{(L)} \bar p_{i_2j}^{(L)})\|X_j\|^2|
    + |\sum_{j \neq j'} 
(p_{i_1j}^{(L)} p_{i_2j'}^{(L)} - \bar p_{i_1j}^{(L)} \bar p_{i_2j'}^{(L)})
\langle X_{j}, X_{j'} \rangle| \nonumber \\
&\leq C \big( \frac{k \log n}{n^2} +
\sum_{j} \frac{k \log n}{n^2} \sqrt{\frac{\log n}{d}} \label{eq:obs}
\big) \nonumber \\
& = O_p(\frac{k \log n}{n^2} + \frac{k \log n}{n} \sqrt{\frac{\log n}{d}}),
\end{align}
where \eqref{eq:obs} utilizes the fact that $\sum_j \bar p_{ij} \equiv 1$ for any $i$ and \eqref{eq:p:formula} by adjusting the constant. 

When $i_1, i_2$ are not in the same group (w.l.o.g, suppose $i_1$ in group 1 and $i_2$ in group 2), we have 
\begin{eqnarray}\label{eq:inner:within}
\langle H_{i_1}^{(L)}, H_{i_2}^{(L)} \rangle
&=& \sum_{j=1}^n p_{i_1j}^{(L)} p_{i_2j}^{(L)} \|X_j\|^2 
+ \sum_{j \neq j'} 
p_{i_1j}^{(L)} p_{i_2j'}^{(L)}
\langle X_{j}, X_{j'} \rangle \nonumber \\
&=& \sum_{j=1}^n \bar p_{i_1j}^{(L)} \bar p_{i_2j}^{(L)} \|X_j\|^2 
+ \sum_{j \neq j'} 
\bar p_{i_1j}^{(L)} \bar p_{i_2j'}^{(L)}
\langle X_{j}, X_{j'} \rangle
+ \text{error} \nonumber  \\
&=& p_1^{(L)} p_2^{(L)}(\langle \bar X_1, \bar X_1 \rangle + \langle \bar X_2, \bar X_2 \rangle)
+ (p_1^{(L)} p_1^{(L)} + p_2^{(L)} p_2^{(L)}) \langle \bar X_1, \bar X_2 \rangle  \nonumber \\
& & + (p_1^{(L)} p_2^{(L)} + p_2^{(L)}p_2^{(L)}) \sum_{k \neq 1,2}\langle \bar X_1 + \bar X_2, \bar X_k \rangle + \sum_{k \neq 1,2} p_2^{(L)}p_2^{(L)} \langle \bar X_k, \bar X_k \rangle \nonumber \\
&& + \sum_{k \neq k' \neq 1,2} p_2^{(L)}p_2^{(L)} \langle \bar X_k, \bar X_{k'} \rangle + \text{error} \nonumber \\
&=& 2 p_1^{(L)} p_2^{(L)} \frac{n}{K} + (K-2) p_2^{(L)} p_2^{(L)} \frac{n}{K} \nonumber \\
& &
+ O_p((p_1^{(L)} p_1^{(L)} + K p_1^{(L)} p_2^{(L)} + K^2 p_2^{(L)} p_2^{(L)})\frac{n}{K} \sqrt{\frac{1}{d}}) + \text{error}.
\end{eqnarray}


To sum up, if $i_1, i_2$ are in the same group, $ \cos \theta(H_{i_1}^{(L)}, H_{i_2}^{(L)})$ satisfies 
\begin{align}
    & ~ \cos \theta(H_{i_1}^{(L)}, H_{i_2}^{(L)}) \nonumber \\
    = & \frac{\langle H_{i_1}^{(L)}, H_{i_2}^{(L)} \rangle}{\sqrt{\langle H_{i_1}^{(L)}, H_{i_1}^{(L)} \rangle \cdot \langle H_{i_2}^{(L)}, H_{i_2}^{(L)} \rangle}} \nonumber \\
    = & \frac{p_1^{(L)} p_1^{(L)} \frac{n}{k}
+ (K-1) p_2^{(L)} p_2^{(L)} \frac{n}{K}}{p_1^{(L)} p_1^{(L)} \frac{n}{K}
+ (K-1) p_2^{(L)} p_2^{(L)} \frac{n}{K} + C \big( (K p_1^{(L)} p_2^{(L)} +K^2 p_2^{(L) 2})\frac{n}{K \sqrt{d}} + O_p(\frac{K \log n}{n}(\frac{1}{n} + \sqrt{\frac{\log n}{d}})\big))} \nonumber \\
= & \underbrace{1}_{\text{cut}_1(L)} - o_p(1)
\end{align}
as long as $d \gg K^2 \log^3 n / b^2$.

If $i_1, i_2$ are not in the same group, $ \cos \theta(H_{i_1}^{(L)}, H_{i_2}^{(L)})$ satisfies
\begin{align}
    & ~ \cos \theta(H_{i_1}^{(L)}, H_{i_2}^{(L)}) \nonumber \\
    = & \frac{\langle H_{i_1}^{(L)}, H_{i_2}^{(L)} \rangle}{\sqrt{\langle H_{i_1}^{(L)}, H_{i_1}^{(L)} \rangle \cdot \langle H_{i_2}^{(L)}, H_{i_2}^{(L)} \rangle}} \nonumber \\
    = & \frac{2 p_1^{(L)} p_2^{(L)} \frac{n}{K}
+ (K-2) p_2^{(L)} p_2^{(L)} \frac{n}{K} + C \big( (K p_1^{(L)} p_2^{(L)} +K^2 p_2^{(L) 2})\frac{n}{K \sqrt{d}} + O_p(\frac{K \log n}{n}(\frac{1}{n} + \sqrt{\frac{\log n}{d}})\big))}{p_1^{(L)} p_1^{(L)} \frac{n}{K}
+ (K-1) p_2^{(L)} p_2^{(L)} \frac{n}{K}} \nonumber \\
= &  \underbrace{\frac{2 r^{(L)} + (K-2) r^{(L)2}}{1 + (K-1) r^{(L) 2}}}_{\text{cut}_2(L)} + o_p(1).
\end{align}

Remark. As $L \rightarrow \infty$, $r^{(L)}$ will converge to $1$. Therefore, $\text{cut}_2(L)$ will eventually equal $1 \equiv \text{cut}_1(L)$.  

\textit{Case 2: $L = 1$.}~ 
For notational convenience, we define $\tilde X_{k,1}^{(i)} := \sum_{i \in ~\text{group}~k} b_{i,1}^{(i)} X_i$ where $b_{i,1}^{(i)}$'s are i.i.d. Bernoulli random variables with success probability $p_0$ and 
$\tilde X_{k,2}^{(i)} := \sum_{i \in ~\text{group}~k} b_{i,2}^{(i)} X_i$ where $b_{i,2}^{(i)}$'s are i.i.d. Bernoulli random variables with success probability $q_0$.

Then it is straightforward to calculate that, if $i_1, i_2$ are in the same group 1, it holds
\begin{eqnarray}\label{eq:inner:L1}
\langle H_{i_1}^{(1)}, H_{i_2}^{(1)} \rangle
&=_{d}& \frac{1}{D_{i_1} D_{i_2}} \big( \langle \tilde X_{1,1}^{(i_1)}, \tilde X_{1,1}^{(i_2)} \rangle + \sum_{k \neq 1} \langle \tilde X_{1,1}^{(i_1)}, \tilde X_{k,2}^{(i_2)} \rangle 
+ \sum_{k \neq 1} \langle \tilde X_{k,2}^{(i_1)}, \tilde X_{1,1}^{(i_2)} \rangle \nonumber \\
& & + \sum_{k \neq 1} \langle \tilde X_{k,2}^{(i_1)}, \tilde X_{k,2}^{(i_2)} \rangle 
+ \sum_{k,k' \neq 1} \langle \tilde X_{k,2}^{(i_1)}, \tilde X_{k',2}^{(i_2)} \rangle \big)\nonumber \\
&=& \frac{1}{D_{i_1}D_{i_2}} \big( \frac{n}{k} p_0^2 + (K-1)\frac{n}{K} q_0^2 + O_p(p_0 \frac{n\sqrt{\log n}}{K\sqrt{d}} + \sqrt{p_0 q_0}  \frac{n \sqrt{\log n}}{\sqrt{d}} \nonumber \\ 
& & + K q_0 \frac{n\sqrt{\log n}}{\sqrt{d}}) \big).
\end{eqnarray}
Similarly, if $i_1, i_2$ are not in the same group (w.l.o.g, they are in group 1 and 2 respectively), it holds
\begin{eqnarray}\label{eq:cross:L1}
\langle H_{i_1}^{(1)}, H_{i_2}^{(1)} \rangle
&=_{d}& \frac{1}{D_{i_1} D_{i_2}} \big( \langle \tilde X_{1,1}^{(i_1)}, \tilde X_{1,2}^{(i_2)} \rangle
+ \langle \tilde X_{2,2}^{(i_1)}, \tilde X_{1,1}^{(i_2)} \rangle
+ 
\langle \tilde X_{1,1}^{(i_1)}, \tilde X_{2,1}^{(i_2)} \rangle
+ \langle \tilde X_{2,2}^{(i_1)}, \tilde X_{1,2}^{(i_2)} \rangle
\nonumber \\
& & + \sum_{k \neq 1,2} \langle \tilde X_{1,1}^{(i_1)} + \tilde X_{2,2}^{(i_1)}, \tilde X_{k,2}^{(i_2)} \rangle 
+ \sum_{k \neq 1,2} \langle \tilde X_{k,2}^{(i_1)}, \tilde X_{1,2}^{(i_2)} + \tilde X_{2,1}^{(i_2)} \rangle \nonumber \\
& & + \sum_{k \neq 1,2} \langle \tilde X_{k,2}^{(i_1)}, \tilde X_{k,2}^{(i_2)} \rangle 
+ \sum_{k,k' \neq 1,2} \langle \tilde X_{k,2}^{(i_1)}, \tilde X_{k',2}^{(i_2)} \rangle \big)\nonumber \\
&=& \frac{1}{D_{i_1}D_{i_2}} \big( 2 \frac{n}{k} p_0 q_0 + (K-2)\frac{n}{K} q_0^2 + O_p(p_0 \frac{n\sqrt{\log n}}{K\sqrt{d}} + \sqrt{p_0 q_0}  \frac{n\sqrt{\log n}}{\sqrt{d}} \nonumber \\
& & + K q_0 \frac{n\sqrt{\log n}}{\sqrt{d}}) \big).
\end{eqnarray}
Moreover, if $i_1 = i_2$, it holds
\begin{eqnarray}\label{eq:same:L1}
\langle H_{i_1}^{(1)}, H_{i_1}^{(1)} \rangle
&=_{d}& \frac{1}{D_{i_1} D_{i_1}} \big( \langle \tilde X_{1,1}^{(i_1)}, \tilde X_{1,1}^{(i_1)} \rangle + 2 \sum_{k \neq 1} \langle \tilde X_{1,1}^{(i_1)}, \tilde X_{k,2}^{(i_1)} \rangle  \rangle \nonumber \\
& & + \sum_{k \neq 1} \langle \tilde X_{k,2}^{(i_1)}, \tilde X_{k,2}^{(i_1)} \rangle 
+ \sum_{k,k' \neq 1} \langle \tilde X_{k,2}^{(i_1)}, \tilde X_{k',2}^{(i_1)} \rangle \big)\nonumber \\
&=& \frac{1}{D_{i_1}D_{i_2}} \big( \frac{n}{k} p_0 + (K-1)\frac{n}{K} q_0 + O_p(p_0 \frac{n\sqrt{\log n}}{K\sqrt{d}} + \sqrt{p_0 q_0}  \frac{n \sqrt{\log n}}{\sqrt{d}} \nonumber \\
& & + K q_0 \frac{n\sqrt{\log n}}{\sqrt{d}}) \big).
\end{eqnarray}

To sum up, we arrive at 
\begin{eqnarray}
    & & \cos \theta(H_{i_1}^{(1)}, H_{i_2}^{(1)}) \nonumber \\
    &=& \frac{\langle H_{i_1}^{(1)}, H_{i_2}^{(1)} \rangle}{\sqrt{\langle H_{i_1}^{(1)}, H_{i_1}^{(1)} \rangle \cdot \langle H_{i_2}^{(1)}, H_{i_2}^{(1)} \rangle}} \nonumber \\
    & = & \frac{\frac{n}{k} p_0^2 + (K-1)\frac{n}{K} q_0^2}{\frac{n}{k} p_0 + (K-1)\frac{n}{K} q_0 + O_p (p_0 \frac{n\sqrt{\log n}}{K\sqrt{d}} + \sqrt{p_0 q_0}  \frac{n\sqrt{\log n}}{\sqrt{d}} + K q_0 \frac{n\sqrt{\log n}}{\sqrt{d}})}
    \nonumber \\
    & = & 
    \underbrace{\frac{\frac{n}{k} p_0^2 + (K-1)\frac{n}{K} q_0^2}{\frac{n}{k} p_0 + (K-1)\frac{n}{K} q_0}}_{\text{cut}_1(1)} + o_p(1)
\end{eqnarray}
for $i_1,i_2$ from the same group, when $d \gg \log n$. Similarly, we have 
\begin{eqnarray}
    & & \cos \theta(H_{i_1}^{(1)}, H_{i_2}^{(2)}) \nonumber \\
    &=& \frac{\langle H_{i_1}^{(1)}, H_{i_2}^{(1)} \rangle}{\sqrt{\langle H_{i_1}^{(1)}, H_{i_1}^{(1)} \rangle \cdot \langle H_{i_2}^{(1)}, H_{i_2}^{(1)} \rangle}} \nonumber \\
    & = & \frac{\frac{n}{k} p_0^2 + (K-1)\frac{n}{K} q_0^2 + O_p(p_0 \frac{n\sqrt{\log n}}{K\sqrt{d}} + \sqrt{p_0 q_0}  \frac{n\sqrt{\log n}}{\sqrt{d}} + K q_0 \frac{n\sqrt{\log n}}{\sqrt{d}})}{\frac{n}{k} p_0 + (K-1)\frac{n}{K} q_0}
    \nonumber \\
    & = &  
    \underbrace{\frac{2 \frac{n}{k} p_0 q_0 + (K-2)\frac{n}{K} q_0^2}{\frac{n}{k} p_0 + (K-1)\frac{n}{K} q_0}}_{\text{cut}_2(1)} + o_p(1)
\end{eqnarray}
for $i_1,i_2$ from different groups, when $d \gg \log n /  p_0^2$.


Therefore, for any fixed $L \geq $ and any fixed cutoff $\tau \geq \text{cut}_1(L)$, then \attack will predict at least 
$p K \frac{n}{K} \cdot (\frac{n}{K} - 1) / 2 + q K (K - 1) /2 + \frac{n}{K} \cdot \frac{n}{K}$ truly connected pairs  as dis-connected.  
In other words, we have the false negative rate is at least $p/(2k) + q/2$.
If the cutoff $\tau$ is between $\text{cut}_2(L)$ and $\text{cut}_1(L)$, then 
 \attack will predict at least 
$(1 - p) K \frac{n}{K} \cdot (\frac{n}{K} - 1) / 2$ truly dis-connected pairs as connected and 
predict at least 
$q K (K - 1) /2 + \frac{n}{K} \cdot \frac{n}{K}$ truly connected pairs as dis-connected.
That is, false positive rate is at least $(1 - p)/(2k)$ and false negative rate is at least $(1 - q)/2$.
If the cutoff $\tau$ is less than $\text{cut}_2(L)$, then \attack will predict at least 
$(1- p) K \frac{n}{K} \cdot (\frac{n}{K} - 1) / 2 + (1-q) K (K - 1) /2 + \frac{n}{K} \cdot \frac{n}{K}$ truly connected pairs  as dis-connected.
That is, false positive rate is at least $(1 - p)/(2k) + (1 - q)/2$.
This completes the proof.

\subsection{Proof of theorem \ref{thm: lb_noisy_gnn}}\label{sec: proof_lb_noisy_gnn}
As mentioned in section \ref{sec: defense}, \nag actually protects against a much stronger class of adversaries. Specifically, let $\mathbf{H} = \{H^{(l)}\}_{0 \le l \le L}$ denote all the intermediate representations produced by the underlying GNN with weights $\mathbf{W} = \{W_l\}_{l \in [L]}$. The following theorem is a stronger version of theorem \ref{thm: lb_noisy_gnn}:
\begin{theorem}
    Assume the adversary $\mathcal{A}$ has access to $\mathbf{H}$ and $\mathbf{W}$, and outputs an estimate of graph adjacencies $\widehat{A} = \mathcal{A}(\mathbf{H}, \mathbf{W})$. Then for any graph $G$ and any such adversary $\mathcal{A}$, we have the lower bound:
    \begin{align}\label{eqn: lb_v2}
        \resizebox{.93\linewidth}{!}{$
        \begin{aligned}
            \min_{u \in V, v \in V}\left[\PP{\widehat{A}_{uv} = 1 | A_{uv} = 0} + \PP{\widehat{A}_{uv} = 0 | A_{uv} = 1}\right] \ge 1 - \sqrt{1 - \exp \left(-C\dfrac{\sum_{l \in [L]} \opnorm{W_l}^2}{\sigma^2}\right)}.
        \end{aligned}
        $}
    \end{align}
    Here the constant $C$ depends on the \agg mechanism of the GNN. In particular, for some standard GNN architectures we have: $C_\text{GCN} = C_\text{MEAN-SAGE} = C_\text{GIN} = 1$ and $C_\text{GAT} = C_\text{MAX-SAGE} = 4$.
\end{theorem}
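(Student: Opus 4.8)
The plan is to treat the reconstruction of a single edge as a binary hypothesis testing problem and to show that the two hypotheses induce nearly indistinguishable observation laws once the representations are perturbed by Gaussian noise. Fix a pair $(u,v)$ and let $P_0,P_1$ denote the laws of the observation $(\mathbf{H},\mathbf{W})$ under $A_{uv}=0$ and $A_{uv}=1$ respectively. Since the weights $\mathbf{W}$ are identical under both hypotheses (they do not depend on the presence of the edge), I would condition on $\mathbf{W}$ and regard it as fixed; moreover, granting the adversary the full trajectory $\mathbf{H}$ is the most informative observation, so by the data processing inequality the resulting lower bound dominates the one for any weaker adversary who only sees $H^{(L)}$, and it suffices to establish \eqref{eqn: lb_v2}. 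For any decision rule the classical Le Cam bound gives
\[
\PP{\widehat A_{uv}=1\mid A_{uv}=0}+\PP{\widehat A_{uv}=0\mid A_{uv}=1}\ \ge\ 1-\mathsf{TV}(P_0,P_1),
\]
and using $\mathsf{TV}(P_0,P_1)\le\sqrt{1-\rho(P_0,P_1)^2}$, where $\rho(P_0,P_1)=\int\sqrt{dP_0\,dP_1}$ is the Hellinger affinity, it remains to lower bound $\rho(P_0,P_1)$.

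\textbf{Tensorization over layers.} The trajectory $H^{(0)}\to H^{(1)}\to\cdots\to H^{(L)}$ is a Markov chain whose layer-$l$ transition kernel $K_b^{(l)}$ (under hypothesis $b$) first forms the aggregated, weighted message, then adds $\epsilon\sim N(0,\sigma^2 I_d)$, then applies $\textsf{Act}$. The key structural observation is that, conditioned on an identical previous layer $H^{(l-1)}=x$, the pre-activation means $\mu_0^{(l)}(x)$ and $\mu_1^{(l)}(x)$ agree on every node except $u$ and $v$, since flipping the single edge $(u,v)$ only alters the neighborhoods of $u$ and $v$. I would peel off one layer at a time: writing $\underline\rho_l=\inf_x\int\sqrt{K_0^{(l)}(y\mid x)K_1^{(l)}(y\mid x)}\,dy$, nonnegativity of the integrand yields $\rho(P_0,P_1)\ge\prod_{l=1}^L\underline\rho_l$ (the layer-$1$ factor uses that $H^{(0)}=X$ is shared). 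For each conditional transition the data processing inequality for the affinity lets me discard the deterministic map $\textsf{Act}$ and reduce to the two pre-activation Gaussians $N(\mu_0^{(l)}(x),\sigma^2 I)$ and $N(\mu_1^{(l)}(x),\sigma^2 I)$, whose affinity equals $\exp(-\|\mu_0^{(l)}(x)-\mu_1^{(l)}(x)\|_2^2/(8\sigma^2))$. Hence $\underline\rho_l\ge\exp(-\delta_l^2/(8\sigma^2))$ with $\delta_l:=\sup_x\|\mu_0^{(l)}(x)-\mu_1^{(l)}(x)\|_2$, so $\rho(P_0,P_1)\ge\exp(-\sum_{l}\delta_l^2/(8\sigma^2))$.

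\textbf{Per-layer sensitivity and conclusion.} It remains to bound the aggregation sensitivity $\delta_l$. Because every message entering $\agg$ is $\ell_2$-normalized before multiplication by $W_l$, each individual contribution has norm at most $\opnorm{W_l}$; flipping $(u,v)$ perturbs only the aggregates at $u$ and $v$, so I would show $\delta_l^2\le 4C\,\opnorm{W_l}^2$ with an architecture-dependent constant $C$ capturing the normalization of $\agg$. Substituting gives $\rho(P_0,P_1)^2\ge\exp(-\sum_l\delta_l^2/(4\sigma^2))\ge\exp(-C\sum_l\opnorm{W_l}^2/\sigma^2)$, and combining with the two inequalities of the first paragraph yields \eqref{eqn: lb_v2}; since the sensitivity bound is uniform over $(u,v)$, the minimum over node pairs is controlled as well. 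I expect the main obstacle to be precisely this last step: computing $\delta_l$ for each mechanism to obtain the sharp constants $C_\text{GCN}=C_\text{MEAN-SAGE}=C_\text{GIN}=1$ and $C_\text{GAT}=C_\text{MAX-SAGE}=4$. For mean pooling, GCN, and GIN the aggregate is a convex-combination or degree-normalized sum, so inserting one normalized term shifts the mean by $O(\opnorm{W_l})$; for GAT and max pooling the analysis is more delicate because the attention coefficients (respectively the coordinatewise maximum) react to the inserted neighbor, which is what inflates the constant from $1$ to $4$.
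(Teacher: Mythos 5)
Your proposal is correct and reaches the stated bound with the stated constants, but it takes a genuinely different technical route from the paper's proof. Both arguments share the same skeleton: view recovery of $A_{uv}$ as a binary hypothesis test, apply Le Cam's inequality to lower bound the error sum by $1 - d_\text{TV}$, and control the distance between the two trajectory laws through a per-layer Gaussian sensitivity analysis in which only the aggregates at $u$ and $v$ change. Where you diverge is the choice of divergence machinery: the paper bounds $\fdivergence{KL}{\mathbf{H}_1}{\mathbf{H}_0}$ by $C\sum_{l}\opnorm{W_l}^2/\sigma^2$ using the Gaussian-mechanism KL computation together with the adaptive composition theorem from R\'enyi differential privacy (citing Mironov), and then converts KL to total variation via the Bretagnolle--Huber inequality $d_\text{TV} \le \sqrt{1 - e^{-\mathrm{KL}}}$; you instead work with the Hellinger affinity, tensorize it directly over the layers of the Markov chain by a peeling argument (valid because the integrand is nonnegative, the two chains share $H^{(0)}=X$, and the noise is fresh at each layer), use data processing to discard $\textsf{Act}$, and convert via $d_\text{TV}\le\sqrt{1-\rho^2}$. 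Your route is more self-contained---no appeal to DP composition theorems---and in fact quantitatively stronger: since the Bhattacharyya exponent for equal-covariance Gaussians is $\left\|\Delta\right\|^2/(8\sigma^2)$ while the KL is $\left\|\Delta\right\|^2/(2\sigma^2)$, your chain yields $\rho^2 \ge \exp\left(-\sum_l \delta_l^2/(4\sigma^2)\right)$, which improves the paper's exponent by a factor of $2$ (your $\delta_l^2 \le 2\opnorm{W_l}^2$ for mean/GCN/GIN and $\delta_l^2 \le 8\opnorm{W_l}^2$ for max/GAT sit comfortably inside the $4C\opnorm{W_l}^2$ you need). The one place where your proposal, like the paper, leaves work on the table is the per-architecture sensitivity bounds: the paper handles mean pooling explicitly, delegates GIN and GCN (whose degree-dependent normalization requires some care) to cited propositions, gives a one-line argument for max pooling, and never actually writes out the GAT case; your sketch of why attention inflates the constant---the aggregate is a convex combination whose weights react to the inserted neighbor, so the per-node shift can reach $2\opnorm{W_l}$ rather than $\opnorm{W_l}$---is the right argument and would complete that missing case.
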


The theorem is a consequence of the following lemma:
\begin{lemma}\label{lem: edp}
    Fix an arbitrary node pair $(u, v)$. Let $\mathbf{H}_1$ and $\mathbf{H}_0$ be the collection of node representations generated under $A_{uv} = 1$ and $A_{uv} = 0$, respectively. It follows that the Kullback-Leibler divergence between $\mathbf{H}_1$ and $\mathbf{H}_0$ is bounded:
    \begin{align}
        \fdivergence{KL}{\mathbf{H}_1}{\mathbf{H}_0} \le C\dfrac{\sum_{l \in [L]} \opnorm{W_l}^2}{\sigma^2}.
    \end{align}
    Here the constant $C = 1$ for \eqref{eqn: nag_mp_gcn}, \eqref{eqn: nag_gin} and \eqref{eqn: nag_gcn}; and $C = 4$ for \eqref{eqn: nag_max} and \eqref{eqn: nag_gat}.
\end{lemma}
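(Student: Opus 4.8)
The plan is to bound the joint Kullback--Leibler divergence $\fdivergence{KL}{\mathbf{H}_1}{\mathbf{H}_0}$ by exploiting the layered Markov structure of the noisy GNN together with the \emph{locality} of a single edge flip. Writing $\mathbf{H} = \{H^{(l)}\}_{0 \le l \le L}$ and noting that $H^{(0)} = X$ is identical (and independent of the flipped edge) under both configurations, I would invoke the chain rule for KL divergence across layers:
\begin{align}
    \fdivergence{KL}{\mathbf{H}_1}{\mathbf{H}_0} = \sum_{l=1}^L \mathbb{E}_{H^{(<l)} \sim \mathbf{H}_1}\left[ \fdivergence{KL}{\text{law}_1(H^{(l)} \mid H^{(l-1)})}{\text{law}_0(H^{(l)} \mid H^{(l-1)})}\right].
\end{align}
The crucial structural fact is that flipping the edge $(u,v)$ alters the extended neighborhood $\overline{N(\cdot)}$ only at the two endpoints $u$ and $v$. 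Hence, after conditioning on an identical previous layer $H^{(l-1)}$, the two per-layer conditional laws differ only in the aggregation inputs feeding nodes $u$ and $v$.

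Next I would reduce each per-layer term to a Gaussian sensitivity computation. Given $H^{(l-1)}$, the conditional law of $H^{(l)}$ factorizes over nodes, since the additive noise $\epsilon \sim N(0, \sigma^2 I_d)$ is independent across nodes and $H_w^{(l)} = \textsf{Act}(\text{agg}_w + \epsilon)$ for a pre-activation mean $\text{agg}_w$ determined by $H^{(l-1)}$ and the local graph. The conditional KL therefore splits as a sum over nodes, and every $w \notin \{u,v\}$ contributes zero because its aggregation mean is unchanged. For $w \in \{u,v\}$, the deterministic activation $\textsf{Act}$ can only decrease divergence by the data-processing inequality, so the contribution is at most the KL between two isotropic Gaussians with common covariance $\sigma^2 I_d$, namely $\left\| \text{agg}_w^1 - \text{agg}_w^0 \right\|_2^2 / (2\sigma^2)$, where the superscripts index the two edge configurations. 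Everything thus reduces to controlling the single-edge $\ell_2$ sensitivity $\left\| \text{agg}_w^1 - \text{agg}_w^0 \right\|_2$ of each aggregation rule.

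The final, most delicate step is to bound this sensitivity architecture by architecture; here the per-layer normalization $H_u^{(l-1)}/\|H_u^{(l-1)}\|_2$ is essential, as it forces every aggregated message $W_l \tilde H_u^{(l-1)}$ to have $\ell_2$ norm at most $\opnorm{W_l}$. For \eqref{eqn: nag_gin} the sum shifts by exactly one message, giving sensitivity $\le \opnorm{W_l}$; for \eqref{eqn: nag_mp_gcn} a short calculation shows the mean moves by $(W_l \tilde H_{\text{new}}^{(l-1)} - \text{agg}^0)/(d_w{+}1)$, again bounded by $\opnorm{W_l}$; the GCN rule \eqref{eqn: nag_gcn} is analogous but requires tracking the degree renormalization $1/\sqrt{d_w+1}$ that perturbs every term, for which I would follow the sensitivity bookkeeping of \cite{sajadmanesh2023gap,wu2023privacy}. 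These three give the constant $c=1$. The attentive and max rules are looser: for \eqref{eqn: nag_gat}, both $\text{agg}^1$ and $\text{agg}^0$ are convex combinations of unit-normalized messages of norm $\le \opnorm{W_l}$, so the triangle inequality yields sensitivity $\le 2\opnorm{W_l}$; for \eqref{eqn: nag_max}, the coordinate-wise increment $\bigl((W_l \tilde H_{\text{new}}^{(l-1)})_j - \text{agg}^0_j\bigr)_+$ is upper bounded, using the ever-present self-loop message $W_l \tilde H_w^{(l-1)}$ as a lower bound on $\text{agg}^0_j$, by $\bigl((W_l \tilde H_{\text{new}}^{(l-1)})_j - (W_l \tilde H_w^{(l-1)})_j\bigr)_+$, which telescopes over coordinates into $\left\| W_l(\tilde H_{\text{new}}^{(l-1)} - \tilde H_w^{(l-1)}) \right\|_2 \le 2\opnorm{W_l}$, giving $c=2$. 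Summing the two active nodes per layer and then over $l \in [L]$ produces $\fdivergence{KL}{\mathbf{H}_1}{\mathbf{H}_0} \le \sum_{l} 2\,(c\,\opnorm{W_l})^2/(2\sigma^2) = c^2 \sum_{l} \opnorm{W_l}^2/\sigma^2$, matching $C = 1$ and $C = 4$ respectively.

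I expect the main obstacle to be precisely the max-pooling sensitivity: the coordinate-wise maximum does \emph{not} preserve the $\ell_2$-norm bound enjoyed by the averaging rules, so a naive bound would inject a spurious $\sqrt{d}$ factor. The self-loop telescoping argument sketched above is exactly what removes this dependence, and making it rigorous—alongside the GCN degree-renormalization accounting—is where the genuine technical work lies; the chain-rule and data-processing reductions in the first two steps are comparatively routine.
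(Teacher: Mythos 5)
Your proposal is correct and takes essentially the same route as the paper's proof: a per-layer Gaussian KL (R\'enyi-type) sensitivity computation localized to the two endpoints $u,v$ of the flipped edge, composed across layers (your KL chain rule is the paper's adaptive composition step), and finished by architecture-specific single-edge sensitivity bounds of $\opnorm{W_l}$ or $2\opnorm{W_l}$, which square to the constants $C=1$ and $C=4$. Your explicit data-processing step for the activation, the convex-combination argument for GAT, and the self-loop coordinate-wise bound for max pooling actually supply details that the paper only sketches or defers to propositions in \cite{wu2023privacy}.
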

\begin{proof}[Proof of lemma \ref{lem: edp}]
    The proof is essentially a proof of \renyi differential privacy similar to that in \cite{wu2023privacy}. First we fix a single $l$-th layer of GNN defined in \eqref{eqn: noisy_agg}. We rewrite \eqref{eqn: noisy_agg} as:
    \begin{align}\label{eqn: mp}
        H_v^{(l)} = \textsf{Act}\left(\agg\left(\dfrac{W_l H_u^{(l-1)}}{\left\|H_u^{(l-1)}\right\|_2}, u \in N(v) \cup \{v\} \right) + \epsilon \right) := \textsf{Act} \left(\tilde{H}^{(l-1)}_v + \epsilon \right)
    \end{align}
    Let the corresponding representation matrix be $H^{(l)}_1$ for $A_{uv} = 1$ and $H^{(l)}_0$ for $A_{uv} = 0$ for any $l \in [L]$. Further denote $\widetilde{H}^l_a = \{\tilde{H}^{(l)}_{v, a}\}_{v \in V}$ as the intermediate representation defined as in \eqref{eqn: mp} with $A_{uv} = a, a \in \{0, 1\}$. Then by standard results on \renyi divergence \cite{mironov2017renyi}, we have
    \begin{align}
        \fdivergence{KL}{H^l_1}{H^l_0} = \dfrac{\left\|\widetilde{H}^{(l)}_1 - \widetilde{H}^{(l)}_0\right\|_2^2}{2\sigma^2}
    \end{align}
    For some input $H^{l-1}$. It follows that given all the other edges, the only terms that contributes to $\left\|\widetilde{H}^{(l)}_1 - \widetilde{H}^{(l)}_0\right\|_2^2$ are $\left\|\tilde{H}^{(l)}_{v, 1} - \tilde{H}^{(l)}_{v, 0}\right\|_2^2$ and $\left\|\tilde{H}^{(l)}_{u, 1} - \tilde{H}^{(l)}_{u, 0}\right\|_2^2$. Next we give the derivation of various GNN architectures:
    \begin{description}
        \item[The case of \eqref{eqn: nag_mp_gcn}] We let $d_v$ to be the degree of $v$ assuming $A_{uv} = 1$. Further let $g_v^{(l)} = \frac{W_l h_u^{(l-1)}}{\left\|h_u^{(l-1)}\right\|_2} $ We have:
        \begin{align}
            \left\|\tilde{H}^{(l)}_{u, 1} - \tilde{H}^{(l)}_{u, 0}\right\|_2 &= \left\|\frac{1}{d_v + 1} \left(g_v^{(l - 1)} - \frac{1}{d_v} \sum_{u \in \overline{N}(v) \backslash \{v\}}g_u^{(l - 1)} \right)\right\|_2 \\
            &\le \frac{1}{2}\left(\left\|g_v^{(l - 1)}\right\|_2 + \frac{1}{d_v}\sum_{u \in \overline{N}(v) \backslash \{v\}}\left\|g_u^{(l - 1)}\right\|_2\right) \\
            &\le \frac{1}{2}\left(\opnorm{W_l} + \frac{1}{d_v}\sum_{u \in \overline{N}(v) \backslash \{v\}}\opnorm{W_l}\right) \\
            &=\opnorm{W_l}
        \end{align}
        Analogously we have $\left\|\tilde{H}^{(l)}_{u, 1} - \tilde{H}^{(l)}_{u, 0}\right\|^2_2 \le \opnorm{W_l}^2$ and thus $\fdivergence{KL}{H^{(l)}_1}{H^{(l)}_0} \le \frac{\opnorm{W_l}^2}{\sigma^2}$. The result follows from adaptive composition as in \cite[Proposition 1]{mironov2017renyi}.
        \item[The case of \eqref{eqn: nag_gin}] This follows by combining the preceding argument with \cite[Proposition 1]{wu2023privacy}.
        \item[The case of \eqref{eqn: nag_gcn}] This follows by combining the preceding argument with \cite[Proposition 2]{wu2023privacy}.
        \item[The case of \eqref{eqn: nag_max}] The result follows from the following fact that 
        $
            \left\|\max_{u \in \overline{N}(v)} g_u - \max_{u \in \overline{N}\backslash\{v\}} g_u\right \|_2
        $
        attains its maximum when $g_v = -g_u, \forall u \in \overline{N}\backslash\{v\}$ since all the $g_u$s are unit vectors.
    \end{description}
\end{proof}
\begin{proof}[Proof of theorem \ref{thm: lb_noisy_gnn}]
    We view the reconstruction problem regarding $A_{uv}$ as a binary hypothesis testing problem
    \begin{align}
        H_0: A_{uv} = 0\qquad\text{v.s.}\qquad H_1: A_{uv} = 1.
    \end{align}
    Then according to hypothesis testing theory \cite{lehmann1986testing}, we have
    \begin{align}\label{eqn: hp}
        \inf_{\widehat{A}} \left[\PP{\widehat{A}_{uv} = 1 | A_{uv} = 0} + \PP{\widehat{A}_{uv} = 0 | A_{uv} = 1}\right] \ge 1 - d_\text{TV}\left(\mathbf{H}_1, \mathbf{H}_0\right),
    \end{align}
    where we use $d_\text{TV}\left(\mathbf{H}_1, \mathbf{H}_0\right)$ to denote the total variation distance of distributions induced by $\mathbf{H}_1$ and $\mathbf{H}_0$ respectively. By the Bretagnolle–Huber bound \cite[Theorem 1]{canonne2022short}, we have
    \begin{align}\label{eqn: bh_bound}
        d_\text{TV}\left(\mathbf{H}_1, \mathbf{H}_0\right) \le \sqrt{1 - \exp\left(-\fdivergence{KL}{\mathbf{H}_1}{\mathbf{H}_0}\right)}
    \end{align}
    The result then follows by combining \eqref{eqn: hp}, \eqref{eqn: bh_bound} and lemma \ref{lem: edp}.
\end{proof}
\begin{figure}
    \centering
    \begin{subfigure}[b]{0.7\linewidth}
        \includegraphics[width=\linewidth]{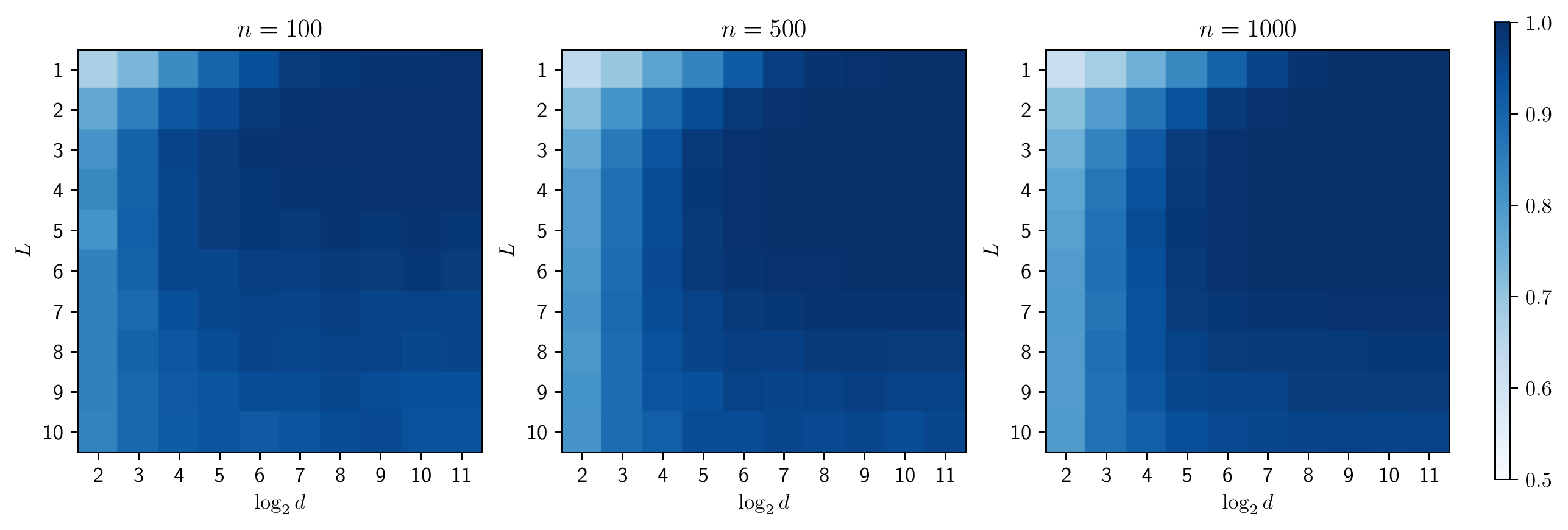}
        \caption{Measured in \auc metric, darker color implies higher attacking performance}
    \end{subfigure}
    \begin{subfigure}[b]{0.7\linewidth}
        \includegraphics[width=\linewidth]{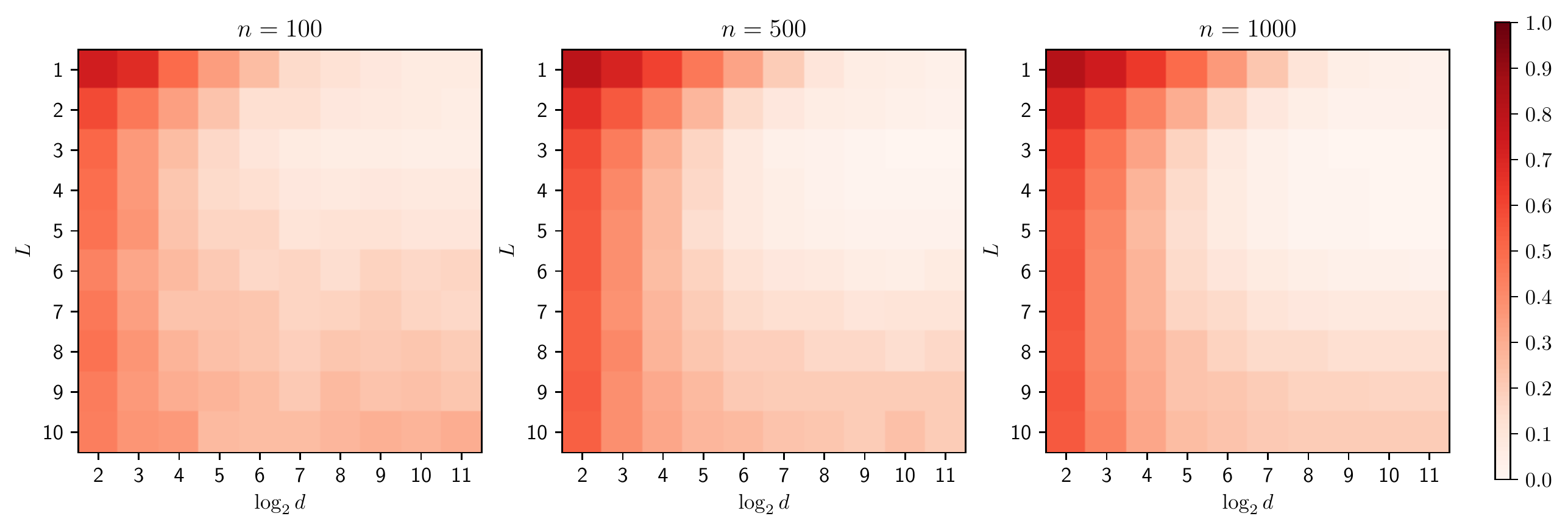}
        \caption{Measured in \err metric, lighter color implies higher attacking performance}
    \end{subfigure}
    \caption{Attacking efficacy of \attack over sparse \er graphs, with each grid's value indicating \attack's performance measured in either \auc(first row) or \err(second row) metric.}
    \label{fig: syn_er}
\end{figure}
\section{Further experiments}\label{sec: exp_full}
\subsection{Synthetic experiments}\label{sec: er_weights}

\paragraph{\er experiments with random GNN weights} The experimental setup in this section is basically the same as that in section \ref{sec: syn_exp}, except that the model weights are generated by the following process: For an $L$-layer Linear GNN, we generate the weight matrix as:
\begin{align}\label{eqn: er_w}
    W = W_1 \times \cdots \times W_L.
\end{align}
Here each $W_l, 1\le l \le 10$ is a random matrix generated using the initialization method proposed in \cite{he2015delving}. The evaluations are shown in figure \ref{fig: syn_er_weights}. 

\begin{figure}
    \centering
    \begin{subfigure}[b]{0.7\linewidth}
        \includegraphics[width=\linewidth]{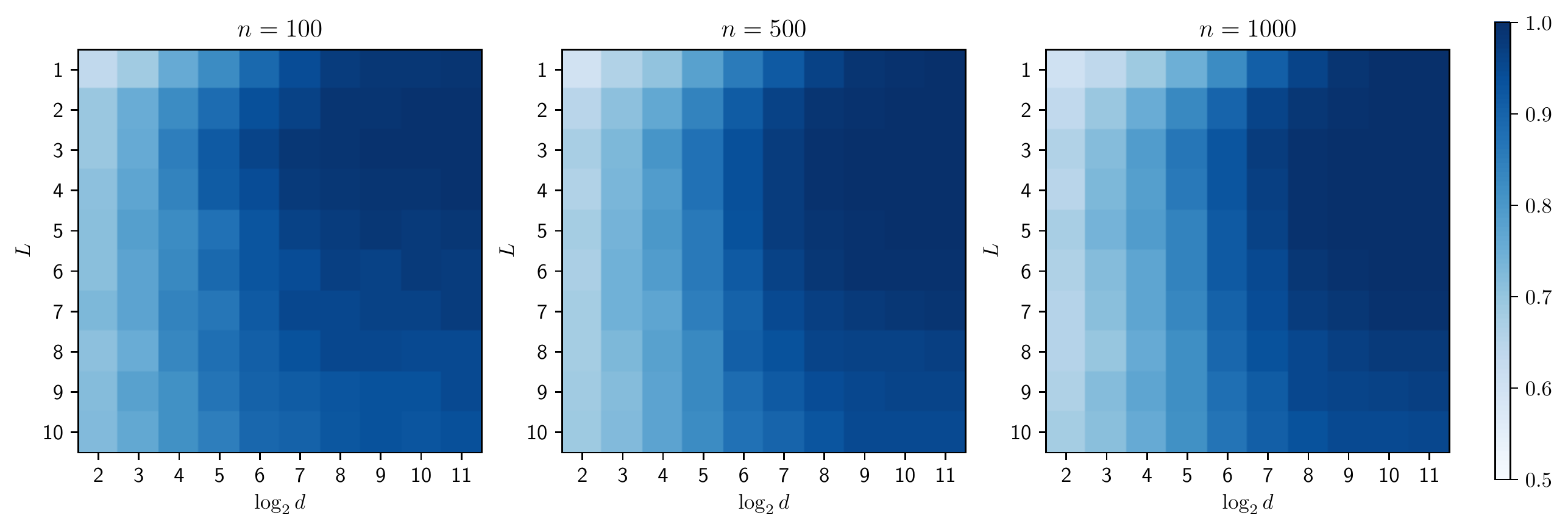}
        \caption{Measured in \auc metric, darker color implies higher attacking performance}
    \end{subfigure}
    \begin{subfigure}[b]{0.7\linewidth}
        \includegraphics[width=\linewidth]{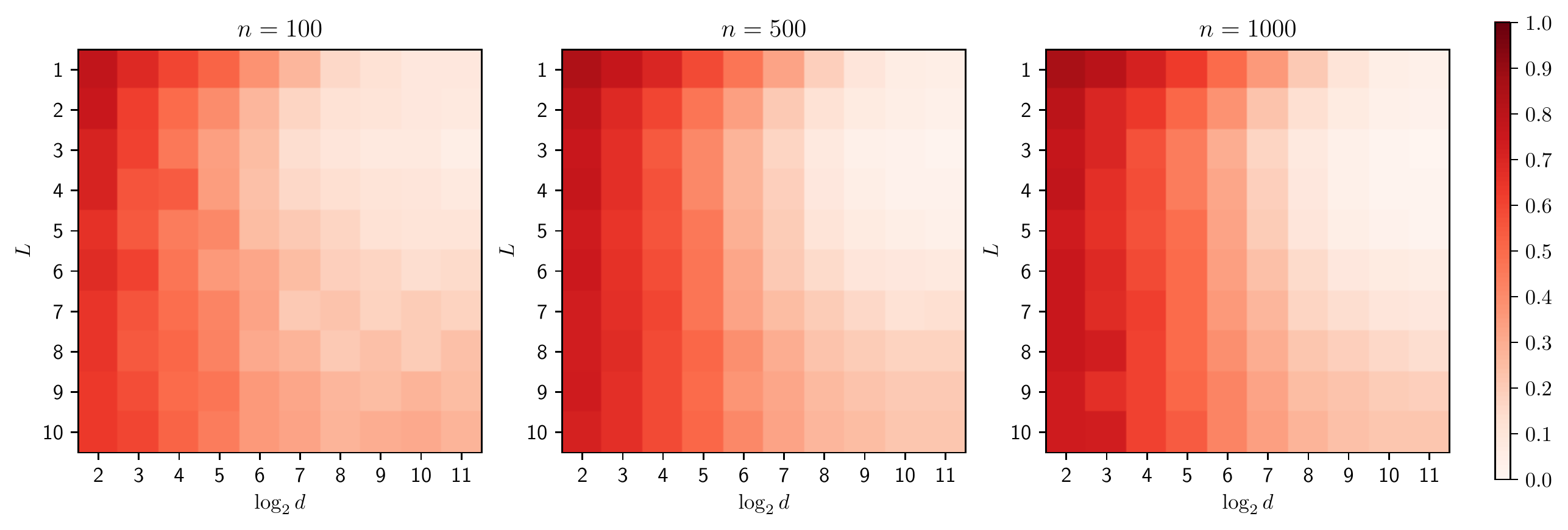}
        \caption{Measured in \err metric, lighter color implies higher attacking performance}
    \end{subfigure}
    \caption{Attacking efficacy of \attack over sparse \er graphs, with each grid's value indicating \attack's performance measured in either \auc(first row) or \err(second row) metric.}
    \label{fig: syn_er_weights}
\end{figure}

The results exhibit a similar pattern to figure \ref{fig: syn_er} where the weight matrix is set to identity. However, the attacking performance differs between the two scenarios: When the matrix $W$ is poorly conditioned (a consequence of the construction \eqref{eqn: er_w}), the attacking performance degrades especially when the feature dimension $d$ is not sufficiently large.\par

\paragraph{Impact of SBM structure} To investigate the impact of SBM structure on the performance of \attack, we fix the GNN architecture at $L=1$ and evaluate on a graph with $100$ nodes and node feature dimension $d=2048$. Note that we choose a relatively large node feature dimension to ensure that the assumption listed in theorem \ref{thm: era_sbm} is approximately met. We vary the SBM within-group probability according to $p \in \{0.1, 0.3, 0.5, 0.7\}$ and the number of groups according to $ 1 \le K \le 20$. The results, plotted in figure \ref{fig: sbm_k_p},
suggest that in general, the attacking performance is positively correlated with the number of groups $K$ since more groups yield stronger sparsity according to the SBM generation law. This phenomenon is also in accordance with theorem \ref{thm: era_sbm}.
\begin{figure}
    \centering
    \includegraphics[width=0.7\linewidth]{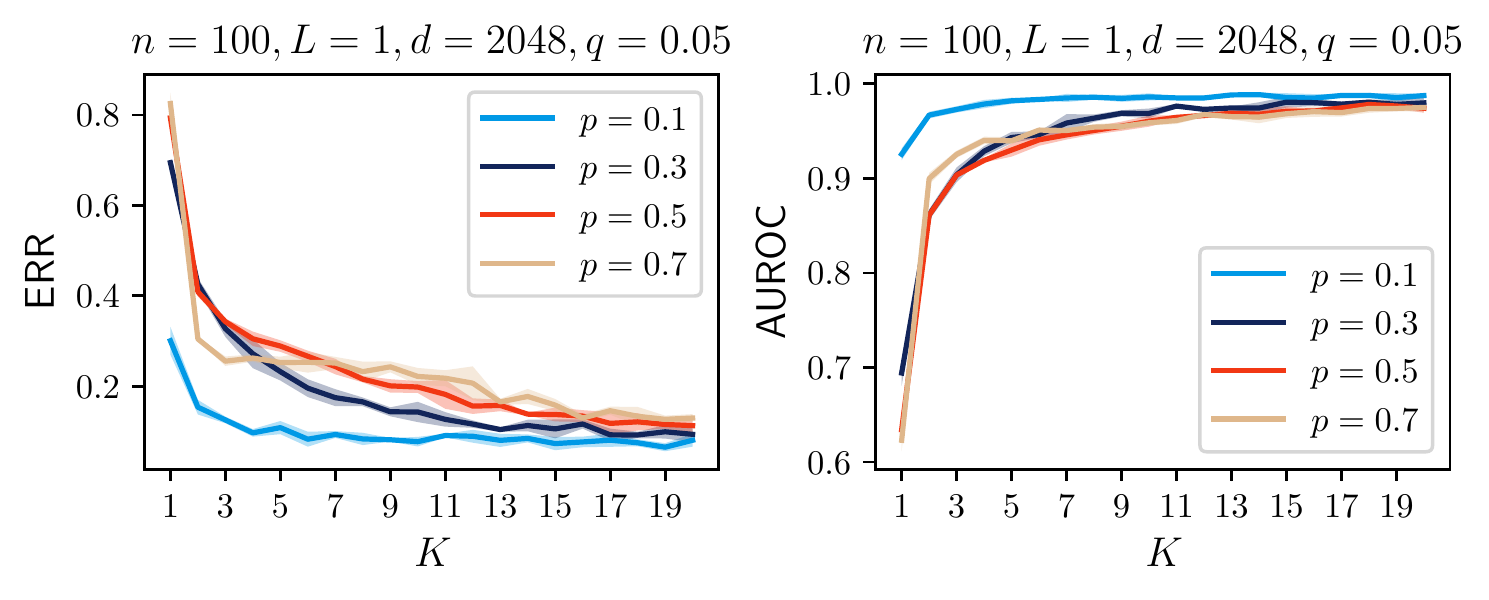}
    \caption{Performance of \attack on SBM with varying $K$ and $p$. All plots are based on $5$ independent trials with shades indicating one standard deviation.}
    \label{fig: sbm_k_p}
\end{figure}

\subsection{A complete report of attack performance on real-world datasets}\label{sec: attack_full}
\paragraph{Dataset characteristics}
We report a brief summary of datasets used in table \ref{tab: dataset}.
\begin{table}[]
    \centering
    \small
    \caption{Summary of dataset characteristics}
        \begin{tabular}{l c c c c c c c c}
    \toprule
      & Squirrel & Chameleon & Actor & Cora & Citeseer & Pubmed & Products & Reddit \\
    \midrule
    \# nodes & $5201$ & $2277$ & $7600$ & $2708$ & $3327$ & $19717$ & $1569960$ & $232965$ \\
    \# edges & $36101$ & $217073$ & $30019$ & $10556$ & $9104$ & $88648$ & $264339468$ & $114615892$ \\
    \# features & $2089$ & $2325$ & $932$ & $1433$ & $3703$ & $500$ & $200$ & $602$ \\
    \# classes & $5$ & $5$ & $5$ & $7$ & $6$ & $3$ & $107$ & $41$ \\
    \bottomrule
\end{tabular}
    \label{tab: dataset}
\end{table}
\paragraph{Training configurations}
Across all experiments we use a hidden dimension of $d = 128$ with number of GNN layers adjusted to $L \in \{2, 5\}$. We use Adam optimizer with learning rate $0.001$ and train for $1000$ epochs on the $6$ relatively small datasets, and train for $5$ epochs on Amazon-Products and Reddit datasets, using a stochastic training strategy that samples up to $20$ neighbors per node in each layer. 
\paragraph{Results} We present a full list of results in table \ref{tab: homophily_complete}. A comprehensive tabulation of outcomes is furnished in Table \ref{tab: homophily_complete}. Consistent with the findings explicated in section \ref{sec: exp}, \attack demonstrates proficient reconstruction capabilities irrespective of graph properties such as homophily. Additionally, the reconstruction potency is resilient across a spectrum of GNN architectures. These results imply that prevalent GNN architectures are likely to engender models that are vulnerable to exploitation by \attack adversaries.
\begin{table}[]
    \centering
    \small
    \caption{Performances of \attack on eight datasets measured by \auc metric (\%). For each setup, the results (in the form of \result{\text{mean}}{\text{std}}) are obtained via $5$ random trials.}
    \resizebox{\textwidth}{!}{%
        \begin{tabular}{l c c c c c c c c}
    \toprule
      & Squirrel & Chameleon & Actor & Cora & Citeseer & Pubmed & Products & Reddit \\
    \midrule
    $\mathcal{H}_\text{label}$ & $0.22$ & $0.23$ & $0.22$ & $0.81$ & $0.74$ & $0.8$ & $0.09$ & $0.76$ \\
    $\mathcal{H}_\text{feature}$ & $0.01$ & $0.01$ & $0.16$ & $0.17$ & $0.19$ & $0.27$ & $0.01$ & $0.12$ \\
    \midrule
    $\widehat{A}^\textsf{FS}$ & $46.2$ & $55.2$ & $44.7$ & $80.3$ & $87.4$ & $87.6$ & $52.0$ & $95.9$ \\
    \midrule
    Victim model & \multicolumn{8}{c}{$\widehat{A}^\attack$, non-trained}\\
    \midrule
    LIN($L=2$)& \result{72.8}{0.0}& \result{76.1}{0.2} & \result{73.1}{0.1} & \result{93.1}{0.4} & \result{92.5}{0.9} & \result{93.9}{1.2} & \result{97.2}{0.3} & \result{96.4}{0.1} \\
    LIN($L=5$)& \result{72.6}{0.0} & \result{76.0}{0.3} & \result{73.0}{0.2} & \result{95.9}{0.6} & \result{93.8}{0.4} & \result{96.0}{0.6} & \result{99.2}{0.1} & \result{95.4}{0.1} \\
    GCN($L=2$)& \result{87.3}{0.3} & \result{87.9}{0.4} & \result{87.1}{0.6} & \result{99.8}{0.1} & \result{99.9}{0.0} & \result{99.7}{0.0} & \result{99.6}{0.0} & \result{97.3}{0.1} \\
    GCN($L=5$)& \result{82.1}{0.3} & \result{84.3}{0.9} & \result{84.1}{0.9} & \result{99.4}{0.2} & \result{99.9}{0.0} & \result{99.5}{0.1} & \result{99.2}{0.1} & \result{96.1}{0.2} \\
    GAT($L=2$)& \result{86.2}{0.5} & \result{87.9}{0.5} & \result{86.7}{0.6} & \result{99.8}{0.1} & \result{99.9}{0.0} & \result{99.7}{0.0} & \result{79.8}{7.7} & \result{88.5}{8.6} \\
    GAT($L=5$)& \result{82.0}{0.9} & \result{84.3}{0.4} & \result{82.9}{0.7} & \result{99.5}{0.1} & \result{99.9}{0.0} & \result{99.4}{0.0} & \result{91.3}{5.9} & \result{93.7}{1.7} \\
    GIN($L=2$)& \result{72.8}{0.0} & \result{76.1}{0.2} & \result{73.5}{0.1} & \result{92.1}{0.8} & \result{91.3}{0.9} & \result{96.6}{0.6} & \result{98.8}{0.1} & \result{96.4}{0.1} \\
    GIN($L=5$)& \result{72.2}{0.0} & \result{75.9}{0.4} & \result{74.4}{0.0} & \result{93.0}{0.3} & \result{89.9}{0.7} & \result{94.2}{0.5} & \result{97.6}{0.1} & \result{89.2}{0.6} \\
    SAGE($L=2$)& \result{72.6}{0.1} & \result{75.5}{0.1} & \result{74.1}{0.2} & \result{87.6}{0.3} & \result{87.0}{0.9} & \result{93.7}{0.9} & \result{99.2}{0.0} & \result{94.2}{0.5} \\
    SAGE($L=5$)& \result{71.4}{0.2} & \result{74.2}{0.1} & \result{72.4}{0.6} & \result{88.4}{1.2} & \result{85.6}{0.3} & \result{88.4}{1.0} & \result{97.6}{0.1} & \result{93.4}{0.6} \\
    \midrule
    Victim model & \multicolumn{8}{c}{$\widehat{A}^\attack$, trained}\\
    \midrule
    LIN($L=2$)& \result{74.6}{0.0} & \result{75.0}{0.3} & \result{59.9}{0.7} & \result{94.6}{0.1} & \result{93.7}{0.1} & \result{89.0}{0.1} & \result{91.6}{0.2} & \result{94.7}{0.1}  \\
    LIN($L=5$)& \result{74.1}{0.3} & \result{76.9}{0.2} & \result{61.6}{0.7} & \result{94.8}{0.3} & \result{93.3}{0.3} & \result{88.4}{0.9} & \result{98.6}{0.1} & \result{92.3}{0.2} \\
    GCN($L=2$)& \result{79.4}{0.4} & \result{82.3}{0.3} & \result{78.5}{0.8} & \result{97.8}{0.1} & \result{99.0}{0.0} & \result{89.2}{0.3} & \result{94.5}{0.1} & \result{95.1}{0.1} \\
    GCN($L=5$)& \result{77.4}{0.6} & \result{80.6}{0.8} & \result{78.4}{0.6} & \result{97.4}{0.3} & \result{98.7}{0.2} & \result{92.6}{0.4} & \result{98.4}{0.1} & \result{95.0}{0.1} \\
    GAT($L=2$)& \result{73.3}{0.9} & \result{74.3}{1.0} & \result{66.4}{0.6} & \result{95.2}{0.2} & \result{96.2}{0.1} & \result{89.3}{0.4} & \result{98.2}{0.6} & \result{95.7}{0.2} \\
    GAT($L=5$)& \result{79.2}{0.6} & \result{77.2}{1.3} & \result{75.0}{2.1} & \result{95.6}{0.2} & \result{95.9}{0.6} & \result{91.6}{0.8} & \result{84.5}{10.1} & \result{88.5}{4.6} \\
    GIN($L=2$& \result{73.3}{0.5} & \result{77.2}{0.5} & \result{71.5}{0.3} & \result{94.2}{0.3} & \result{95.3}{0.1} & \result{91.0}{0.1} & \result{99.0}{0.1} & \result{95.5}{0.4} \\
    GIN($L=5$)& \result{73.2}{0.3} & \result{77.0}{0.6} & \result{74.1}{0.1} & \result{99.8}{0.1} & \result{94.1}{0.3} & \result{92.7}{1.5} & \result{97.7}{0.1} & \result{92.4}{0.9} \\
    SAGE($L=2$)& \result{71.6}{0.1} & \result{71.7}{1.3} & \result{67.8}{0.7} & \result{91.1}{0.2} & \result{93.3}{0.1} & \result{87.0}{0.2} & \result{96.2}{0.4} & \result{95.4}{0.1} \\
    SAGE($L=5$)& \result{71.6}{0.1} & \result{74.5}{0.2} & \result{71.2}{0.6} & \result{93.6}{0.5} & \result{93.4}{0.8} & \result{85.5}{1.0} & \result{96.6}{0.8} & \result{87.5}{1.7} \\
    \bottomrule
\end{tabular}
    }
    \label{tab: homophily_complete}
\end{table}
\subsection{A complete report of privacy-utility assessments on Planetoid datasets}\label{sec: planetoid_full}
\subsubsection{Training configurations and attacking pipeline}\label{sec: config}
\paragraph{Network design} For node $v$ with label $y_v$, the prediction is defined as
\begin{align}
    \hat{y}_v = \arg\max_{c \in [C]} \textsf{dec}\left(\textsf{enc}\left(G, \mathbf{W}\right)[v]\right)[c],
\end{align}
where we use $[\cdot]$ to denote the operation of vector index. Here the encoder \textsf{enc} is designed via stacking $L$ noisy GNN layers (in the sense of \nag) with aggregation mechanism $\agg \in \{\textsf{MEAN}, \textsf{SUM}, \textsf{GCN}, \textsf{ATTENTION}, \textsf{MAX}\}$ as defined above. Note that the encoder maps input node features into node representations of dimension $d$, which might be larger than the number of classes $C$. The decoder \textsf{dec} is a linear map that maps node representations to predictions.
\paragraph{Attacking paradigm} The attacking procedure of \attack will be based on the node representations produced by the GNN encoder \textsf{enc} under a dimension of $d$. The attack is conducted over the node representations corresponding to the test subset, i.e., the victim subgraph is the subgraph induced by the test nodes.
\paragraph{Training configurations} Across all the experiments, we fix the GNN model to be of depth $2$ and use full-batch training for $1000$ steps(epochs) using the Adam optimizer with a learning rate of $0.001$. 
\begin{figure}
    \centering
    \begin{subfigure}[b]{0.7\linewidth}
        \includegraphics[width=\linewidth]{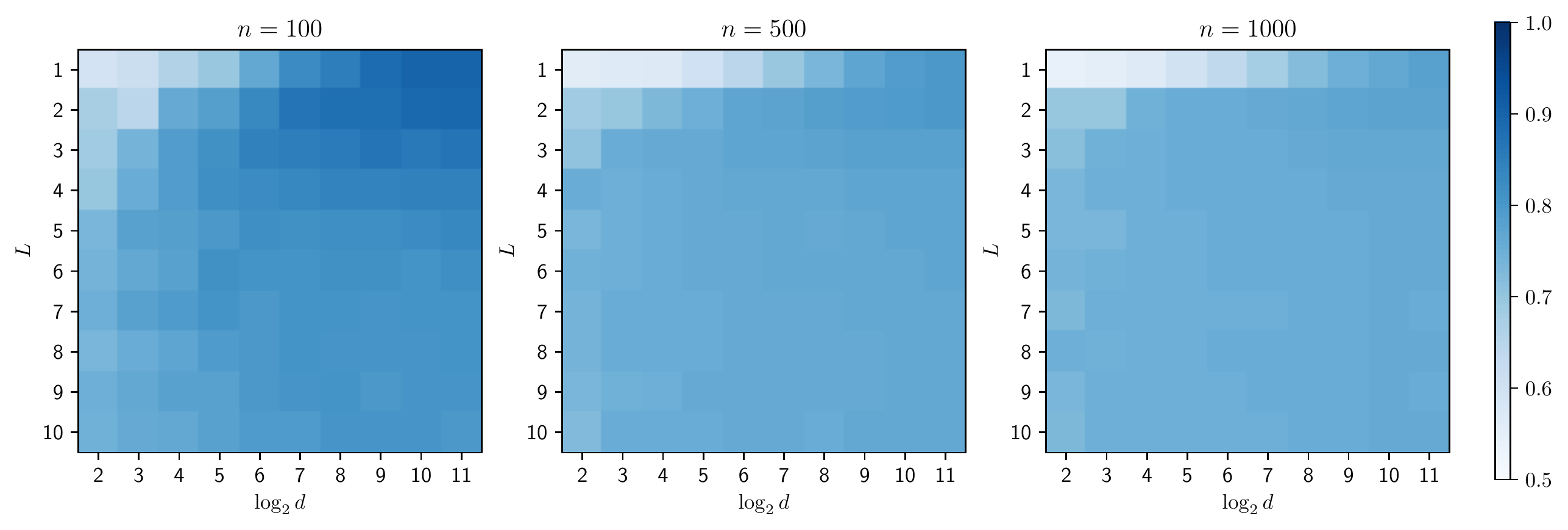}
        \caption{Measured in \auc metric, darker color implies higher attacking performance}
    \end{subfigure}
    \begin{subfigure}[b]{0.7\linewidth}
        \includegraphics[width=\linewidth]{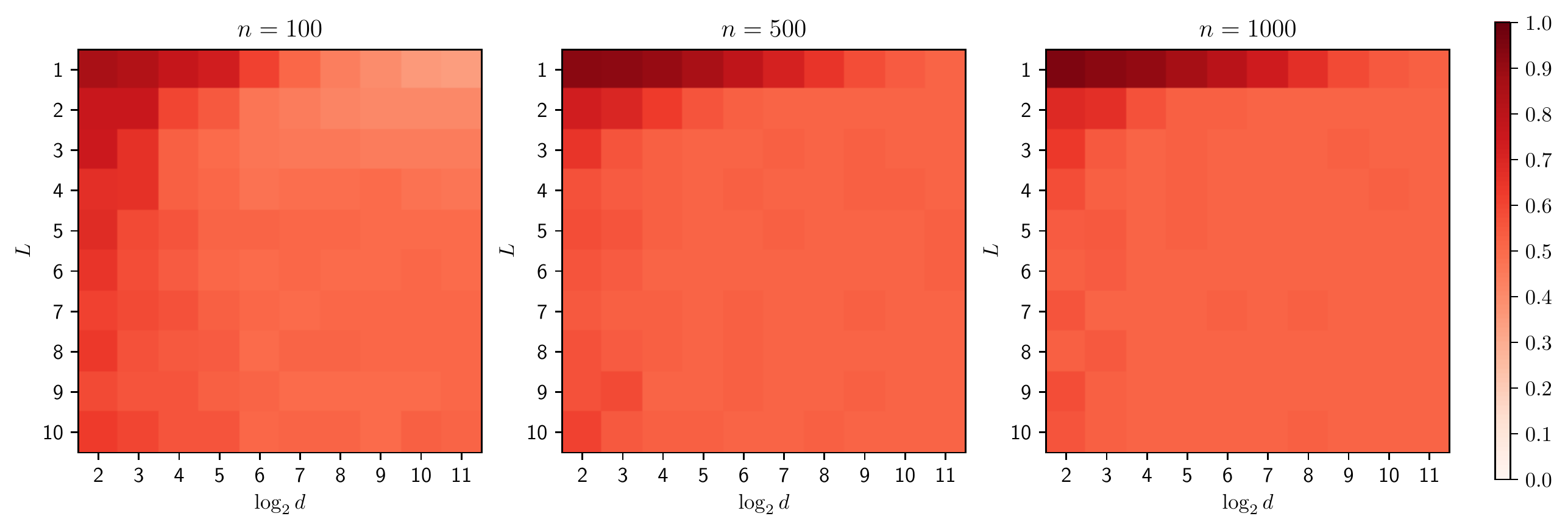}
        \caption{Measured in \err metric, lighter color implies higher attacking performance}
    \end{subfigure}
    \caption{Attacking efficacy of \attack over dense SBM graphs, with each grid's value indicating \attack's performance measured in either \auc(first row) or \err(second row) metric.}
    \label{fig: syn_sbm}
\end{figure}
\subsubsection{Unconstrained scheme}
We plot the full experimental results under the unconstrained scheme for the Cora, Citeseer and Pubmed datasets in figure \ref{fig: uc_cora_full}, figure \ref{fig: uc_citeseer_full} and figure \ref{fig: uc_pubmed_full}, respectively, where we evaluate the performance of \attack under both \err and \auc metrics. The result is consistent with those findings listed in section \ref{sec: exp_planetoid}.

\begin{figure*}
    \centering
    \begin{subfigure}[b]{0.9\textwidth}
        \includegraphics[width=\textwidth]{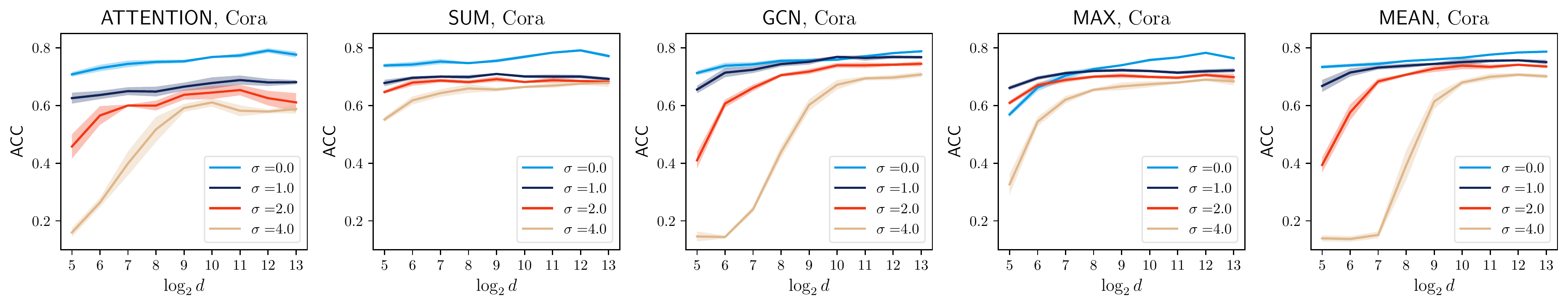}
        \caption{GNN model performance over Cora dataset under $5$ different aggregation types.}
    \end{subfigure}
    \begin{subfigure}[b]{0.9\textwidth}
        \includegraphics[width=\textwidth]{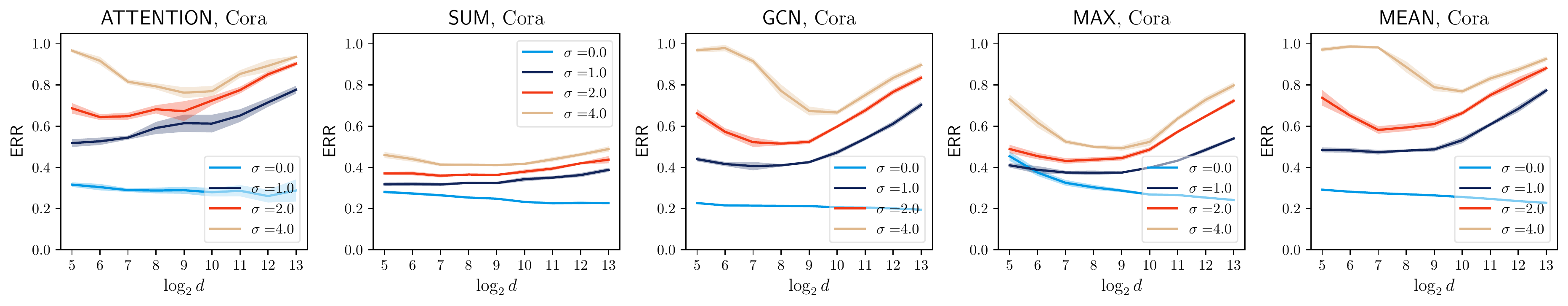}
        \caption{Attacking performance of \attack over Cora dataset (measured by \err) under $5$ different aggregation types.}
    \end{subfigure}
    \begin{subfigure}[b]{0.9\textwidth}
        \includegraphics[width=\textwidth]{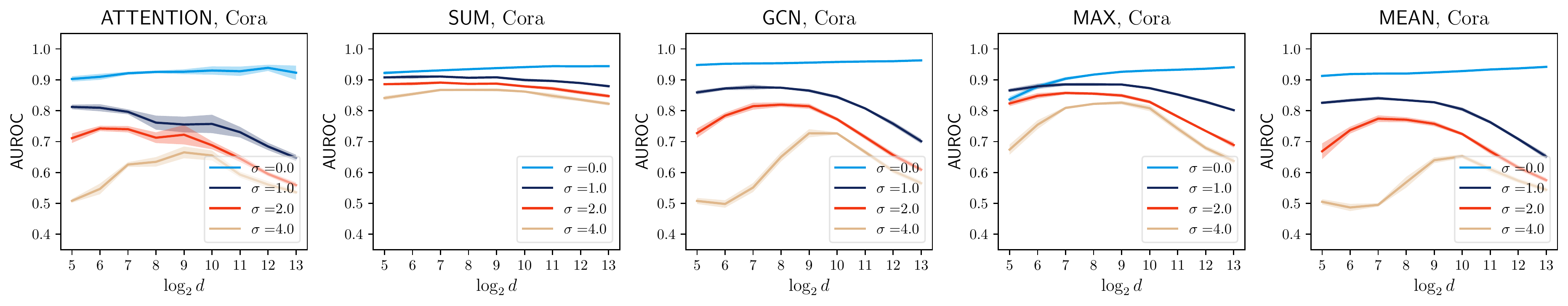}
        \caption{Attacking performance of \attack over Cora dataset (measured by \auc) under $5$ different aggregation types.}
    \end{subfigure}
    \caption{Privacy-utility trade-off on Cora dataset using the unconstrained training scheme. The horizontal axes measure feature dimension $d$ in $\log_2$ scale and the vertical axes stands for performance measures All plots are based on $5$ independent trials with shades indicating one standard deviation.}
    \label{fig: uc_cora_full}
\end{figure*}

\begin{figure*}
    \centering
    \begin{subfigure}[b]{0.9\textwidth}
        \includegraphics[width=\textwidth]{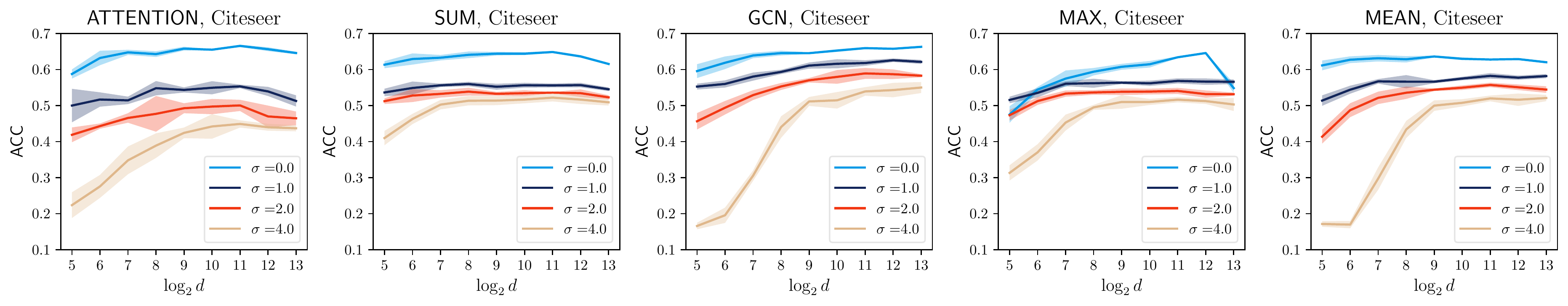}
        \caption{GNN model performance over Citeseer dataset under $5$ different aggregation types.}
    \end{subfigure}
    \begin{subfigure}[b]{0.9\textwidth}
        \includegraphics[width=\textwidth]{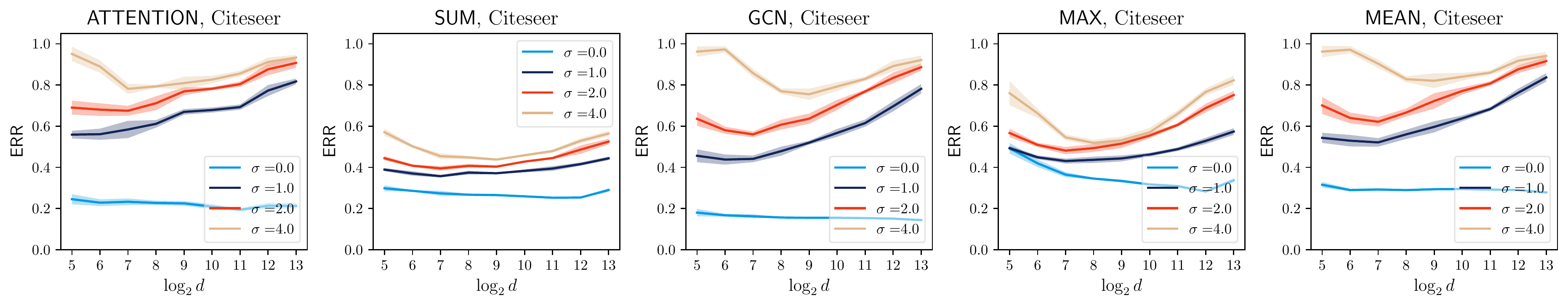}
        \caption{Attacking performance of \attack over Citeseer dataset (measured by \err) under $5$ different aggregation types.}
    \end{subfigure}
    \begin{subfigure}[b]{0.9\textwidth}
        \includegraphics[width=\textwidth]{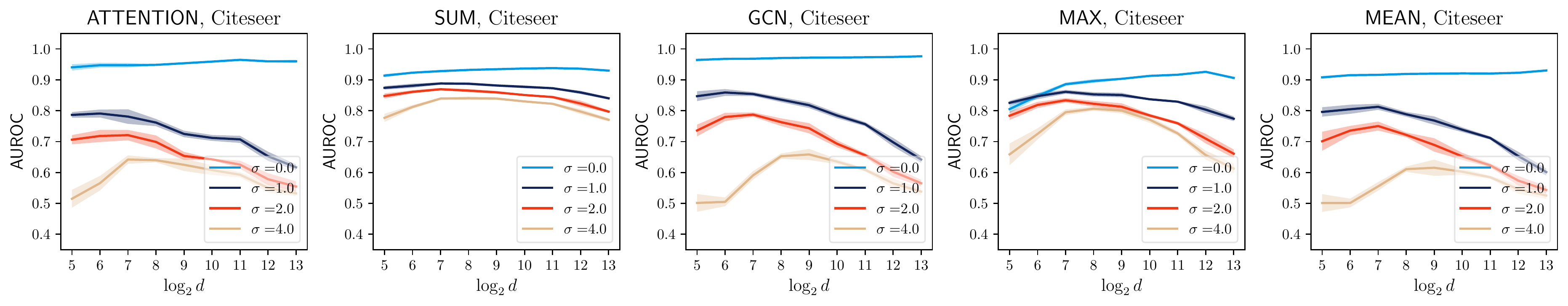}
        \caption{Attacking performance of \attack over Citeseer dataset (measured by \auc) under $5$ different aggregation types.}
    \end{subfigure}
    \caption{Privacy-utility trade-off on Citeseer dataset using the unconstrained training scheme. The horizontal axes measure feature dimension $d$ in $\log_2$ scale and the vertical axes stands for performance measures All plots are based on $5$ independent trials with shades indicating one standard deviation.}
    \label{fig: uc_citeseer_full}
\end{figure*}

\begin{figure*}
    \centering
    \begin{subfigure}[b]{0.9\textwidth}
        \includegraphics[width=\textwidth]{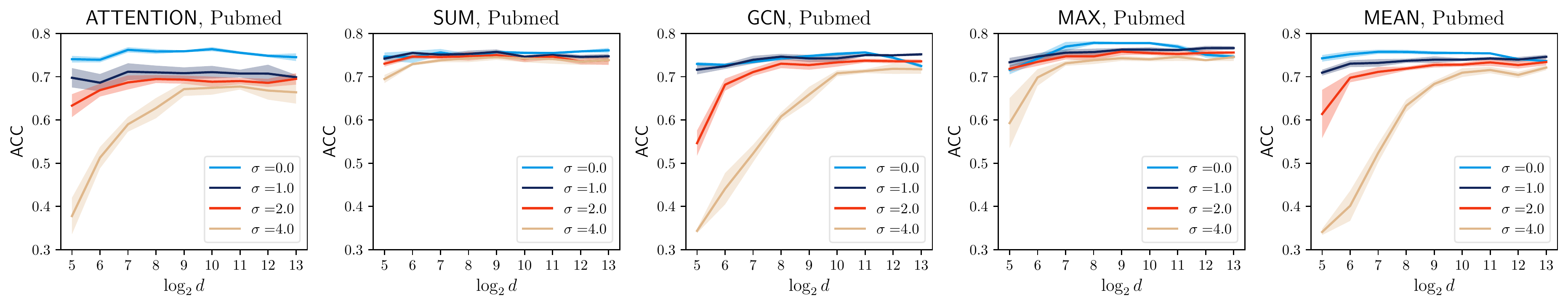}
        \caption{GNN model performance over Pubmed dataset under $5$ different aggregation types.}
    \end{subfigure}
    \begin{subfigure}[b]{0.9\textwidth}
        \includegraphics[width=\textwidth]{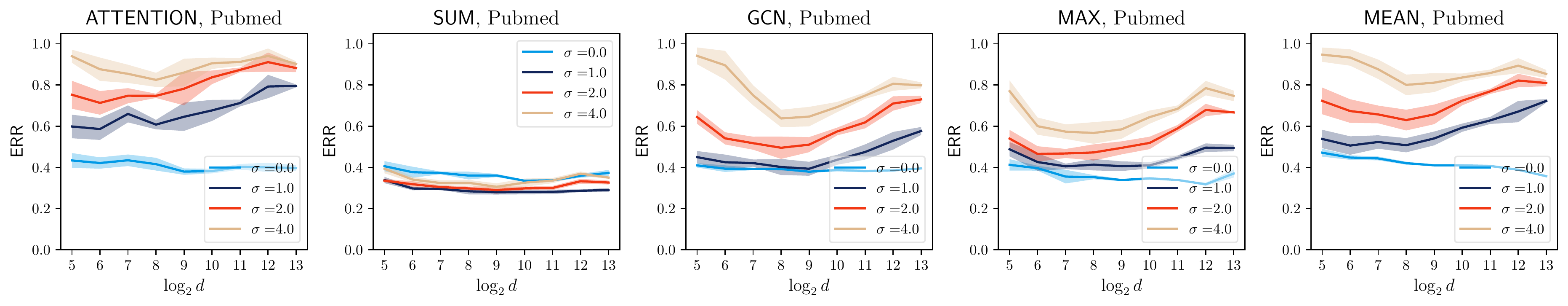}
        \caption{Attacking performance of \attack over Pubmed dataset (measured by \err) under $5$ different aggregation types.}
    \end{subfigure}
    \begin{subfigure}[b]{0.9\textwidth}
        \includegraphics[width=\textwidth]{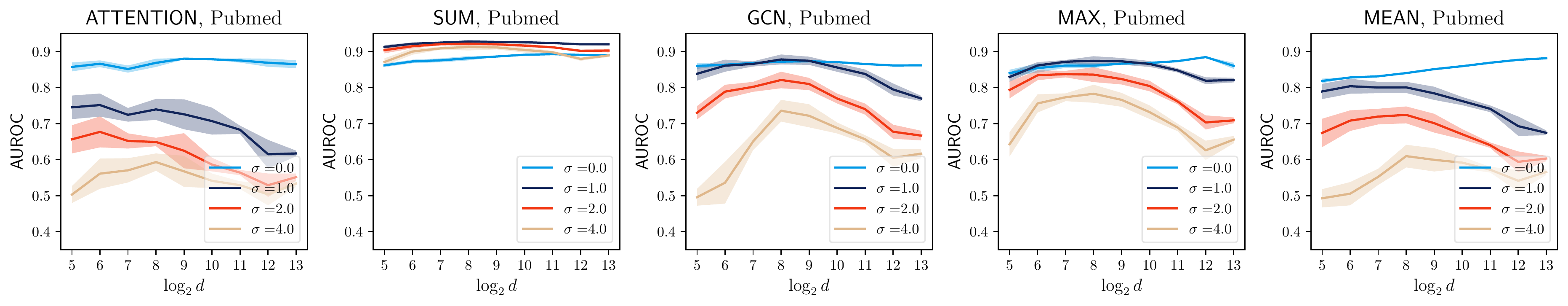}
        \caption{Attacking performance of \attack over Pubmed dataset (measured by \auc) under $5$ different aggregation types.}
    \end{subfigure}
    \caption{Privacy-utility trade-off on Pubmed dataset using the unconstrained training scheme. The horizontal axes measure feature dimension $d$ in $\log_2$ scale and the vertical axes stands for performance measures All plots are based on $5$ independent trials with shades indicating one standard deviation.}
    \label{fig: uc_pubmed_full}
\end{figure*}

\subsubsection{Constrained scheme}
We plot the full experimental results under the constrained scheme for the Cora, Citeseer and Pubmed datasets in figure \ref{fig: c_cora_full}, figure \ref{fig: c_citeseer_full} and figure \ref{fig: c_pubmed_full}, respectively, where we evaluate the performance of \attack under both \err and \auc metrics. The result is consistent with those findings listed in section \ref{sec: exp_planetoid}.

\begin{figure*}
    \centering
    \begin{subfigure}[b]{0.9\textwidth}
        \includegraphics[width=\textwidth]{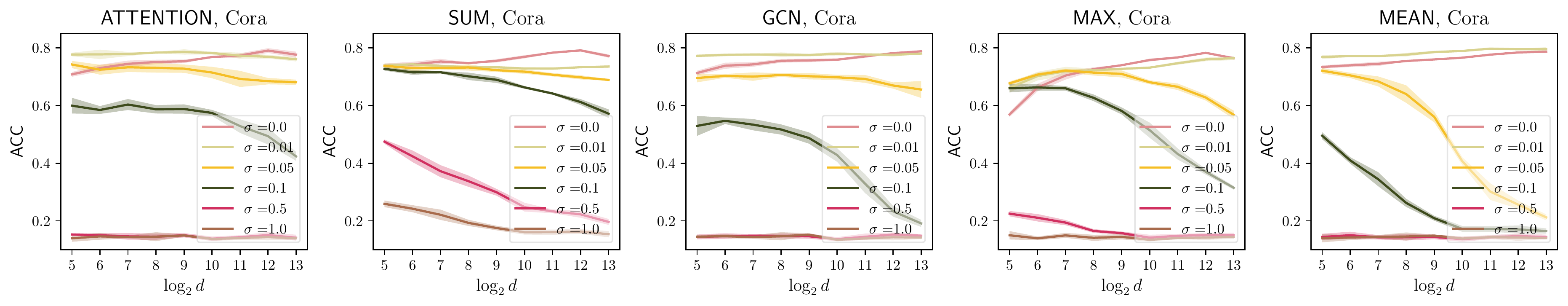}
        \caption{GNN model performance over Cora dataset under $5$ different aggregation types.}
    \end{subfigure}
    \begin{subfigure}[b]{0.9\textwidth}
        \includegraphics[width=\textwidth]{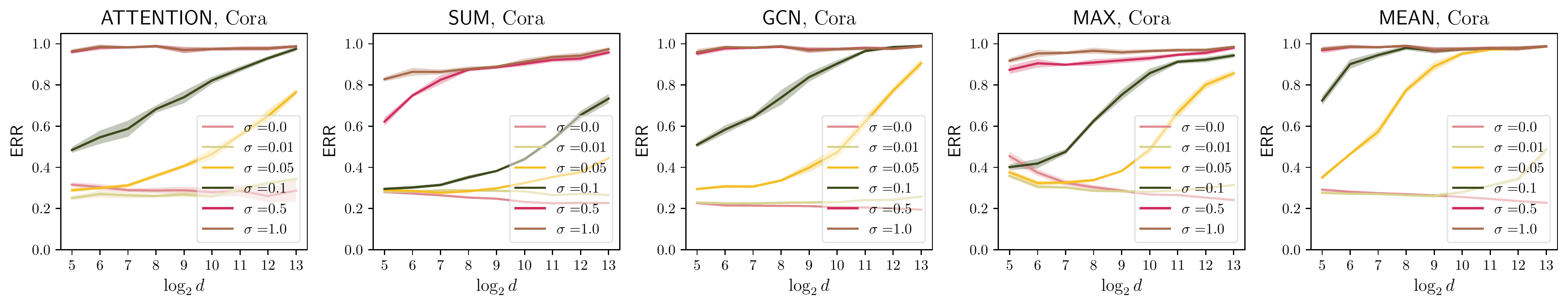}
        \caption{Attacking performance of \attack over Cora dataset (measured by \err) under $5$ different aggregation types.}
    \end{subfigure}
    \begin{subfigure}[b]{0.9\textwidth}
        \includegraphics[width=\textwidth]{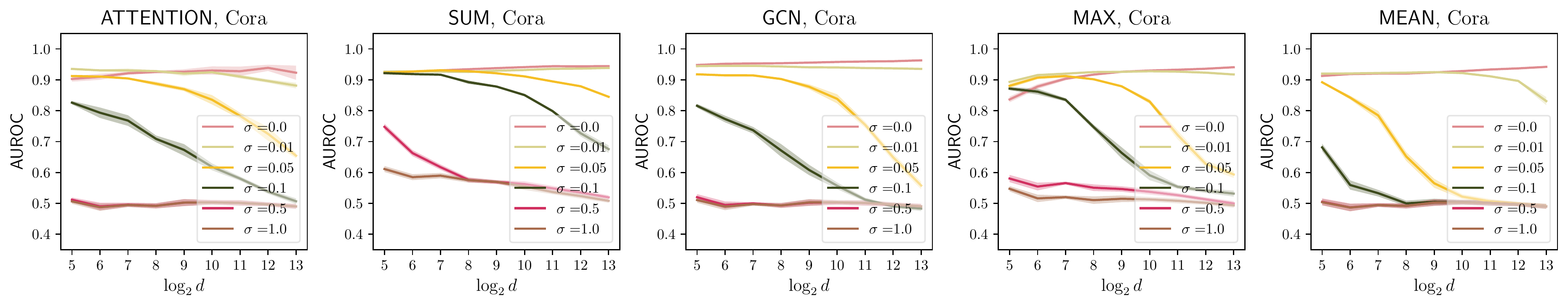}
        \caption{Attacking performance of \attack over Cora dataset (measured by \auc) under $5$ different aggregation types.}
    \end{subfigure}
    \caption{Privacy-utility trade-off on Cora dataset using the constrained training scheme. The horizontal axes measure feature dimension $d$ in $\log_2$ scale and the vertical axes stands for performance measures All plots are based on $5$ independent trials with shades indicating one standard deviation.}
    \label{fig: c_cora_full}
\end{figure*}

\begin{figure*}
    \centering
    \begin{subfigure}[b]{0.9\textwidth}
        \includegraphics[width=\textwidth]{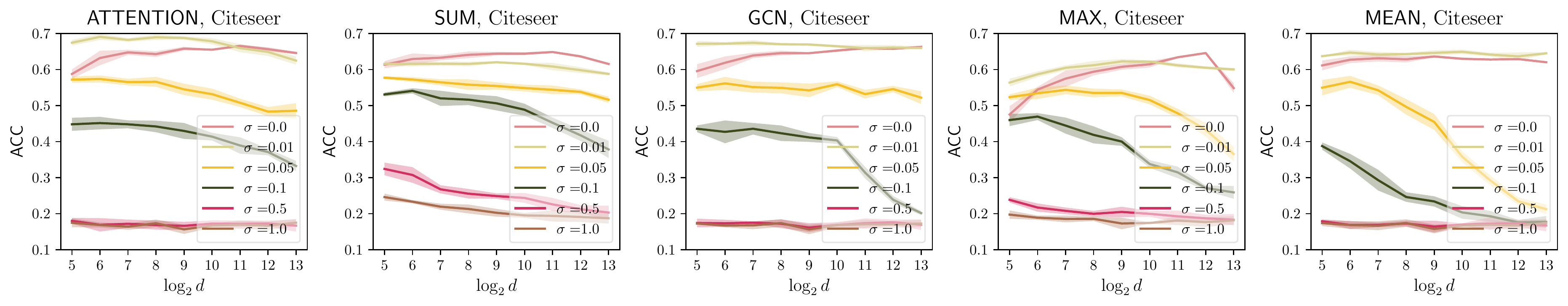}
        \caption{GNN model performance over Citeseer dataset under $5$ different aggregation types.}
    \end{subfigure}
    \begin{subfigure}[b]{0.9\textwidth}
        \includegraphics[width=\textwidth]{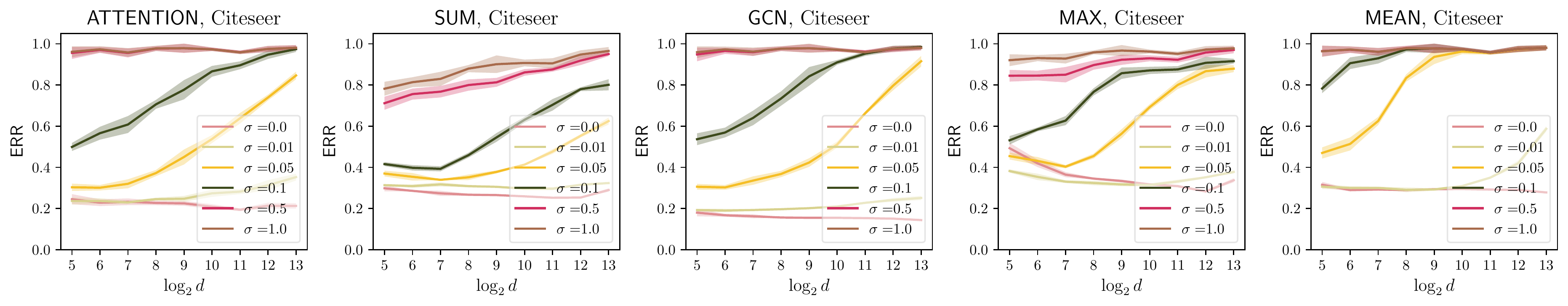}
        \caption{Attacking performance of \attack over Citeseer dataset (measured by \err) under $5$ different aggregation types.}
    \end{subfigure}
    \begin{subfigure}[b]{0.9\textwidth}
        \includegraphics[width=\textwidth]{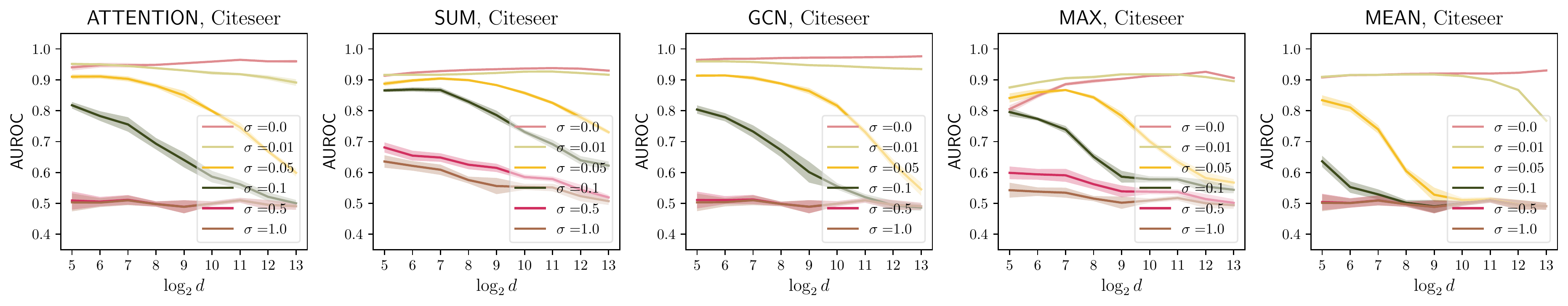}
        \caption{Attacking performance of \attack over Citeseer dataset (measured by \auc) under $5$ different aggregation types.}
    \end{subfigure}
    \caption{Privacy-utility trade-off on Citeseer dataset using the constrained training scheme. The horizontal axes measure feature dimension $d$ in $\log_2$ scale and the vertical axes stands for performance measures All plots are based on $5$ independent trials with shades indicating one standard deviation.}
    \label{fig: c_citeseer_full}
\end{figure*}

\begin{figure*}
    \centering
    \begin{subfigure}[b]{0.9\textwidth}
        \includegraphics[width=\textwidth]{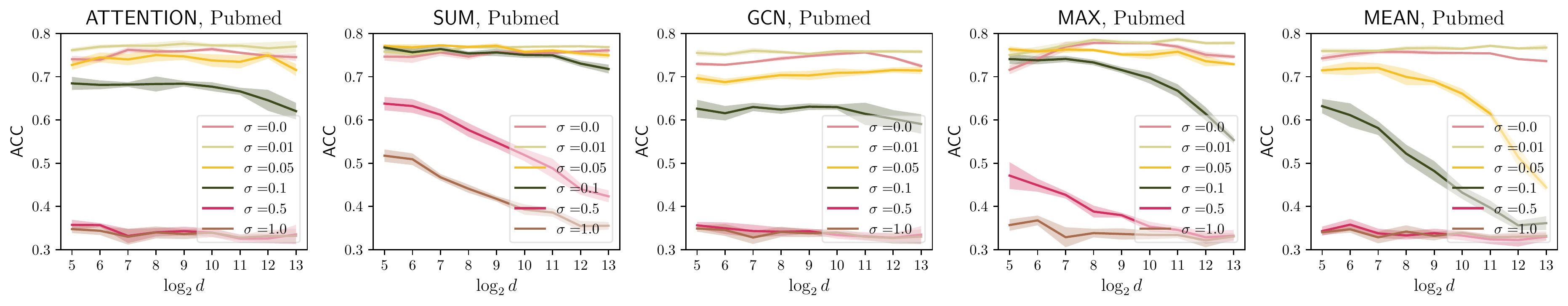}
        \caption{GNN model performance over Pubmed dataset under $5$ different aggregation types.}
    \end{subfigure}
    \begin{subfigure}[b]{0.9\textwidth}
        \includegraphics[width=\textwidth]{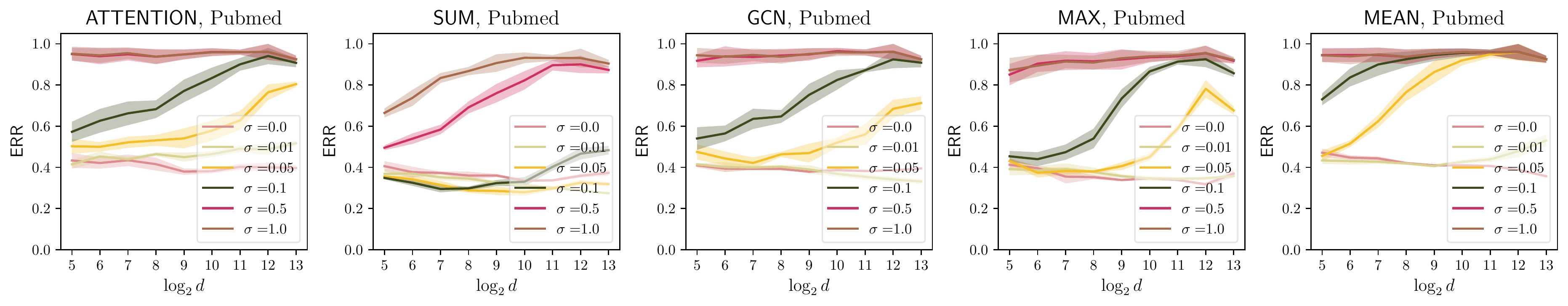}
        \caption{Attacking performance of \attack over Pubmed dataset (measured by \err) under $5$ different aggregation types.}
    \end{subfigure}
    \begin{subfigure}[b]{0.9\textwidth}
        \includegraphics[width=\textwidth]{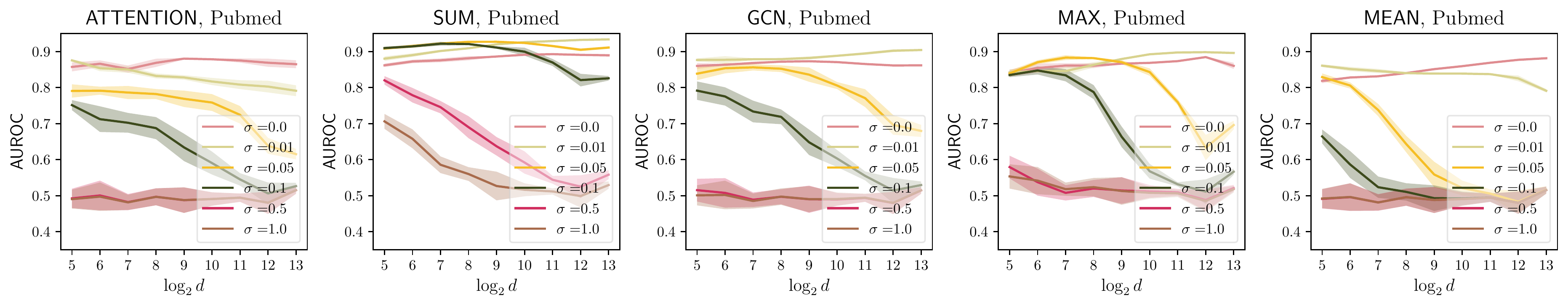}
        \caption{Attacking performance of \attack over Pubmed dataset (measured by \auc) under $5$ different aggregation types.}
    \end{subfigure}
    \caption{Privacy-utility trade-off on Pubmed dataset using the constrained training scheme. The horizontal axes measure feature dimension $d$ in $\log_2$ scale and the vertical axes stands for performance measures All plots are based on $5$ independent trials with shades indicating one standard deviation.}
    \label{fig: c_pubmed_full}
\end{figure*}
\textbf{Impact of different \agg mechanisms} According to figure \ref{fig: c_cora_full}, \ref{fig: uc_cora_full}, \ref{fig: c_citeseer_full}, \ref{fig: uc_citeseer_full}, \ref{fig: c_pubmed_full}, \ref{fig: uc_pubmed_full}, the previously discovered phenomenons are present for all the $5$ aggregation types. Nevertheless, the degree to which these phenomena exhibit varies with the specific type of aggregation employed. Notably, the behaviors of ATTENTION, MEAN, and GCN pooling display similarities attributable to their shared mechanism in (weighted) average aggregation. Conversely, the efficacy of the \attack against Noisy Aggregation (\nag) when SUM and MAX pooling are utilized appears less susceptible to changes in $d$. 

\subsection{Spectrum study of GNN solutions obtained under the unconstrained scheme}\label{sec: spectrum_study}
\textbf{A closer look at GNN solutions obtained via \nag in the unconstrained scheme} As \attack is just one form of attack mechanism under a weak adversary, protecting against \attack does not necessarily imply strict notions of privacy. Motivated by theorem \ref{thm: lb_noisy_gnn}, we conduct a spectrum study regarding the GNN solutions obtained via \nag in the unconstrained scheme. Specifically, we plot the operator norm of the weight matrices corresponding to the GNN layers across all scenarios and report them in the last column in figure \ref{fig: spectrum} in appendix \ref{sec: spectrum_study}. The results exhibit a rapidly growing trend of weights' operator norms regarding the increase of both feature dimension $d$ and noise level $\sigma$. For GNN models trained using noisy aggregation under large $d$s, the corresponding bounds according to \eqref{eqn: noisy_agg} become vacuous, i.e., practically zero. Additionally, these solutions may exhibit diminished robustness, as the corresponding Lipschitz constants are likely to be inadequately regulated \cite{yang2020closer}. To conclude, we have found successful empirical defenses against \attack without satisfying strict notions of privacy, suggesting that \textbf{\attack has limitations as a tool for auditing private GRL training procedures}.\par

\begin{figure*}
    \centering
    \begin{subfigure}[b]{0.9\textwidth}
        \includegraphics[width=\textwidth]{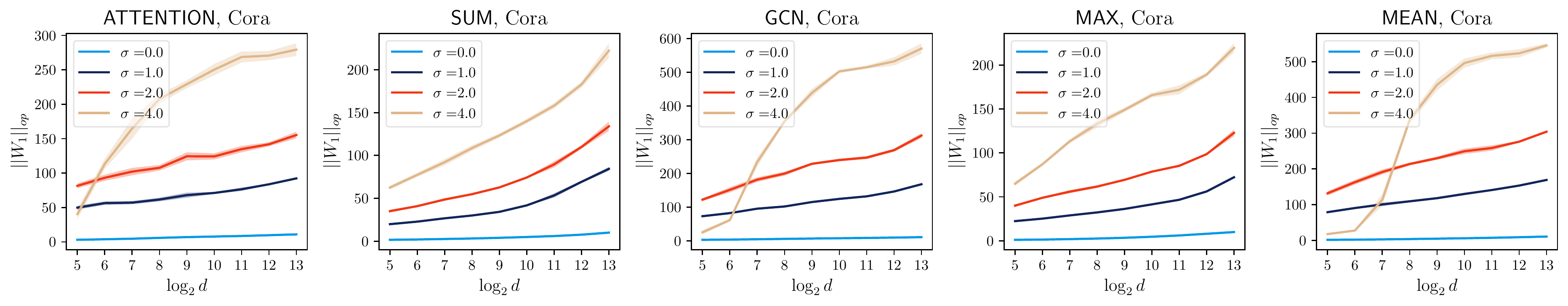}
    \end{subfigure}
    \begin{subfigure}[b]{0.9\textwidth}
        \includegraphics[width=\textwidth]{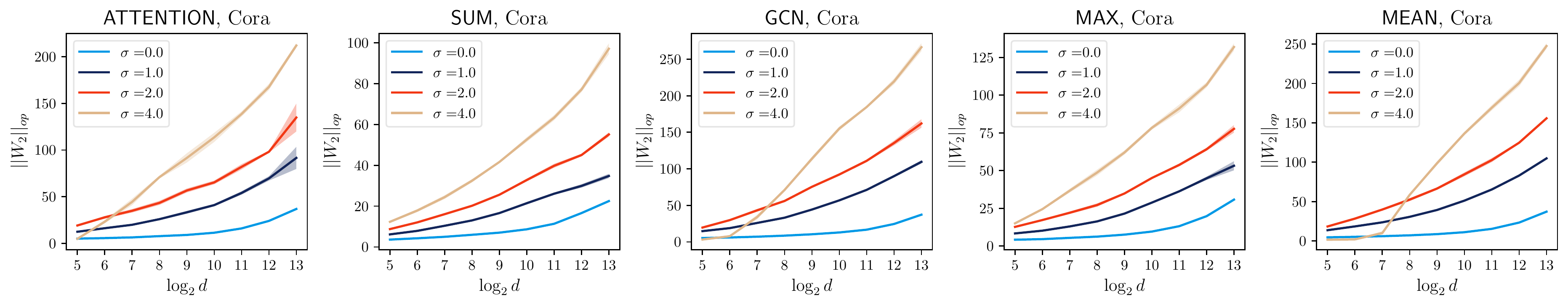}
        \caption{Spectrum study on the Cora dataset under $5$ different aggregation types.}
    \end{subfigure}
    \begin{subfigure}[b]{0.9\textwidth}
        \includegraphics[width=\textwidth]{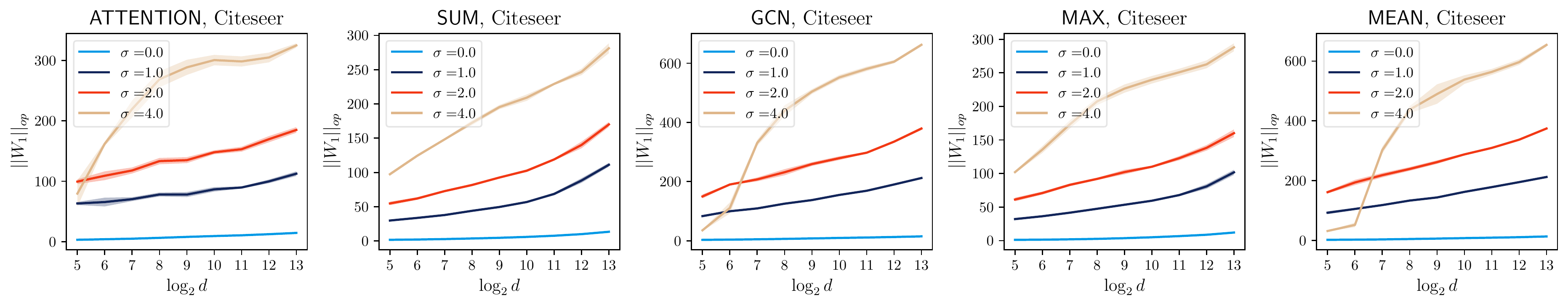}
    \end{subfigure}
    \begin{subfigure}[b]{0.9\textwidth}
        \includegraphics[width=\textwidth]{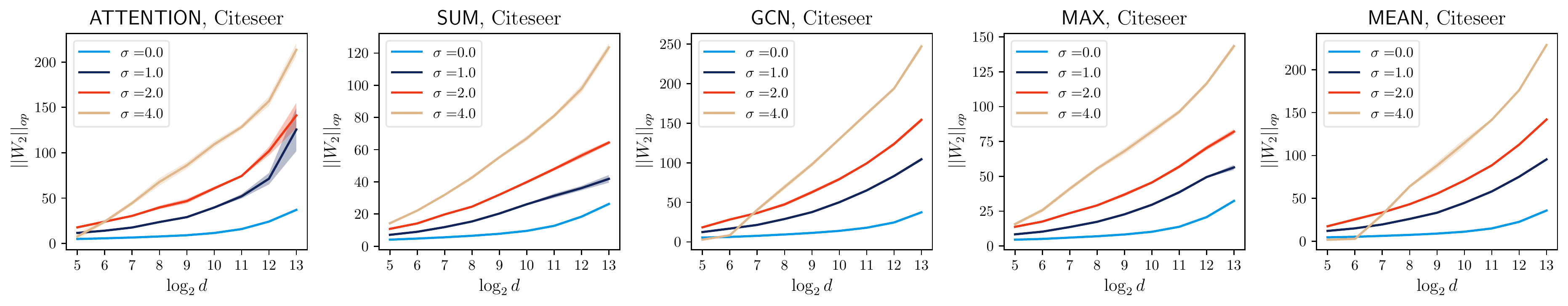}
        \caption{Spectrum study on the Citeseer dataset under $5$ different aggregation types.}
    \end{subfigure}
    \begin{subfigure}[b]{0.9\textwidth}
        \includegraphics[width=\textwidth]{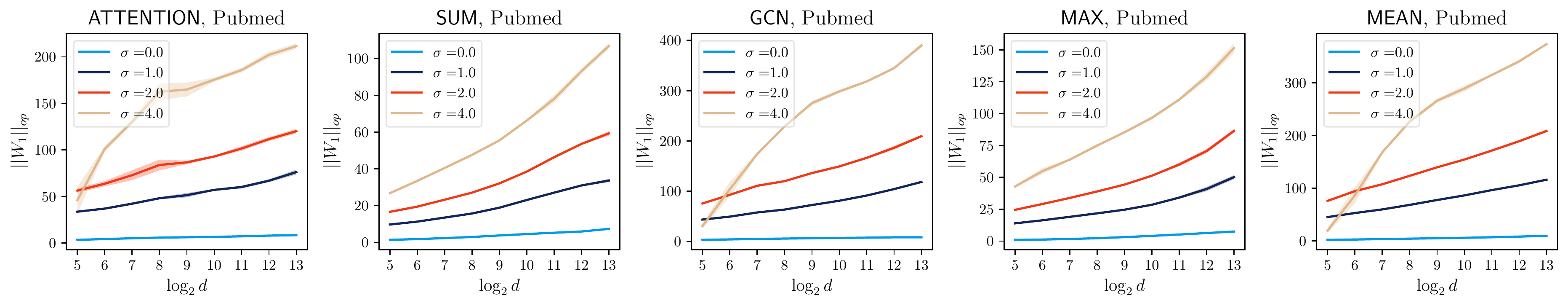}
    \end{subfigure}
    \begin{subfigure}[b]{0.9\textwidth}
        \includegraphics[width=\textwidth]{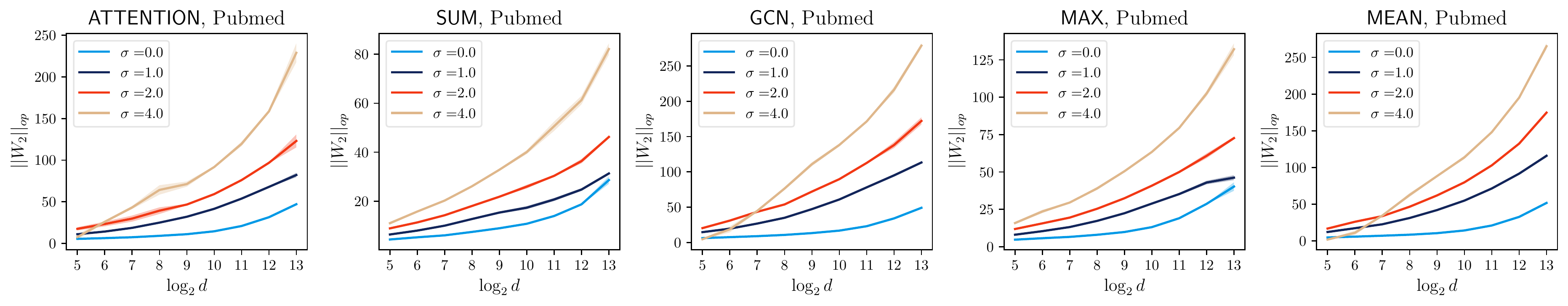}
        \caption{Spectrum study on the Pubmed dataset under $5$ different aggregation types.}
    \end{subfigure}
    \caption{Spectrum study on the Planetoid datasets under the unconstrained training scheme. The horizontal axes measure feature dimension $d$ in $\log_2$ scale and the vertical axes measures the operator norm of the projection weights of the GNN. All plots are based on $5$ independent trials with shades indicating one standard deviation.}
    \label{fig: spectrum}
\end{figure*}

\subsection{Privacy-utility trade-off comparisons: \nag vs \edgerr}\label{sec: edgerr}
In this section we provide preliminary comparisons between \nag and \edgerr regarding their privacy-utility trade-offs. In particular, for a graph with $n$ nodes, the \edgerr with budget $\varepsilon$ is implemented as a graph-level transform that perturbs the adjacency matrix $\edgerr(A) \in \mathbb{R}^{n \times n}$ with each of its entries defined as:
\begin{align}
    \edgerr(A)_{u, v} = 
    \begin{cases}
        A_{u, v} &\text{With probability $\frac{e^\varepsilon}{1 + e^\varepsilon}$} \\
        1 - A_{u, v} &\text{Otherwise}
    \end{cases}
\end{align}
It then follows from the theory of local differential privacy \cite{kasiviswanathan2011can} that for any $(u, v)$ the perturbed entry is a $\varepsilon$-LDP view of the underlying true adjacency. Combining the property of max-divergence along with the proof techique in section \ref{sec: proof_lb_noisy_gnn} we have the following performance lower bound of any adversary $\mathcal{A}$:
\begin{align}\label{eqn: lb_edgerr}
    \begin{aligned}
        \inf_{\mathcal{A}}\min_{u \in V, v \in V}\left[\PP{\widehat{A}_{uv} = 1 | A_{uv} = 0} + \PP{\widehat{A}_{uv} = 0 | A_{uv} = 1}\right] \ge 1 - \sqrt{1 - e^{-\varepsilon}}.
    \end{aligned}
\end{align}
While \edgerr has a very strong privacy protection guarantee, it is also criticized for low utility. We provide a comparison using a two-layer GCN as the backbone under the embedding dimension $d=128$ on the Planetoid datasets. The results, which can be viewed at figure \ref{fig: nag_edgerr}, suggest that when considering \attack as the basis for evaluating privacy, \nag achieves a Pareto-dominant privacy-utility trade-off curve relative to \edgerr.
\begin{figure}
    \centering
    \includegraphics[width=\linewidth]{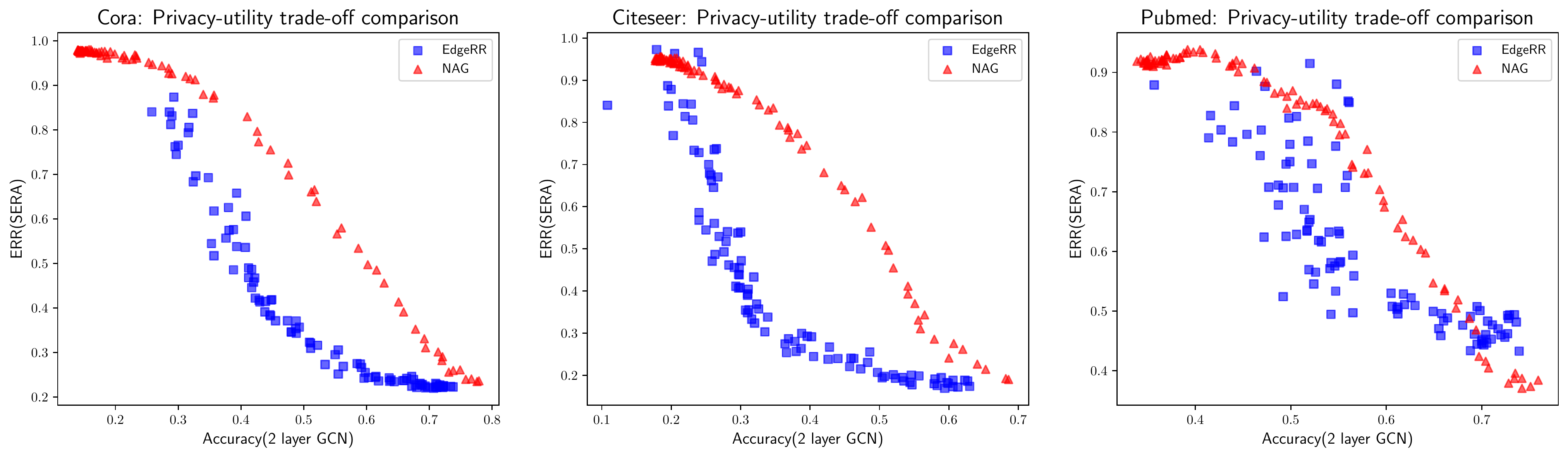}
    \caption{Comparison of privacy-utility trade-off between \nag and \edgerr}
    \label{fig: nag_edgerr}
\end{figure}

\paragraph{Software and hardware infrastructures.}
Our framework is built upon PyTorch~\cite{pytorch} and PyTorch Geometric~\cite{pyg}, which are open-source software released under BSD-style \footnote{\url{https://github.com/pytorch/pytorch/blob/master/LICENSE}} and MIT \footnote{\url{https://github.com/pyg-team/pytorch_geometric/blob/master/LICENSE}} license, respectively. All datasets used throughout experiments are publicly available. All experiments are done on a single NVIDIA A100 GPU (with 80GB memory).

\section{Discussions and Limitations}\label{sec: discussions}
\subsection{On the impact of depth $L$ for \nag}
As elucidated in theorem \ref{thm: lb_noisy_gnn}, the privacy assurances provided by \nag are inclined to diminish as the depth parameter $L$ increases, a phenomenon attributable to the compositional nature of (differential) privacy mechanisms \cite{dwork2014algorithmic,mironov2017renyi}. However, this same compositional principle enables \nag to disseminate all intermediate node representations $H^{(1)}, \ldots, H^{(L)}$ while preserving the identical level of privacy as would be the case if only $H^{(L)}$ were released. Consequently, this framework permits the design of superior privacy-preserving GNN architectures by leveraging a blend of ${H^{l}}_{l \in [L]}$ through inter-layer aggregation techniques, sometimes termed as residual connections in GRL \cite{xu2021optimization}. Probing the resilience of \attack against such intricate GNN configurations poses considerable challenges and falls outside the purview of this paper. Nonetheless, preliminary evaluations have been executed to discern the ramifications of the depth parameter $L$ on the privacy-utility compromises of \nag, absent residual connections, with privacy quantified via \attack. The inquiries, undertaken using the Planetoid datasets with a fixed noise magnitude of $\sigma = 0.05$ and hidden dimensionality $d=128$, are visually synthesized in Figure \ref{fig: depth}. Findings reveal that optimal defensive utility typically transpires at $L=1$, while greater GNN depths, notably those with $L > 5$, tend to undermine the model's utility. Moreover, the apex of attack performance generally materializes at relatively incipient layers, a confirmation of our theoretical insights set forth in theorem \ref{thm: era_er}. Finally, we postulate that incorporating residual connections might proffer an enhanced Pareto frontier for the model. Exploration of this hypothesis is deferred to subsequent research endeavors.
\begin{figure}
    \centering
    \includegraphics[width=\linewidth]{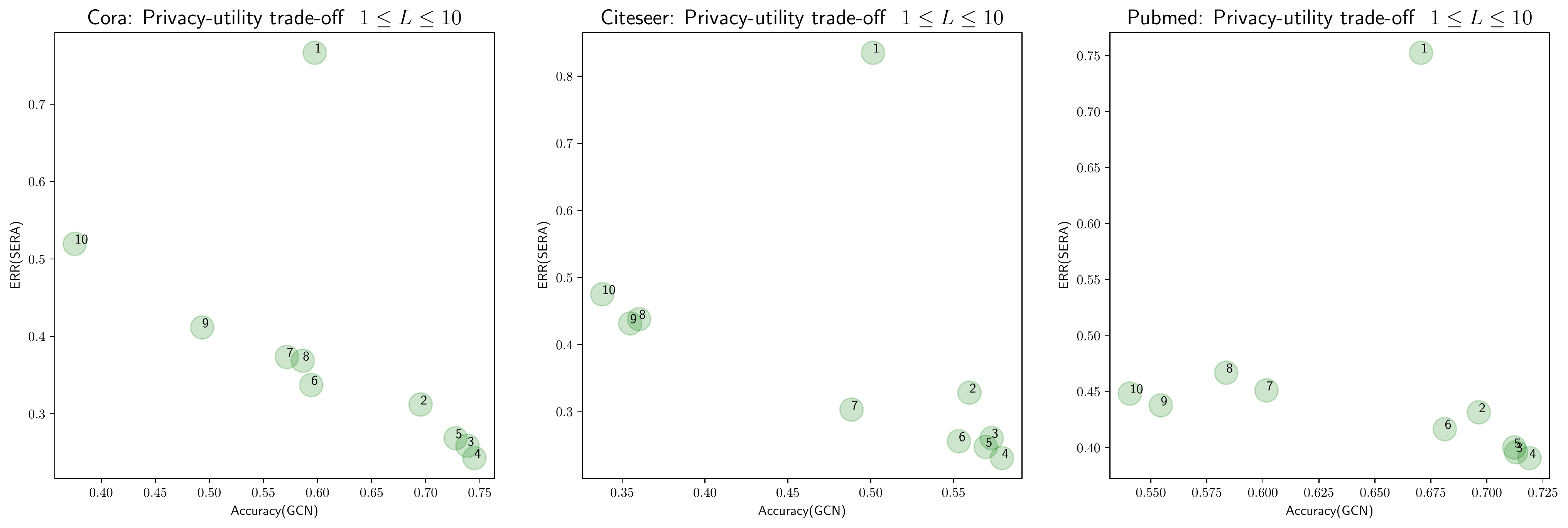}
    \caption{Privacy-utility trade-off on Planetoid regarding different model depths}
    \label{fig: depth}
\end{figure}
\subsection{Stronger adversary for dense graphs or deep encoders} 
We have shown the limitations of \attack over dense SBM graphs as well as deep GNN encoders. As our analysis applies to the specific \attack adversary, it is thus of interest to ask whether there exists stronger attacking paradigms that is provably effective against dense graphs or deep GNN encoders. On the flipside, it is also valuable to understand whether the phenomenon of oversmoothing may fundamentally affect the performance of \emph{any} black-box adversary.\par
\subsection{Extension to more complicated victim GNN models}
The theoreical analysis presented in section \ref{sec: er_analysis} and section \ref{sec: sbm_analysis} is dedicated to graph neural networks employing mean aggregation without nonlinear activation functions. Prospects exist for augmenting our theoretical framework to encompass alternative aggregation schemes, such as summation \cite{xu2018powerful} and attention-based aggregation \cite{velickovic2018graph}, conditional upon the satisfaction of specific prerequisites—namely, some lower bound of attention coefficients. A more challenging task lies in the extension of our analysis to graph neural networks (GNNs) that incorporate nonlinear activations between their layers. This inclusion significantly complicates the straightforward application of our non-asymptotic analysis in a cohesive end-to-end manner. Acknowledging the complexity of this endeavor, we left a thorough investigation for future research initiatives.
\subsection{Quantifying the advantage of adversaries with more knowledge}
Despite its effectiveness, the knowledge available to \attack is rather limited. Although previous study \cite{he2021stealing} has shown empirical evidences that equipping the adversary with more capability may results in stronger attacking algorithms, theoretical explication of these enhancements has yet to be articulated. In particular, it is of interest to quantify the amplification of adversarial capacity afforded by scenarios in which the adversary is granted white-box access to the model weights or node features.


\end{document}